\theoremstyle{plain}
\newtheorem{theorem}{Theorem}[section]
\newtheorem{proposition}[theorem]{Proposition}
\newtheorem{lemma}[theorem]{Lemma}
\newtheorem{corollary}[theorem]{Corollary}
\newtheorem{observation}[theorem]{Observation}
\theoremstyle{definition}
\newtheorem{definition}[theorem]{Definition}
\theoremstyle{remark}
\newcommand{\newterm}[1]{{\bf #1}}
\def\eqref#1{equation~(\ref{#1})}
\def\1{\bf{1}}
\newcommand{\train}{\mathcal{D}}
\def\BN{{\mathbb{N}}}
\def\BR{{\mathbb{R}}}
\newcommand{\rank}{\mathrm{rank}}
\DeclareMathOperator{\Tr}{Tr}
\def\Ddots{\mathinner{\mkern1mu\raise\p@
\vbox{\kern7\p@\hbox{.}}\mkern2mu
\raise4\p@\hbox{.}\mkern2mu\raise7\p@\hbox{.}\mkern1mu}}
\newcommand*{\rom}[1]{\expandafter\@slowromancap\romannumeral #1@}
\icmltitlerunning{Quantifying the Variability Collapse of Neural Networks}
\begin{document}

\twocolumn[
\icmltitle{Quantifying the Variability Collapse of Neural Networks}



\icmlsetsymbol{equal}{*}

\begin{icmlauthorlist}
\icmlauthor{Jing Xu}{equal,yyy}
\icmlauthor{Haoxiong Liu}{equal,yyy}
\end{icmlauthorlist}

\icmlaffiliation{yyy}{Institute for Interdisciplinary Information Sciences, Tsinghua University, Beijing, China}

\icmlcorrespondingauthor{Jing Xu}{xujing21@mails.tsinghua.edu.cn}
\icmlcorrespondingauthor{Haoxiong Liu}{liuhx20@mails.tsinghua.edu.cn}

\icmlkeywords{Machine Learning, ICML}

\vskip 0.3in
]



\printAffiliationsAndNotice{\icmlEqualContribution} 

\begin{abstract}
Recent studies empirically demonstrate the positive relationship between the transferability of neural networks and the within-class variation of the last layer features.
The recently discovered Neural Collapse~(NC) phenomenon provides a new perspective of understanding such last layer geometry of neural networks. In this paper, we propose a novel metric, named Variability Collapse Index~(VCI), to quantify the variability collapse phenomenon in the NC paradigm. The VCI metric is well-motivated and intrinsically related to the linear probing loss on the last layer features. Moreover, it enjoys desired theoretical and empirical properties, including invariance under invertible linear transformations and numerical stability, that distinguishes it from previous metrics. Our experiments verify that VCI is indicative of the variability collapse and the transferability of pretrained neural networks. 
\end{abstract}

\section{Introduction}
The pursuit of powerful models capable of extracting features from raw data and performing well on downstream tasks has been a constant endeavor in the machine learning community~\citep{bommasani2021opportunities}. In the past few years, researchers have developed various pretraining methods~\citep{chen2020simple,khosla2020supervised,grill2020bootstrap,he2022masked,baevski2022data2vec} that  enable models to learn from massive real world datasets. However, there is still a lack of systematic understanding regarding the transferability of deep neural networks, \emph{i.e.}, whether they can leverage the information in the pretraining datasets to achieve high performance in downstream tasks~\citep{abnar2021exploring,fang2023does}.

The performance of a pretrained model is closely related to the quality of the features it produces. The recently proposed concept of \textit{neural collapse~(NC)}~\citep{prevalence} provides a paradigmatic way to study the representation of neural networks. 
According to neural collapse, the last layer features of neural networks adhere to the following rule of 
\textit{variability collapse~(NC1)}: As the training proceeds, the representation of a data point converges to its corresponding class mean. Consequently,  the within-class variation of the features converges to zero.


The deep connection between transferability and neural collapse is rooted in the variability collapse criterion. Previous works~\citep{feng2021rethinking,kornblith2021better,sariyildiz2022improving} empirically find that although models with collapsed last-layer feature representations exhibit better pretraining accuracy, they tend to yield worse performance for downstream tasks. These works give an intuitive explanation that pushing the feature to their class means results in the loss of the diverse structures useful for downstream tasks.
Building upon this understanding, researchers design various algorithms~\citep{jing2021understanding,kini2021label,chen2022perfectly,dubois2022improving, sariyildiz2023no} that either explicitly or implicitly levarage the variability collapse criterion to retain the feature diversity in the pretraining phase, and thereby improve the transferability of the models. 

Straightforward as it is stated, the variability criterion is still not thoroughly understood. One fundamental question is how to mathematically quantify variability collapse. Previous works propose variability collapse metrics that are meaningful in specific settings~\citep{prevalence,zhu2021geometric,kornblith2021better,hui2022limitations}. However, a more principled characterization is required when we want to use variability collapse to analyze transferability. For example, in the linear probing setting, the loss function is invariant to invertible linear transformations on the last layer features. Consequently, it is reasonable to expect that the collapse metric of the features would also be invariant under such transformations, in order to properly reflect the models performance on downstream tasks. However, as we will point out in Section~\ref{sec: previous invariance}, no previous metric can achieve this high level of invariance, to the best of the authors' knowledge.

To obtain a well-motivated and well-defined variability collapse metric, we tackle the problem from a loss minimization perspective.
Our analysis reveals that the minimum mean squared error (MSE) loss in linear probing on a set of pretrained features can be expressed concisely, with a major component being $\Tr[\Sigma_T^{\dagger}\Sigma_B]$. Here, $\Sigma_B$ is the between-class feature covariance matrix and $\Sigma_T$ is the overall feature covariance matrix, as defined in Section~\ref{sec: setup}.
This term serves as an indicator of variability collapse, since it achieves its maximum $\rank(\Sigma_B)$ for fully collapse configurations where the feature of each data point coincides with the feature class mean. Furthermore, an important implication of its connection with MSE loss is that the invertible linear transformation invariance of the loss function directly transfers to the quantity $\Tr[\Sigma_T^{\dagger}\Sigma_B]$.

Motivated by the above investigations, we propose the following collapse metric, which we name \textbf{Variability Collapse Index~(VCI)}:
\begin{align*}
    \text{VCI}=1-\frac{\Tr[\Sigma_{T}^\dagger \Sigma_{B}]}{\rank(\Sigma_B)}.
\end{align*}
The VCI metric possesses the desirable property of invariance under invertible linear transformation, making it a proper indicator of last layer representation collapse.
Furthermore, VCI enjoys a higher level of numerical stability compared previous collapse metrics. 
We conduct extensive experiments to validate the effectiveness of the proposed VCI metric. The results show that VCI is a valid index for variability collapse across different architectures. We also show that VCI has a strong correlation with accuracy of various downstream tasks, and serves as a better index for transferability compared with existing metrics.




\section{Related Works}
\paragraph{Neural Collapse.}
The seminal paper~\citet{prevalence} proposes the concept of neural collapse, which consists of four paradigmatic criteria that govern the terminal phase of training of neural networks. 

One research direction regarding neural collapse focuses on rigorously proving neural collapse for specific learning models. A large portion of them adopt the layer peeled model~\citep{mixon2020neural,fang2021exploring}, which treats the last layer feature vector as unconstrained optimization variables. In this setting, both cross entropy loss~\citep{lu2020neural,zhu2021geometric,ji2021unconstrained} and mean square loss~\citep{tirer2022extended,zhou2022optimization} exhibit neural collapse configurations as the only global minimizers and have benign optimization landscapes. Additionally, other theoretical investigations explore neural collapse from the perspective of optimization dynamics~\citep{han2021neural}, max margin~\citep{zhou2022learning} and more generalized setting~\citep{nguyen2022memorization,tirer2022perturbation,zhou2022all,yaras2022neural}

Another research direction draws inspiration from  the neural collapse phenomenon to devise training algorithms. For instance, some studies empirically demonstrate that fixing the last-layer weights of neural networks to an Equiangular Tight Frame~(ETF) reduces memory usage~\citep{zhu2021geometric},  and improves the performance on imbalanced dataset~\citep{yang2022we,thrampoulidis2022imbalance, zhu2022balanced} and few shot learning tasks~\citep{yang2023neural}.

\paragraph{Representation Collapse and Transferability.}
Understanding and improving the transferability of neural networks to unknown tasks have attracted significant attention in recent years~\citep{tan2018survey,ruder2019transfer,zhuang2020comprehensive}.
Previous works~\citep{feng2021rethinking, sariyildiz2022improving, cui2022discriminability} empirically demonstrate that the diversity of last layer features is positively correlated with the transferability of neural networks, highlighting a tradeoff between pretraining accuracy and transfer accuracy.
To address this challenge, various methods~\citep{schilling2021quantifying,touvron2021grafit, xie2022hidden} have been proposed to quantify and mitigate representation collapse. For example,~\citet{kornblith2021better} show that using a low temperature for softmax activation in training reduces class separation and improves  transferability. 
Neural collapse provides a novel perspective for understanding this fundamental tradeoff~\citep{galanti2021role,lideep}. Notably,~\citet{hui2022limitations} reveal that neural collapse can be at odds with transferability by causing a loss of crucial information necessary for downstream tasks.

\section{Preliminaries}\label{sec: preliminiaries}
\subsection{Notations and Problem Setup}\label{sec: setup}
Throughout this paper, we adopt the following notation conventions. We use $\|v\|$ to denote Euclidean norm of vector $ v\in \BR^d$. We use $\|A \|_F$ to denote Frobenious norm and $A^\dagger$ to denote the pseudo-inverse of  matrix $A\in\BR^{d\times d}$, $d\in\BN_{+}$. We use $[n]$ as a short hand for $\{1,\cdots n\}$. We use $e_k\in\BR^K$ to denote the vector whose $k$-th entry is $1$ and the other entries are $0$.
We use $\mathbf{1}_d$ and $\mathbf{0}_d$ to denote the all-one and the zero vector in $\BR^{d}$, and use $\mathbf{I}_{d\times d}$ and $\mathbf{0}_{d\times d}$ to denote the identity matrix and the zero matrix in $\BR^{d\times d}$. We omit the subscripts of dimension when the context is clear. 

Consider a $K$-class classification problem
on a balanced dataset $\train=\{(x_{k,i}, e_k)\}_{ k \in[K],i\in[N]}$, where $N$ is the number of samples from each class. It is worth noting that the results presented in this paper can be readily extended to imbalanced datasets. Each sample consists of a data point $x_{k,i} \in \BR^d$ and an one-hot label $e_k\in \BR^K$. The classifier $W \phi(\cdot)+b$ is composed  of a feature extractor $\phi: \mathbb{R}^d\to \BR^p$ and a linear layer with $W\in \mathbb{R}^{K\times p}$ and $b\in \BR^K$. Let $h_{k, i}=\phi(x_{k,i})$ denote the feature vector of $x_{k, i}$, and $H=(h_{k,i})_{k\in[K],i\in[N]} \in \mathbb{R}^{p\times KN}$ denote the feature matrix. 
The feature extractor can be any pretrained neural network, till its penultimate layer. 

For a given feature matrix, we denote $\mu_k(H)=(1/N)\sum_{i\in[N]}h_{k,i}$ as the $k$-th class mean, 
and $\mu_G(H)=(1/KN)\sum_{k\in[K],i\in[N]} h_{k,i}$ as the global mean.
Throughout this paper, we will frequently refer to the following notions of feature covariance. Specifically, we denote the within-class covariance matrix by
\begin{equation}\label{eq: within}
\Sigma_W (H) = \frac{1}{KN}\sum_{k\in[K]}\sum_{i\in[N]} (h_{k, i} - \mu_k)(h_{k, i} - \mu_k)^\top,
\end{equation}
and the between-class covariance matrix by
\begin{equation}\label{eq: between}
\Sigma_B (H) = \frac{1}{K}\sum_{k\in[K]} (\mu_k - \mu_G)(\mu_k - \mu_G)^\top.
\end{equation}
The overall covariance matrix is defined as
\begin{equation}\label{eq: total}
\Sigma_T (H) = \frac{1}{KN}\sum_{k\in[K]}\sum_{i\in[N]} (h_{k, i} - \mu_G)(h_{k, i} - \mu_G)^\top.
\end{equation}
A bias-variance decomposition argument gives $\Sigma_T(H) = \Sigma_B(H) + \Sigma_W(H)$, whose proof is provided in Equation~\ref{eq: decomposition} for completeness. We omit the feature matrix $H$ in the above notations, when the context is clear.

We define $V_B=\text{span}\{\mu_1-\mu_G,\cdots \mu_k-\mu_G\}$ as the column space of $\Sigma_B$. In the same way, we can define $V_W, V_T$ as the column space of $\Sigma_W$ and $\Sigma_T$, respectively.

\subsection{Previous Collapse Metrics}\label{sec: previous metrics}
The first item in the Neural collapse paradigm is referred to as the variability collapse criterion~(NC1), which states that as the training proceeds, the within-class variation of the last layer features will diminish and the features will concentrate to the corresponding class means.
Use the quantities defined above, NC1 happens if $\Sigma_W\to \mathbf{0}$.
In the related literature, researchers propose various ways to non-asymptotically characterize NC1.




\paragraph{Fuzziness.} One of the commonly adopted metrics for NC1 is  the normalized within-class covariance $\Tr[\Sigma_B^\dagger \Sigma_W]$~\citep{prevalence, zhu2021geometric,tirer2022extended}. The term is commonly referred to as \textit{Separation Fuzziness} or simply \textit{Fuzziness} in the related literature~\citep{he2022law}, and is inherently related to the fisher discriminant ratio~\citep{zarka2020separation}. 

\paragraph{Squared Distance.} \citet{hui2022limitations} uses the quantity 
\begin{equation}\label{eq: mean var}
\frac{\sum_{k\in[K]}\sum_{i\in[N]}\|h_{k,i}-\mu_k\|^2}{N\sum_{k\in[K]}\|\mu_k-\mu_G\|^2}
\end{equation}
to characterize NC1. 
In this paper, we refer to it as \textit{Squared Distance} for convenience. Unlike fuzziness, square distance disregards the structure of the covariance matrix and uses the ratio of the square norm between the within-class variation and the between-class variation as a measure of collapse metric.

\paragraph{Cosine Similarity.} 
\citet{kornblith2021better} uses the ratio of the average within-class cosine similarity to the overall cosine similarity to measure the dispersion of feature vectors. Define $\operatorname{sim}(x,y)=x^\top y /\left(\|x\|\|y\|\right)$ as the cosine similarity between vectors. Denote the within-class cosine distance and overall cosine distance as
\begin{align*}
    &\bar{d}_{\text {within }}=\sum_{k=1}^K \sum_{i=1}^{N} \sum_{j=1}^{N} \frac{1-\operatorname{sim}\left({h}_{k, i}, {h}_{k, j}\right)}{K N^2}, \\
    &\bar{d}_{\text {total }}=\sum_{k=1}^K \sum_{l=1}^K \sum_{i=1}^{N} \sum_{j=1}^{N} \frac{1-\operatorname{sim}\left({h}_{k, i}, {h}_{l, j}\right)}{K^2 N^2}.
\end{align*}
They refer to the term $1-\bar{d}_{\text {within }} / \bar{d}_{\text {total }}$ as \textit{class separation}.
They also propose a simplified quantity $1 - \bar{d}_{\text {within }} $, and empirically show that both of them have a negative correlation with linear probing transfer performance across different settings.
In this paper, we adopt $\bar{d}_{\text {within }}$ as the baseline metric in~\citet{kornblith2021better}, and call it \textit{Cosine Similarity} for brevity.


\section{What is an  Appropriate Variability Collapse Metric?}\label{sec: appropriate}
In this section, we explore the essential properties that a valid variability collapse metric should and should not have.

\subsection{Do Last Layer Features Fully Collapse?}\label{sec: not collapse}
The original NC1 argument states that the within class covariance converges to zero, \emph{i.e.}, $\Sigma_W\to 0$, as the training proceeds. This implies that a collapse metric should achieve minimum or maximum at these \textit{fully collapsed} configurations with $\Sigma_W=0$. 

However, the following proposition shows that the opposite is not true, \emph{i.e.},  full collapse is not necessary for loss minimization.

\begin{proposition}\label{prop: not collapse}
Consider a loss function $\ell:\BR^k\times \BR^k\to \BR$. Define the training loss as 
\begin{align}\label{eq: loss}
    L(W,b,H)&=\frac{1}{KN}\sum_{k\in[K]}\sum_{i\in[N]} \ell(Wh_{k,i}+b,e_k) \nonumber \\ 
    &\quad+\frac{\lambda_W}{2}\|W\|_F^2+\frac{\lambda_b}{2}\|b\|^2,
\end{align}
where $\lambda_W, \lambda_b\ge 0$ are regularization parameters.
Suppose that $p> K$, $N\ge 2$.
Then for any constant $C>0$, 
there exists an $H^\prime$, such that $L(W,b,H^\prime)=L(W,b,H)$, $\Sigma_B(H^\prime)=\Sigma_B(H)$, but $\|\Sigma_W(H^\prime)\|_F>C$.

\end{proposition}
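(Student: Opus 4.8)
The key observation is that the loss $L(W,b,H)$ depends on $H$ only through the values $Wh_{k,i}+b$ — that is, through the projections of the features onto the $K$ directions spanned by the rows of $W$ — while $\Sigma_B(H)$ depends only on the class means $\mu_k(H)$. So the strategy is to perturb each feature $h_{k,i}$ by a vector that (a) lies in the kernel of $W$, so the loss is untouched, and (b) averages to zero within each class, so the class means $\mu_k$ — and hence $\Sigma_B$ — are untouched, yet (c) is large enough to blow up $\Sigma_W$. Since $p > K$ and $W$ has $K$ rows, $\ker W$ has dimension at least $p - K \ge 1$, so we can pick a fixed nonzero vector $u \in \ker W$ with $\|u\| = 1$.

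The construction: fix any $k_0 \in [K]$ and define $H'$ by setting $h'_{k_0,1} = h_{k_0,1} + t\,u$, $h'_{k_0,2} = h_{k_0,2} - t\,u$ (this uses $N \ge 2$), and $h'_{k,i} = h_{k,i}$ everywhere else, where $t > 0$ is a scale parameter to be chosen. Then $Wh'_{k,i} + b = Wh_{k,i} + b$ for all $k,i$ because $Wu = 0$, so the data-fit term is unchanged; the regularizers do not involve $H$ at all; hence $L(W,b,H') = L(W,b,H)$. Next, $\mu_{k_0}(H') = \mu_{k_0}(H) + \frac{t}{N}(u - u) = \mu_{k_0}(H)$ and all other class means are unchanged, so $\mu_G(H') = \mu_G(H)$ and therefore $\Sigma_B(H') = \Sigma_B(H)$ by definition \eqref{eq: between}. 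Finally, one computes from \eqref{eq: within} that $\Sigma_W(H') - \Sigma_W(H) = \frac{1}{KN}\big[(h'_{k_0,1}-\mu_{k_0})(h'_{k_0,1}-\mu_{k_0})^\top + (h'_{k_0,2}-\mu_{k_0})(h'_{k_0,2}-\mu_{k_0})^\top - (\text{same with } h)\big]$, which grows quadratically in $t$; concretely $\|\Sigma_W(H')\|_F \ge \frac{t^2\|u\|^2}{KN} \cdot c$ minus lower-order terms in $t$, so it exceeds any prescribed $C$ once $t$ is taken large enough. Choosing $t$ accordingly finishes the proof.

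There is essentially no hard step here — the whole argument is an explicit construction plus three routine verifications. The one place to be slightly careful is the last one: to get a clean lower bound on $\|\Sigma_W(H')\|_F$ I would expand $(h'_{k_0,i} - \mu_{k_0})(h'_{k_0,i}-\mu_{k_0})^\top = (h_{k_0,i}-\mu_{k_0})(h_{k_0,i}-\mu_{k_0})^\top \pm t\big[u(h_{k_0,i}-\mu_{k_0})^\top + (h_{k_0,i}-\mu_{k_0})u^\top\big] + t^2 uu^\top$, sum the $i=1,2$ terms (the cross terms partially cancel because of the opposite signs), and note the $t^2$ contribution is $\frac{2t^2}{KN}uu^\top$, whose Frobenius norm is $\frac{2t^2}{KN}$; then a triangle inequality against the $t$-independent and $t$-linear remainders gives $\|\Sigma_W(H')\|_F \to \infty$ as $t \to \infty$. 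Alternatively, and even more simply, one can perturb every sample in class $k_0$ by alternating $\pm t u$ (or, if $N$ is odd, leave one sample fixed) to make the cross terms vanish entirely, making the quadratic growth transparent. Either way the choice $t = t(C)$ is immediate.
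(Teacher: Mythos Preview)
Your proposal is correct and follows essentially the same approach as the paper: pick a nonzero $u \in \ker W$ (which exists since $p > K$) and perturb features along $u$ so that the loss and $\Sigma_B$ are unchanged while $\Sigma_W$ blows up. The paper's construction differs only cosmetically --- it adds $\lambda v$ to the first sample of \emph{every} class (so all class means and $\mu_G$ shift by the same amount $\tfrac{\lambda}{N}v$, preserving $\Sigma_B$), whereas you perturb two samples in a single class antisymmetrically (so no class mean moves at all); either way one gets a $t^2 uu^\top$ contribution to $\Sigma_W$ and the conclusion follows by taking $t$ large.
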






{

\begin{figure}[t]
\begin{center}
\centerline{\includegraphics[width=0.7\columnwidth]{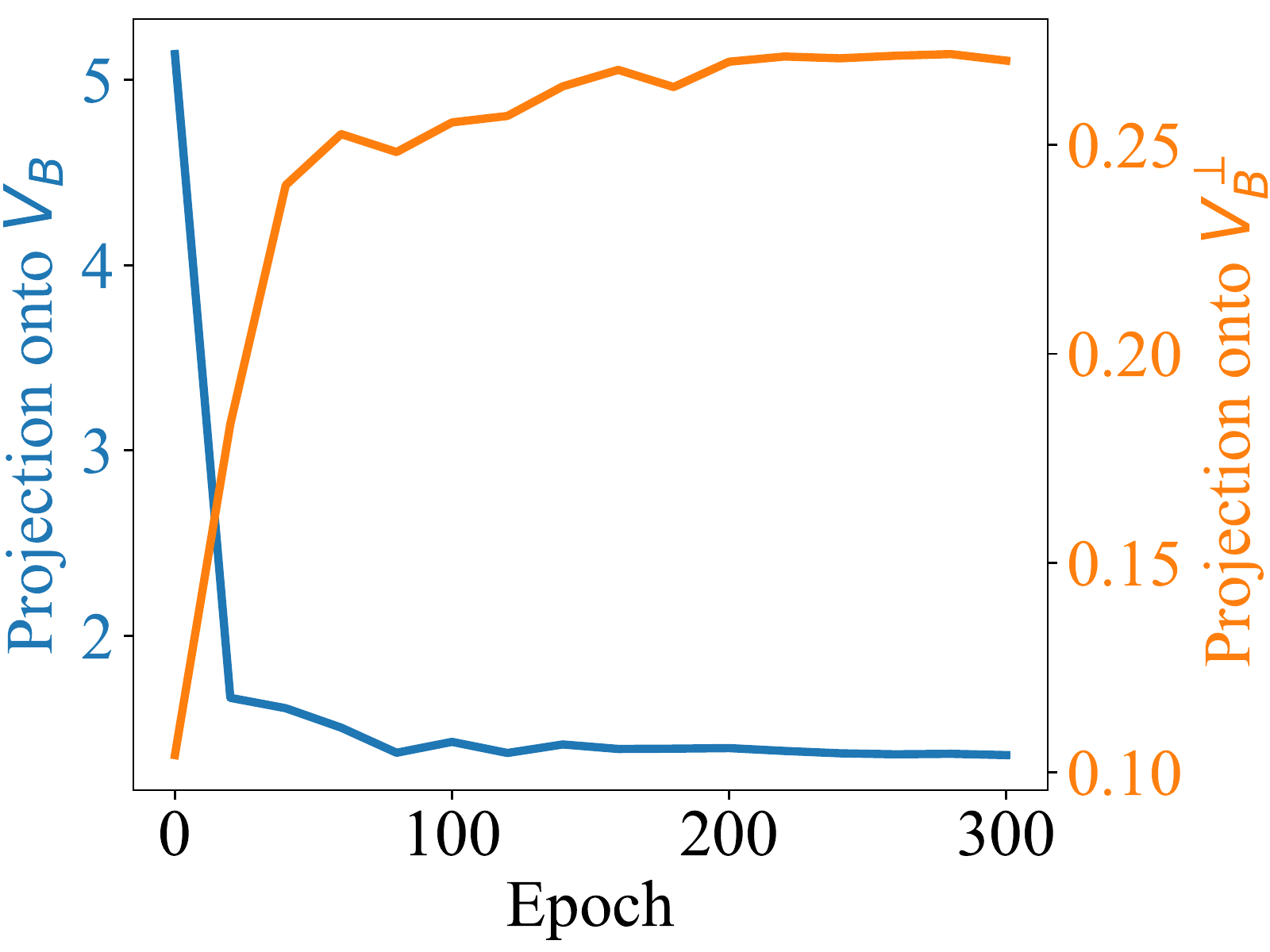}}
\captionsetup{belowskip=-15pt}
\caption{\textbf{Projections of Squared Distance onto $V_B$ and $V_B^\perp$ show opposite trends as the training proceeds.} The model is a ResNet50 trained on ImageNet-1K, using the setting specified in~\cref{sec: exp_setup}.}
\label{fig: rn50_basic_sqinout}
\end{center}
\end{figure}
}
The proof of the proposition is provided in Appendix~\ref{apx: not collapse}.
It is worth noting that  the above proposition does not contradict previous conclusions that ETF configurations are the only minimizers~\citep{zhu2021geometric,tirer2022extended}, since they require feature regularization $(\lambda_H/2)\|H\|_F^2$ in the loss function.

Our experiments show that Proposition~\ref{prop: not collapse} truly reflects the trend of neural network training. We train a {ResNet50 model on ImageNet-1K} dataset, and decompose $\Sigma_W$ into the  $V_B$ part and the $V_B^\perp$ part by computing $(1/KN)\sum_{k\in[K],i\in[N]}\|\text{Proj}_{V_B}(h_{k,i}-\mu_k)\|^2$ and $(1/KN)\sum_{k\in[K],i\in[N]}\|\text{Proj}_{V_B^\perp}(h_{k,i}-\mu_k)\|^2$. The results are shown in Figure~\ref{fig: rn50_basic_sqinout}. We observe that although the $V_B$ part steadily decreases, the $V_B^\perp$ part keeps increasing in the training process. Therefore, $\Sigma_W\to \mathbf{0}$ may not occur for real world neural network training.

Proposition~\ref{prop: not collapse} and Figure~\ref{fig: rn50_basic_sqinout} show that the last layer of neural networks exhibits high flexibility due to overparameterization. Consequently, it is unrealistic to expect standard empirical risk minimization training to achieve fully collapsed last layer representation, unless additional inductive bias are introduced. Therefore, requiring that the collapse metric reaches its minima \textit{only} at fully collapsed configurations, such as Squared Distance, will be too stringent for practical use.

\subsection{Invariance to Invertible Linear Transformations Matters}\label{sec: previous invariance}
Symmetry and invariance is a core concept in deep learning~\citep{gens2014deep,tan2018survey,chen2019invariance}.
The collapse metric discussed in Section~\ref{sec: previous metrics} enjoy certain level of invariance properties.

\begin{observation}
The Fuzziness metric $\Tr[\Sigma_B^\dagger \Sigma_W]$ is invariant to invertible linear transformation $U\in\BR^{p\times p}$ that can be decomposed into two separate transformations in $V_B$ and $V_B^\perp$.
The claim comes from the fact that
    \begin{align*}
    &\quad \Tr\left[\left(U\Sigma_B U^\top\right)^\dagger U\Sigma_WU^\top\right]\\&=\Tr\left[U^{-1,\top}\Sigma_B^\dagger U^{-1}U\Sigma_WU^\top\right]\\&=\Tr\left[\Sigma_B^\dagger \Sigma_W\right].
    \end{align*}

However, Fuzziness is not invariant to all invertible linear transformations in $\BR^p$. A simple counter example is $\Sigma_B=\left[\begin{array}{ll}
1 & 0 \\
0 & 0
\end{array}\right]$, $\Sigma_W=\left[\begin{array}{ll}
1 & 0 \\
0 & 1
\end{array}\right]$, and the linear transformation $U=\left[\begin{array}{ll}
1 & 1 \\
0 & 1
\end{array}\right]$. It can be calculated that $\Tr\left[\left(U\Sigma_B U^\top\right)^\dagger U\Sigma_W U^\top\right]=2\neq 1= \Tr\left[\Sigma_B^\dagger \Sigma_W \right]$.
\end{observation}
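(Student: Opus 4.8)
The plan is to treat the two assertions separately: the invariance of $\Tr[\Sigma_B^\dagger\Sigma_W]$ under invertible maps that respect the splitting $\BR^p=V_B\oplus V_B^\perp$, and the failure of invariance for a general invertible map. The second assertion is a finite computation, so I would simply carry it out on the proposed example: with $\Sigma_B=\operatorname{diag}(1,0)$, $\Sigma_W=\mathbf I$ and $U=\left(\begin{smallmatrix}1&1\\0&1\end{smallmatrix}\right)$, one gets $U\Sigma_BU^\top=\operatorname{diag}(1,0)$, hence $(U\Sigma_BU^\top)^\dagger=\operatorname{diag}(1,0)$, while $U\Sigma_WU^\top=UU^\top=\left(\begin{smallmatrix}2&1\\1&1\end{smallmatrix}\right)$; therefore the transformed Fuzziness equals $\Tr[\operatorname{diag}(1,0)\,UU^\top]=2$, whereas $\Tr[\Sigma_B^\dagger\Sigma_W]=\Tr[\operatorname{diag}(1,0)]=1$, so the two differ.

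For the invariance claim, the cyclic-trace manipulation displayed in the statement reduces the whole problem to the single matrix identity $(U\Sigma_BU^\top)^\dagger=U^{-\top}\Sigma_B^\dagger U^{-1}$, which I would prove as follows. Fix an orthonormal basis of $\BR^p$ adapted to $\BR^p=V_B\oplus V_B^\perp$. Since $V_B$ is exactly the column space of the symmetric positive semidefinite matrix $\Sigma_B$, its orthogonal complement is $\ker\Sigma_B$, so in this basis $\Sigma_B=\left(\begin{smallmatrix}M&0\\0&0\end{smallmatrix}\right)$ with $M$ symmetric positive definite, and $\Sigma_B^\dagger=\left(\begin{smallmatrix}M^{-1}&0\\0&0\end{smallmatrix}\right)$. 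The hypothesis on $U$ says $U=\left(\begin{smallmatrix}U_1&0\\0&U_2\end{smallmatrix}\right)$ with $U_1,U_2$ invertible, and because the basis is orthonormal, $U^\top$ and $U^{-1}$ are block-diagonal in the same way. Then $U\Sigma_BU^\top=\left(\begin{smallmatrix}U_1MU_1^\top&0\\0&0\end{smallmatrix}\right)$, whose pseudoinverse is $\left(\begin{smallmatrix}(U_1MU_1^\top)^{-1}&0\\0&0\end{smallmatrix}\right)=\left(\begin{smallmatrix}U_1^{-\top}M^{-1}U_1^{-1}&0\\0&0\end{smallmatrix}\right)$, which is precisely the block form of $U^{-\top}\Sigma_B^\dagger U^{-1}$. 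Substituting this into $\Tr[(U\Sigma_BU^\top)^\dagger\,U\Sigma_WU^\top]$, cancelling $U^{-1}U$, and then using cyclicity of the trace to cancel $U^\top$ against $U^{-\top}$ recovers $\Tr[\Sigma_B^\dagger\Sigma_W]$.

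I would also record the coordinate-free verification, since it pinpoints exactly why the decomposition hypothesis is needed: check that $Y:=U^{-\top}\Sigma_B^\dagger U^{-1}$ obeys the four Moore--Penrose conditions for $X:=U\Sigma_BU^\top$. Writing $P_{V_B}=\Sigma_B\Sigma_B^\dagger=\Sigma_B^\dagger\Sigma_B$ for the orthogonal projector onto $V_B$, one finds that $XYX=X$ and $YXY=Y$ hold for \emph{every} invertible $U$, whereas $XY=UP_{V_B}U^{-1}$ and $YX=U^{-\top}P_{V_B}U^\top$ are symmetric if and only if $U^\top U$ commutes with $P_{V_B}$ — and the decomposition hypothesis, i.e.\ $U$ leaving both $V_B$ and $V_B^\perp$ invariant, makes $U^\top U$ block-diagonal and hence forces this commutation. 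The same mechanism explains the counterexample, where $U^\top U=\left(\begin{smallmatrix}1&1\\1&2\end{smallmatrix}\right)$ fails to commute with $\operatorname{diag}(1,0)$. The only non-routine step is this pseudoinverse identity; once one sees that the obstruction to $(UAU^\top)^\dagger=U^{-\top}A^\dagger U^{-1}$ for singular symmetric $A$ resides entirely in the two symmetry conditions on $XY$ and $YX$, the role of the hypothesis — and hence the whole observation — is immediate.
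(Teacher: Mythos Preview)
Your proposal is correct and follows the same route as the paper: the paper's own justification is nothing more than the two-line trace identity displayed in the statement together with the stated counterexample, and you reproduce both faithfully. Where you go beyond the paper is in actually proving the pseudoinverse identity $(U\Sigma_BU^\top)^\dagger=U^{-\top}\Sigma_B^\dagger U^{-1}$ under the decomposition hypothesis---the paper simply asserts this step---and your block-diagonal argument together with the Moore--Penrose verification is a clean way to do it; the latter in particular nicely isolates the role of the hypothesis by showing that the two symmetry conditions $XY=(XY)^\top$ and $YX=(YX)^\top$ are the only obstruction, which also explains the counterexample.
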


\begin{observation}
    The Squared Distance metric in Equation~\ref{eq: mean var} is invariant to isotropic scaling and orthogonal transformation on the feature vectors, \emph{i.e.}, since such transformations preserve the pairwise distance between the feature vectors. However, it is not invariant to invertible linear transformations in $\BR^p$.
\end{observation}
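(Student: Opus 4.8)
The plan is to verify the claimed invariances of the Squared Distance metric
\[
\mathrm{SD}(H)=\frac{\sum_{k\in[K]}\sum_{i\in[N]}\|h_{k,i}-\mu_k\|^2}{N\sum_{k\in[K]}\|\mu_k-\mu_G\|^2}
\]
directly from its definition, and then exhibit a concrete counterexample to rule out full linear invariance. First I would observe that both the numerator and the denominator are built entirely out of squared Euclidean distances: the numerator is $\sum_{k,i}\|h_{k,i}-\mu_k\|^2$, and the denominator can be rewritten, using $\mu_G=(1/K)\sum_k\mu_k$, as $\tfrac{N}{K}\sum_{k<l}\|\mu_k-\mu_l\|^2$ (the standard identity $\sum_k\|\mu_k-\mu_G\|^2=\tfrac1K\sum_{k<l}\|\mu_k-\mu_l\|^2$). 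Hence it suffices to note that an isometry $x\mapsto Qx+v$ with $Q$ orthogonal preserves every pairwise distance $\|x-y\|$, and that class means transform equivariantly: if $h'_{k,i}=Qh_{k,i}+v$ then $\mu'_k=Q\mu_k+v$ and $\mu'_G=Q\mu_G+v$, so each difference $h_{k,i}-\mu_k$ and $\mu_k-\mu_G$ is simply multiplied by $Q$, leaving every term in numerator and denominator unchanged. For isotropic scaling $x\mapsto cx$, $c\neq 0$, both numerator and denominator scale by $c^2$, so the ratio is unchanged; combining, $\mathrm{SD}$ is invariant under $x\mapsto cQx+v$.

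For the negative part, I would give a two-dimensional example with $N=2$, $K=2$ where a shear changes the value. Take class means $\mu_1=(1,0)^\top$, $\mu_2=(-1,0)^\top$ (so $\mu_G=0$), and within each class place the two points symmetrically about the mean along the $y$-axis, e.g.\ $h_{1,1}=(1,1)^\top$, $h_{1,2}=(1,-1)^\top$, $h_{2,1}=(-1,1)^\top$, $h_{2,2}=(-1,-1)^\top$. Then the numerator equals $4$ and the denominator equals $N\cdot(\|\mu_1\|^2+\|\mu_2\|^2)=2\cdot 2=4$, so $\mathrm{SD}=1$. Now apply $U=\begin{bmatrix}1&1\\0&1\end{bmatrix}$: the within-class deviations $(0,\pm1)^\top$ become $(\pm1,\pm1)^\top$ with squared norm $2$ each, so the numerator becomes $8$, while the class means become $(\pm1,0)^\top$ (unchanged, since the second coordinate is $0$), so the denominator stays $4$ and $\mathrm{SD}=2\neq1$. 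This shows non-invariance under a general invertible $U$.

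The steps are all routine; there is no real obstacle, only bookkeeping. The one place to be slightly careful is the algebraic rewriting of the denominator $\sum_k\|\mu_k-\mu_G\|^2$ in terms of pairwise differences of class means — this is needed to make the "preservation of pairwise distances" argument literally apply to the denominator as well as the numerator. Alternatively, and perhaps more cleanly, one can avoid that identity entirely by arguing directly: under $h\mapsto cQh+v$ we have $\mu_k-\mu_G\mapsto cQ(\mu_k-\mu_G)$, so $\|\mu_k-\mu_G\|^2\mapsto c^2\|\mu_k-\mu_G\|^2$, and likewise $\|h_{k,i}-\mu_k\|^2\mapsto c^2\|h_{k,i}-\mu_k\|^2$; the factors $c^2$ cancel in the ratio and $Q$ drops out by orthogonality. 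I would present this direct version as the proof, and relegate the counterexample to a short displayed computation.
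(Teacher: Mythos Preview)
Your proposal is correct and aligns with the paper's reasoning. The paper itself does not give a formal proof of this observation: it merely states the invariance with the one-line justification ``since such transformations preserve the pairwise distance between the feature vectors,'' and asserts non-invariance under general invertible maps without a counterexample. Your direct argument (means transform equivariantly, so $h_{k,i}-\mu_k\mapsto cQ(h_{k,i}-\mu_k)$ and $\mu_k-\mu_G\mapsto cQ(\mu_k-\mu_G)$, whence both numerator and denominator pick up the same factor $c^2$) is exactly the computation underlying the paper's remark, and your explicit shear counterexample fills in what the paper leaves to the reader.
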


\begin{observation}
     The Cosine Similarity metric is invariant to independent scaling of each $h_{k,i}$. It is also invariant to orthogonal transformation in $\BR^p$, as such transformations preserves the cosine similarity between feature vectors. But it is easy to see that Cosine Similarity is not invariant to invertible linear transformation in $\BR^p$.
\end{observation}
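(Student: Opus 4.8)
The plan is to read all three assertions directly off the definition
\[
\bar{d}_{\text{within}}=\sum_{k=1}^K\sum_{i=1}^N\sum_{j=1}^N\frac{1-\operatorname{sim}(h_{k,i},h_{k,j})}{KN^2},
\]
and to exploit the observation that this quantity depends on the features only through the same-class pairwise cosine similarities $\operatorname{sim}(h_{k,i},h_{k,j})$. For the two positive (invariance) claims it therefore suffices to show that each individual similarity $\operatorname{sim}(h_{k,i},h_{k,j})$ is preserved by the transformation in question, since then every summand, and hence the entire triple sum, is unchanged. For the negative claim I will exhibit one explicit invertible $U\in\BR^{p\times p}$ together with a small feature configuration whose within-class similarity it strictly alters, which suffices to refute invariance.

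For independent scaling, replace each $h_{k,i}$ by $c_{k,i}h_{k,i}$ with $c_{k,i}>0$. The positive scalars cancel between the numerator and denominator of $\operatorname{sim}$: the numerator acquires a factor $c_{k,i}c_{k,j}$ while the two norms in the denominator contribute $c_{k,i}$ and $c_{k,j}$, so $\operatorname{sim}(c_{k,i}h_{k,i},c_{k,j}h_{k,j})=\operatorname{sim}(h_{k,i},h_{k,j})$. For an orthogonal $Q\in\BR^{p\times p}$, I use $Q^\top Q=I$ to get $(Qh_{k,i})^\top(Qh_{k,j})=h_{k,i}^\top h_{k,j}$ and $\|Qh_{k,i}\|=\|h_{k,i}\|$, so again every similarity, and thus $\bar{d}_{\text{within}}$, is preserved. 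Both verifications are one-line cancellations.

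For the failure under a general invertible map, I will take a single class ($K=1$) with $N=2$ orthogonal features $h_{1,1}=(1,0)^\top$ and $h_{1,2}=(0,1)^\top$; then the only off-diagonal similarity is $\operatorname{sim}(h_{1,1},h_{1,2})=0$ and $\bar{d}_{\text{within}}=\tfrac12$. Applying the same shear $U=\left[\begin{smallmatrix}1&1\\0&1\end{smallmatrix}\right]$ used in the Fuzziness counterexample sends these features to $(1,0)^\top$ and $(1,1)^\top$, whose cosine similarity is $1/\sqrt{2}\neq 0$, so the metric becomes $\tfrac12\bigl(1-1/\sqrt{2}\bigr)\neq\tfrac12$. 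This single evaluation witnesses non-invariance; if a genuine multi-class dataset is preferred, one simply replicates this class $K$ times so that the class-average equals the single-class value and still changes under $U$.

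There is essentially no obstacle: all three parts reduce to elementary cancellation or a one-line numerical check, paralleling the two preceding observations. The only points requiring mild care are restricting the scaling claim to \emph{positive} scalars (a negative $c_{k,i}$ would flip a cosine and is not what ``scaling'' means here), and noting that the diagonal terms $i=j$ always contribute $1-\operatorname{sim}(h_{k,i},h_{k,i})=0$ and so are irrelevant to the counterexample.
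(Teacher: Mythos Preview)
Your proposal is correct and follows exactly the reasoning the paper has in mind: the paper treats this observation as essentially self-evident, with the only embedded justification being that orthogonal maps ``preserve the cosine similarity between feature vectors'' and that non-invariance under general invertible maps is ``easy to see.'' Your write-up simply supplies the one-line cancellations and an explicit shear counterexample (the same $U$ used for Fuzziness), which is precisely the level of detail the paper omits; your caveats about positive scalars and the vanishing diagonal terms are appropriate and do not depart from the intended argument.
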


However, the next proposition shows that the linear probing loss of the last layer feature is invariant under a much more general class of transformations.

\begin{observation}
    \label{prop: loss invariance}
    The minimum value of loss function in Equation~\ref{eq: loss} is invariant to invertible linear transformations on the feature vector, i.e. 
    \begin{align*}
        \min_{W,b}L(W,b,H)=\min_{W,b}L(W,b,VH),
    \end{align*}
    for any invertible $V\in \BR^{p\times p}$.
\end{observation}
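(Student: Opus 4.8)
\textbf{Proof plan for Observation~\ref{prop: loss invariance}.}
The plan is to show a bijective correspondence between the parameter pairs $(W,b)$ achieving a given loss value on $H$ and those achieving the same value on $VH$, obtained simply by reparametrizing $W \mapsto WV^{-1}$. First I would observe that the data-fitting term only depends on $W$ through the products $Wh_{k,i}$, so replacing each $h_{k,i}$ by $Vh_{k,i}$ and simultaneously replacing $W$ by $\widetilde W = WV^{-1}$ leaves every term $\ell(\widetilde W (Vh_{k,i}) + b, e_k) = \ell(Wh_{k,i}+b, e_k)$ unchanged, and the bias term $\|b\|^2$ is untouched. The only quantity that changes is the weight regularizer $\|W\|_F^2$ versus $\|WV^{-1}\|_F^2$, so the naive substitution does \emph{not} immediately give equality of the two minimization problems unless $\lambda_W = 0$.

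To handle the regularizer, the key point is that at the optimum the row space of $W$ can be taken to lie inside $V_T(H)$ (equivalently, inside the span of the centered features), since any component of $W$ orthogonal to that span contributes nothing to the fitting term but strictly increases $\|W\|_F^2$; after absorbing the global mean into $b$ one may further work with centered features. Restricted to this relevant subspace, $V$ acts as an invertible map between $V_T(H)$ and $V_T(VH)$, and the substitution $\widetilde W = W V^{-1}$ (interpreted on these subspaces) is a bijection between the feasible sets of the two problems. The cleanest way to get exact equality of the minima is then to argue the inequality in both directions: $\min_{W,b} L(W,b,VH) \le L(\widetilde W, b^\star, VH)$ for the optimizer $(W^\star,b^\star)$ of the $H$-problem, which gives $\le$ after noting the fitting and bias terms match, plus a matching bound on the regularizer; and symmetrically with $V$ replaced by $V^{-1}$. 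If the authors intend the statement with $\lambda_W>0$, equality of the full loss minima genuinely requires this subspace argument plus the observation that on $V_T$ one can conjugate $V$ so that the two regularizer values coincide at the optimum — or, more likely, the intended reading is that the statement holds as an identity of the fitting-plus-bias part, with the regularization terms either set to zero or understood to transform accordingly.

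The main obstacle, and the place I would be most careful, is precisely this interaction with the Frobenius regularizer: the substitution argument is transparent for the loss term but $\|W\|_F$ is \emph{not} invariant under $W\mapsto WV^{-1}$ for a general invertible $V$. I expect the resolution to be one of (i) the proposition is really about $\lambda_W=\lambda_b=0$ (pure MSE/linear-probing loss), in which case the bijection argument above is complete and short; or (ii) one shows that the minimizer's weight matrix lives in a subspace on which the relevant action of $V$ is orthogonal after an appropriate change of inner product, so the regularizer is preserved — but this would require an extra hypothesis not visible in the statement. I would therefore first pin down which regime is meant, then write the two-directional inequality; the fitting-term invariance via $Wh_{k,i} = (WV^{-1})(Vh_{k,i})$ is the one genuinely essential and entirely routine step.
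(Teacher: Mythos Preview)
The paper does not supply a proof for this observation; it is stated as self-evident and then invoked in Corollary~\ref{col: invariance} for the \emph{unregularized} MSE linear-probing loss of Theorem~\ref{thm: mse}. Your one-line substitution $W \mapsto WV^{-1}$ is exactly the intended argument, and for $\lambda_W = 0$ it is complete: the map $(W,b) \mapsto (WV^{-1},b)$ is a bijection on parameter space that preserves every term of the loss, so the two infima coincide.

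Your caution about the regularizer is well placed and, in fact, decisive: the statement as literally written, referring to Equation~\eqref{eq: loss} with $\lambda_W > 0$, is \emph{false} in general. A one-dimensional counterexample suffices: with $p=K=1$, a single feature $h=1$, target $1$, squared loss, and $\lambda_W>0$, the optimum over $W$ of $\tfrac12(Wh-1)^2 + \tfrac{\lambda_W}{2}W^2$ depends on the scale of $h$; replacing $h$ by $Vh$ with $V=2$ strictly lowers the minimum. Your proposed route~(ii) via the row-space restriction cannot repair this: confining $W$ to have rows in $V_T(H)$ is correct at the optimum, but $\|W\|_F$ is still not preserved under $W \mapsto WV^{-1}$ on that subspace unless $V$ acts orthogonally there, which is an extra hypothesis. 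So you should commit to reading~(i): the observation is meant for $\lambda_W=0$ (consistent with how the paper actually uses it), and your substitution argument is then both correct and the whole story.
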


In other words, if we have two pretrained models $\phi_1(\cdot)$ and $\phi_2(\cdot)$, and there exists an invertible linear transformation  $V\in \BR^{p\times p}$ such that $\phi_1(x)=V\phi_2(x)$ for any $x\in \BR^d$, then $\phi_1(\cdot)$ and $\phi_2(\cdot)$ will have exactly the same linear probing loss on any downstream data distribution. 
Therefore, when considering a collapse metric that may serve as an indicator of transfer accuracy, it is desirable for the metric to exhibit invariance to invertible linear transformations. However, as discussed previously, the metrics listed in Section~\ref{sec: previous metrics} do not possess this level of invariance. 


\subsection{Numerical Stability Issues}\label{sec: numerical stability}

\begin{figure}[t]
\begin{center}
\centerline{\includegraphics[width=0.7\columnwidth]{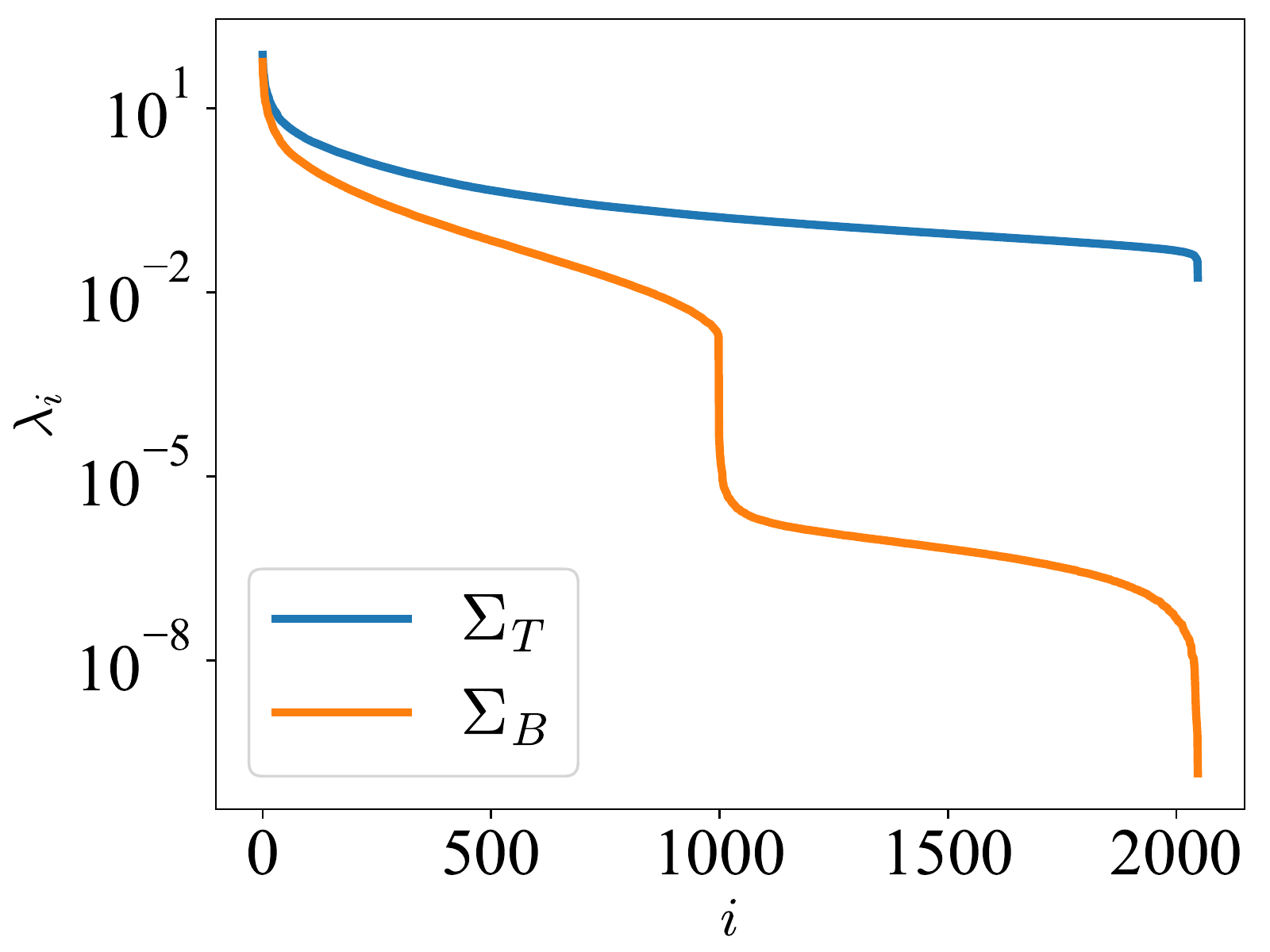}}
\captionsetup{skip=5pt}
\captionsetup{belowskip=-20pt}
\caption{\textbf{The Eigenvalue spectra of $\Sigma_B$ and $\Sigma_T$.} The spectrum of $\Sigma_T$ has a substantially larger scale. The model is a ResNet50 trained on ImageNet-1K, using the setting specified in~\cref{sec: exp_setup}.}
\label{fig: svals_rn50_basic}
\end{center}
\end{figure}

Numerical stability is an essential property for the collapse metric to ensure its practical usability.
Unfortunately, the Fuzziness metric is prone to numerical instability, primarily due to the pseudoinverse operation applied to $\Sigma_B$.

Firstly, the between-class covariance matrix $\Sigma_B$ is singular when $K\le p$, and its rank is unknown.
Due to computational imprecision, its zero eigenvalues are always occupied with small nonzero values. In the default PyTorch~\citep{paszke2019pytorch} implementation, the pseudoinverse operation includes a thresholding step to eliminate the spurious nonzero eigenvalues. However, selecting the appropriate threshold is a manual task, as it may vary depending on the architecture, dataset, or training algorithms.

To tackle this issue, one possible solution is to retain only the top  $\min\{p,K-1\}$ eigenvalues, which is the maximum rank of $\Sigma_B$. Nevertheless, $\Sigma_B$ can still possess small trailing nonzero eigenvalues.  For example, in the experiments illustrated in Figure~\ref{fig: svals_rn50_basic}, 
the $999$-th eigenvalue is about $2\times 10^{-3}$, significantly smaller than the typical scale of nonzero eigenvalues. Including such small eigenvalues in the computation would yield a substantially large fuzziness value.

To address the numerical stability issue, an alternative approach is to discard the $\Sigma_B$ and instead employ the more well-behaved overall covariance matrix $\Sigma_T$.
As shown in Figure~\ref{fig: svals_rn50_basic}, the eigenvalues of $\Sigma_T$ exhibit a larger scale and a more uniform distribution compared with eigenvalues of $\Sigma_B$, making it a numerically stable choice for pseudoinverse operation.
Interestingly, the quantity $\Sigma_T^\dagger$ naturally emerges in the solution of a loss minimization problem, which we will explore in the next section.

\begin{figure*}[t]
     \begin{subfigure}
         \centering
         \includegraphics[width=0.24\textwidth]{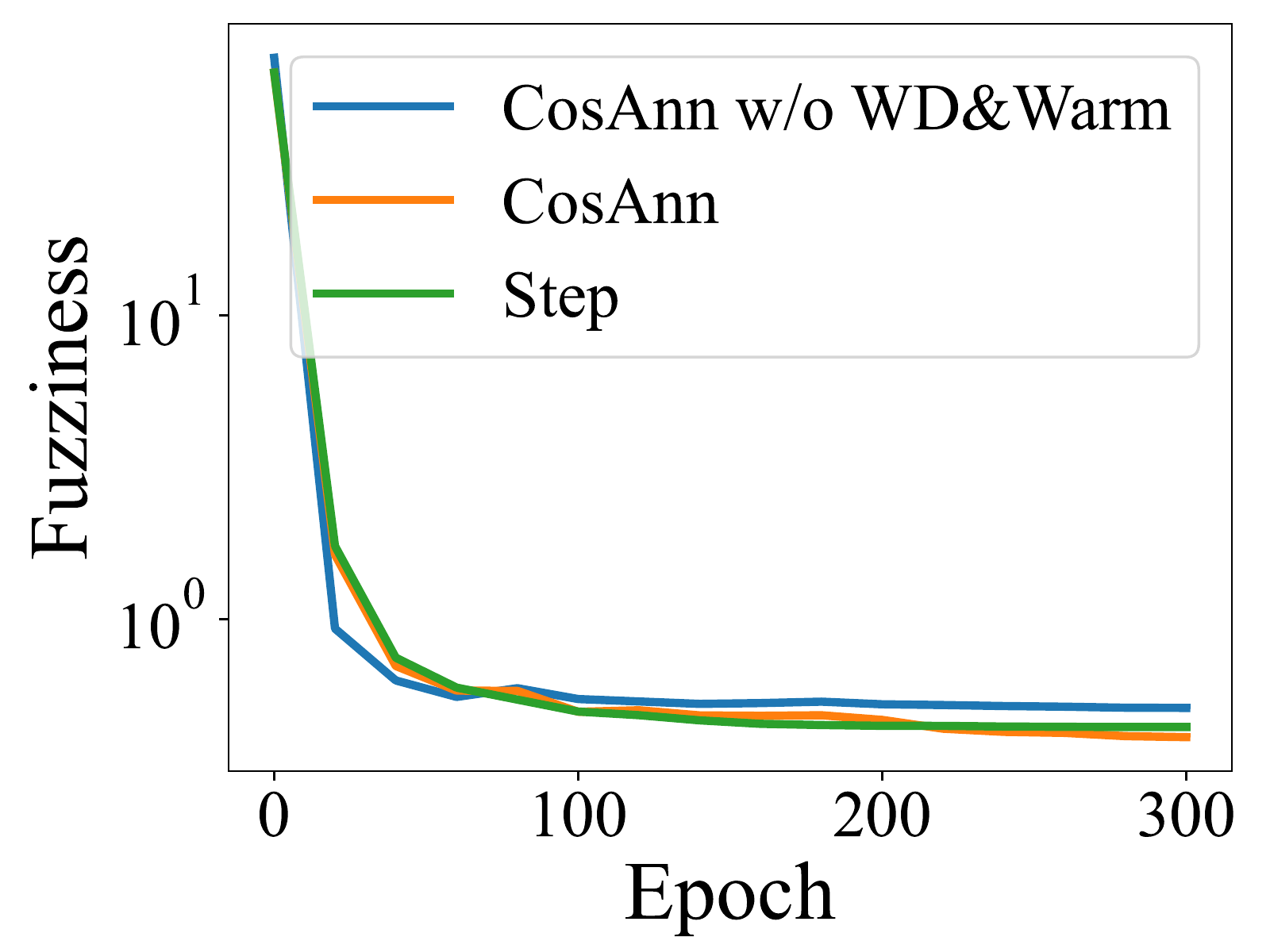}
     \end{subfigure}
     \begin{subfigure}
         \centering
         \includegraphics[width=0.24\textwidth]{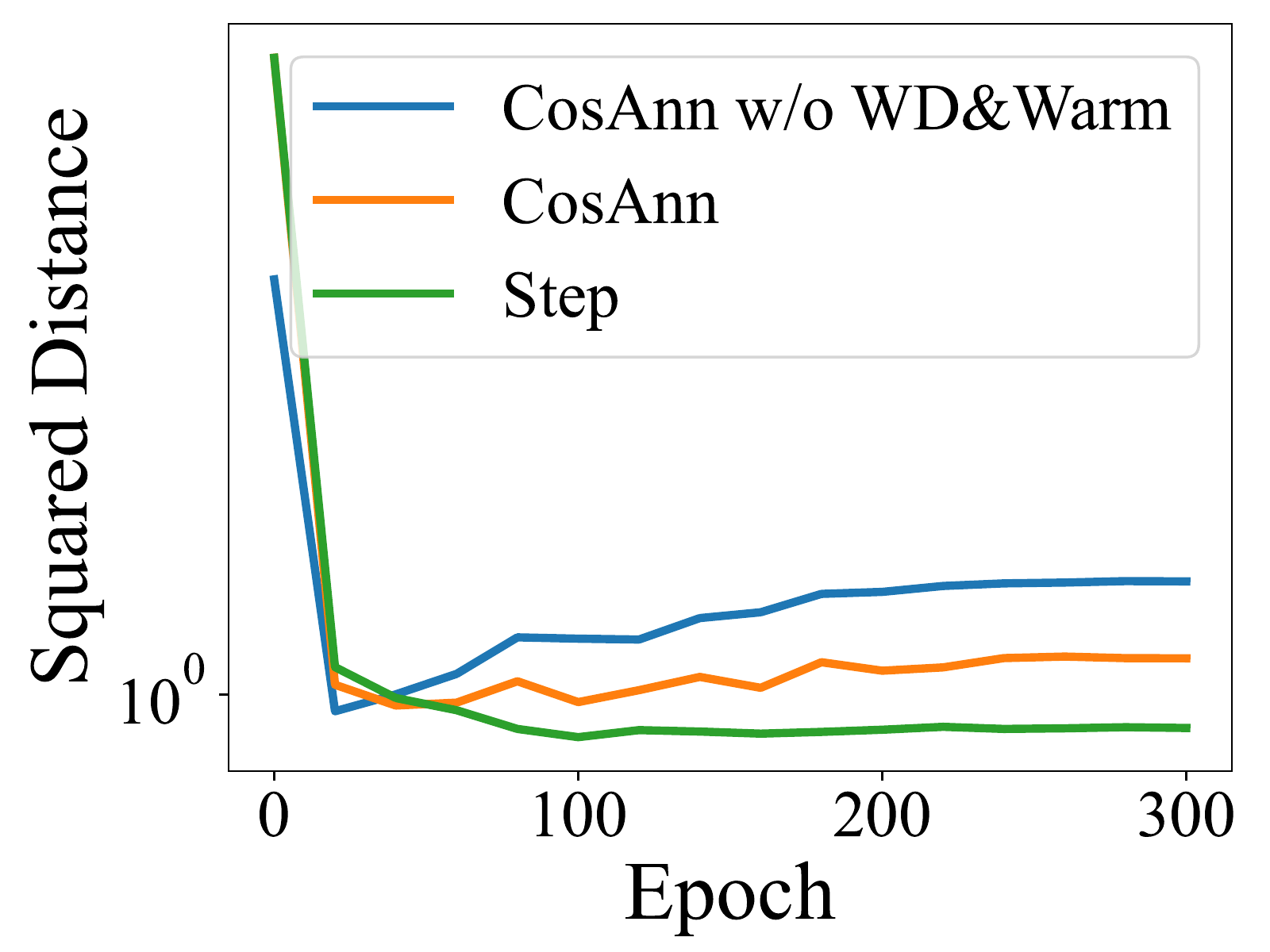}
     \end{subfigure}
     \begin{subfigure}
         \centering
         \includegraphics[width=0.24\textwidth]{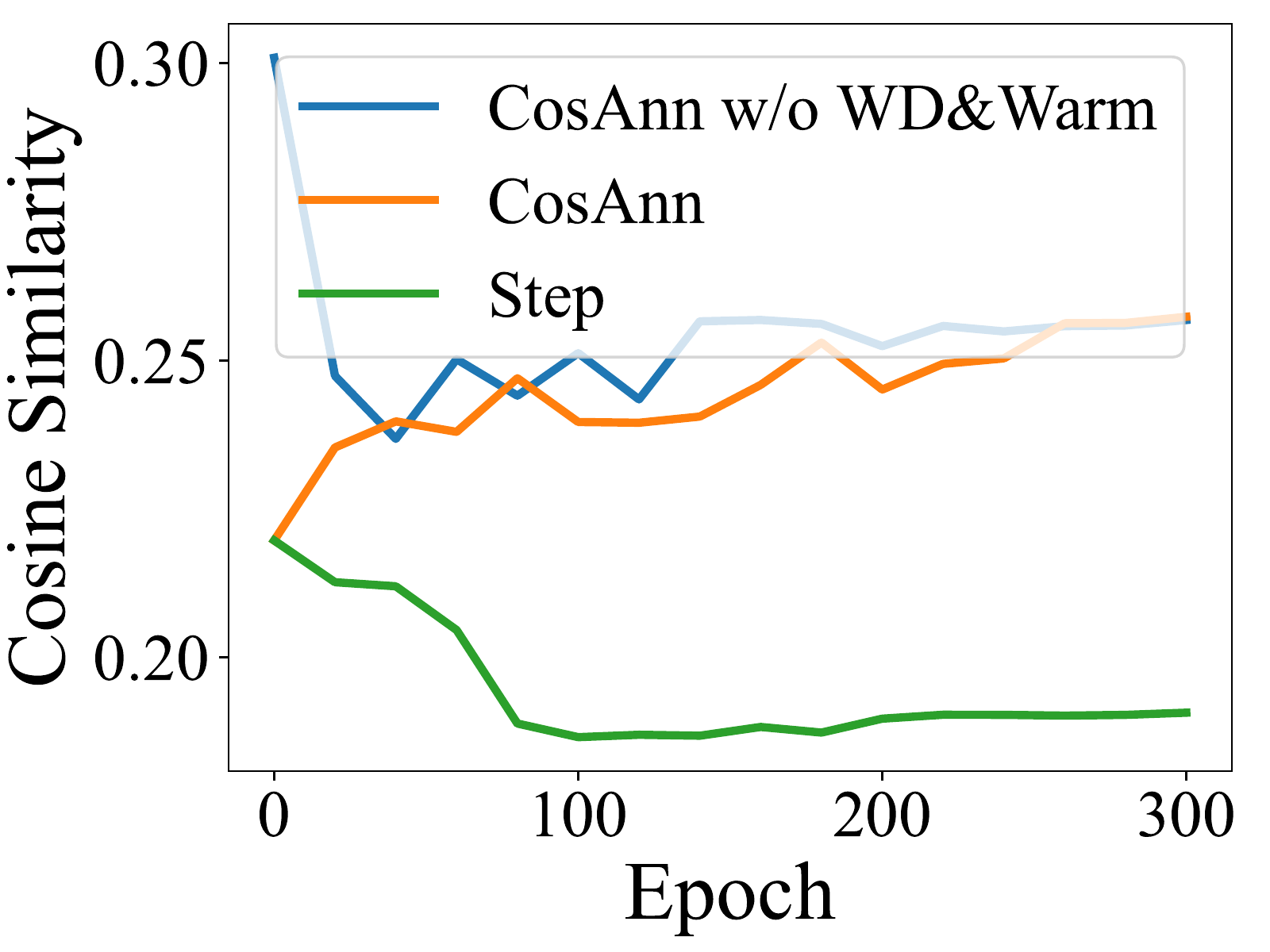}
     \end{subfigure}
     \begin{subfigure}
         \centering
         \includegraphics[width=0.24\textwidth]{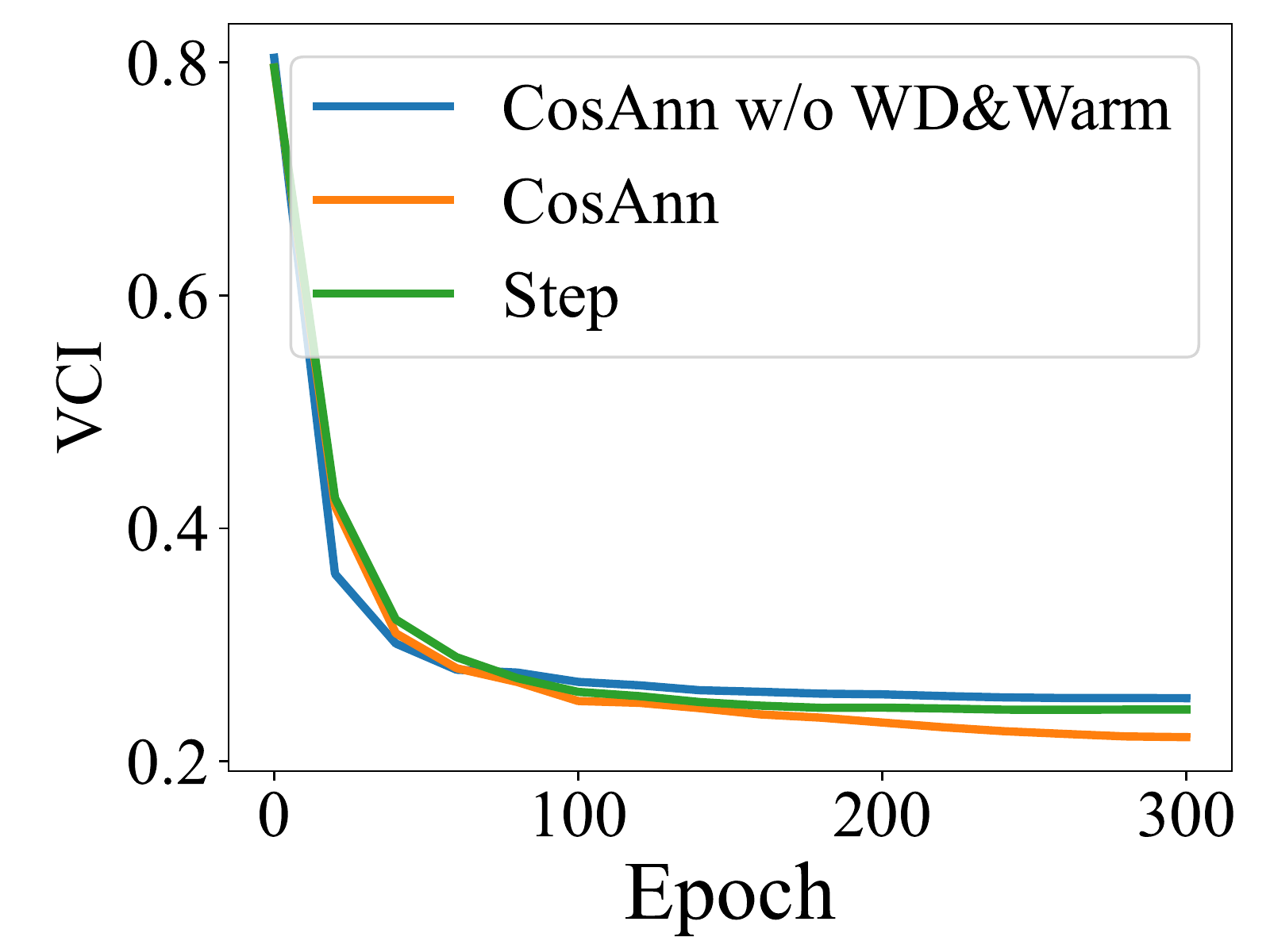}
     \end{subfigure}
        \caption{\textbf{Variability Collapse metrics of training ResNet18 on CIFAR-10 dataset.} From left to right: Fuzziness, Squared Distance, Cosine Similarity and our proposed VCI. The three curves are obtained with different training settings specified below, all achieving $\ge$ 92.1$\%$ test accuracy. \textbf{Green:} step-wise lerning rate decay schedule. \textbf{Orange:} cosine annealing schedule. \textbf{Blue:} cosine annealing schedule without weight decay and warmup.}
        \label{fig: rn18_col}
\end{figure*}

\begin{figure*}[t]
     \begin{subfigure}
         \centering
         \includegraphics[width=0.24\textwidth]{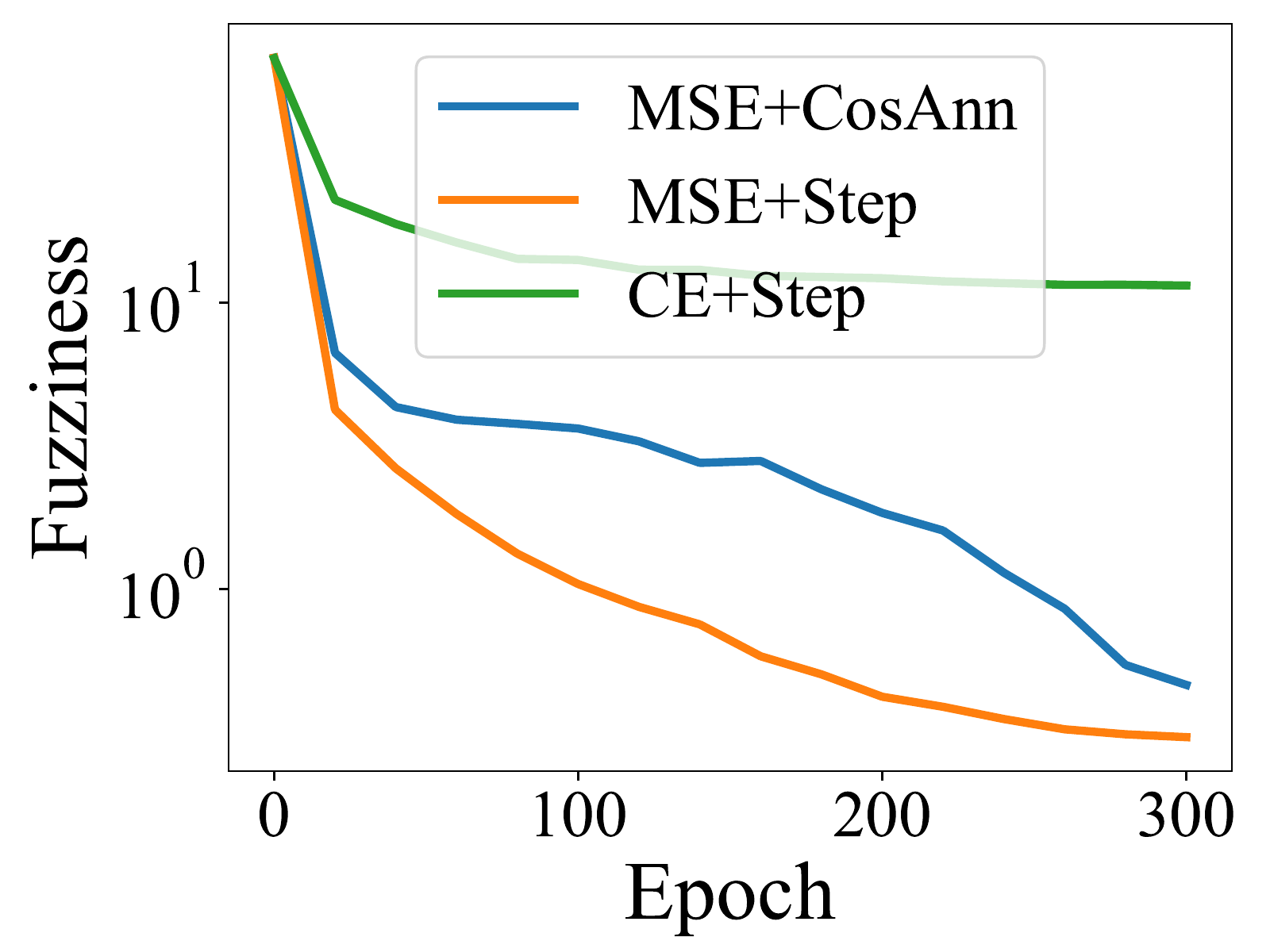}
     \end{subfigure}
     \hfill
     \begin{subfigure}
         \centering
         \includegraphics[width=0.24\textwidth]{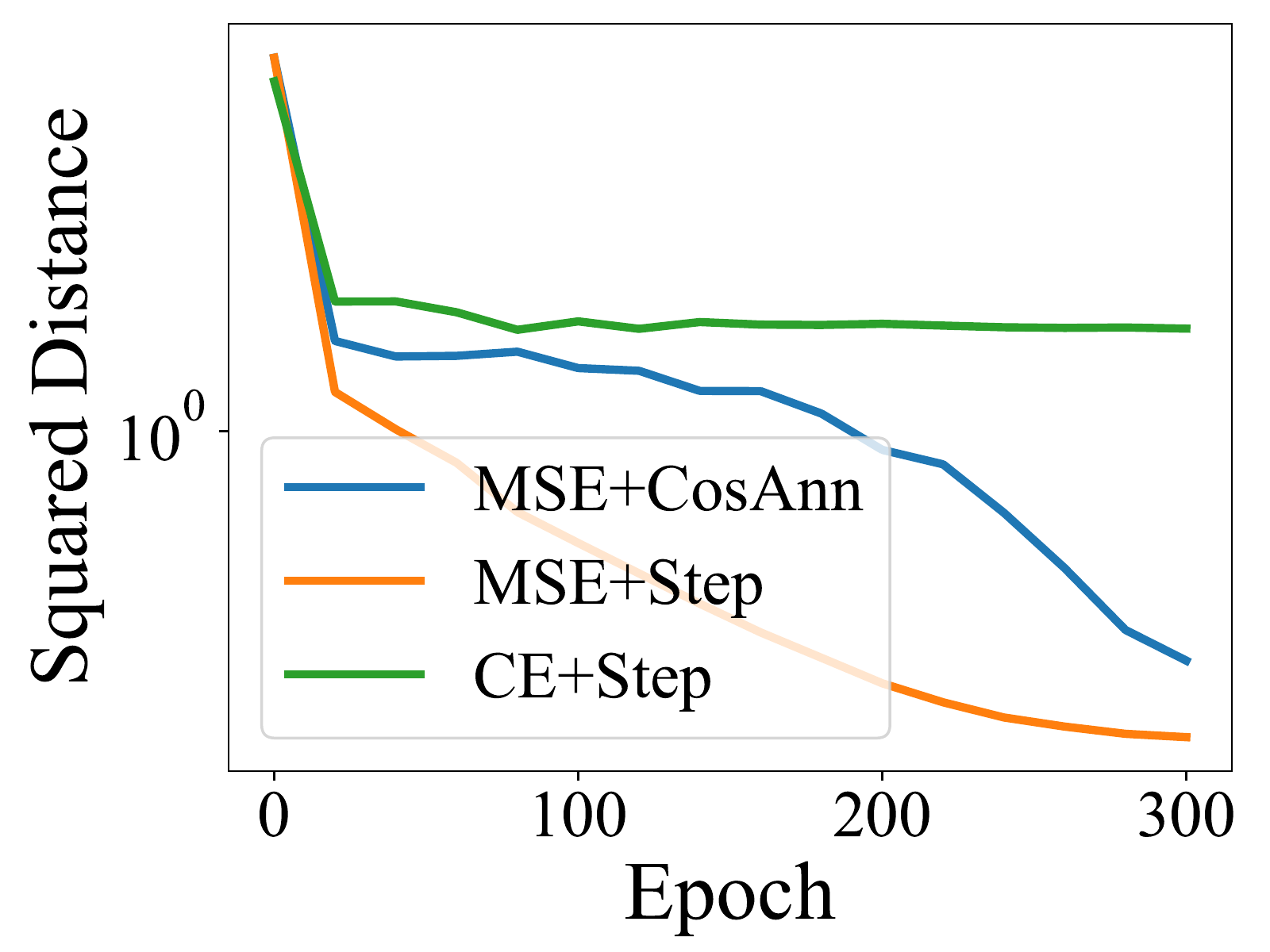}
     \end{subfigure}
     \hfill
     \begin{subfigure}
         \centering
         \includegraphics[width=0.24\textwidth]{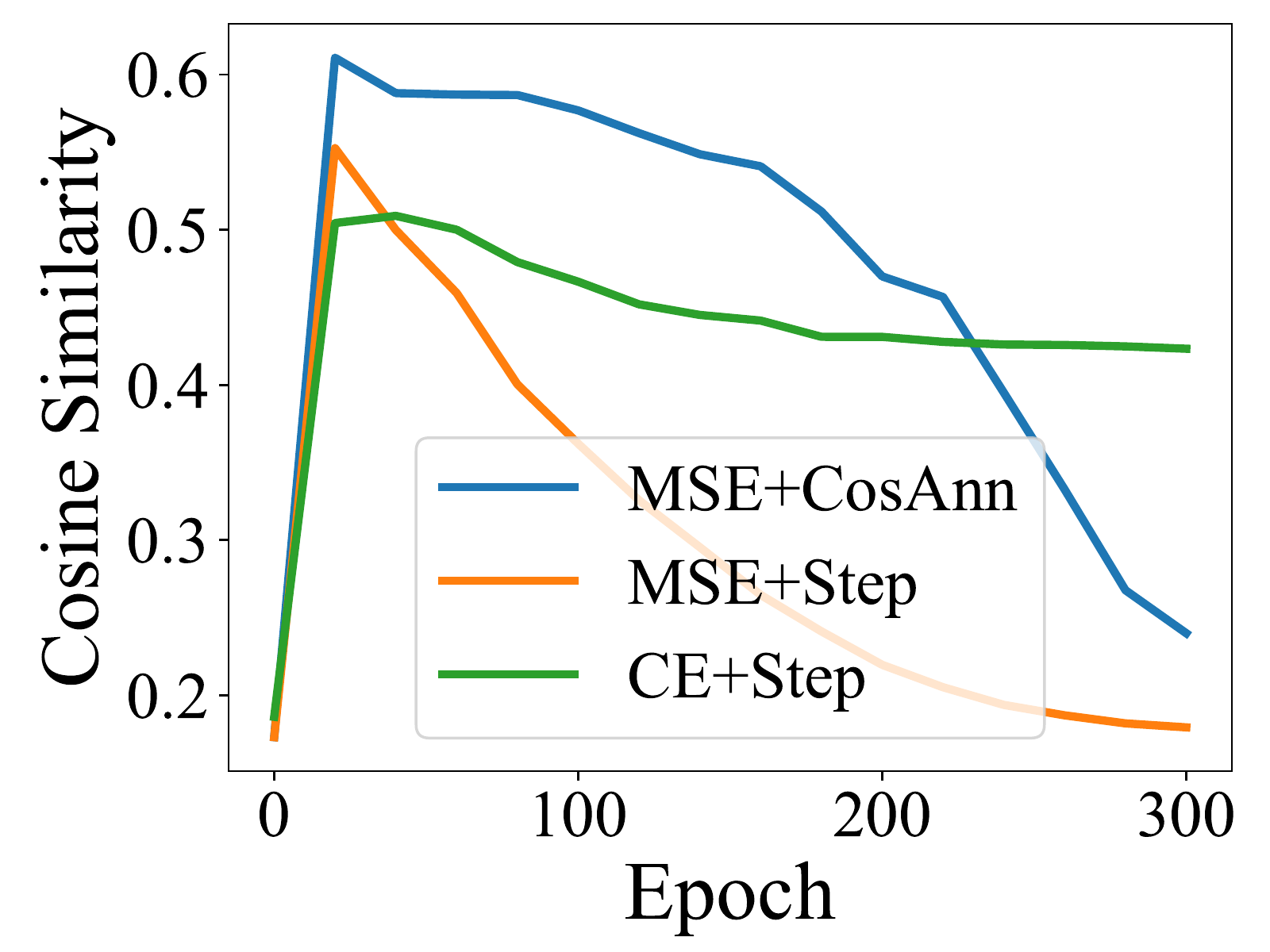}
     \end{subfigure}
     \hfill
     \begin{subfigure}
         \centering
         \includegraphics[width=0.24\textwidth]{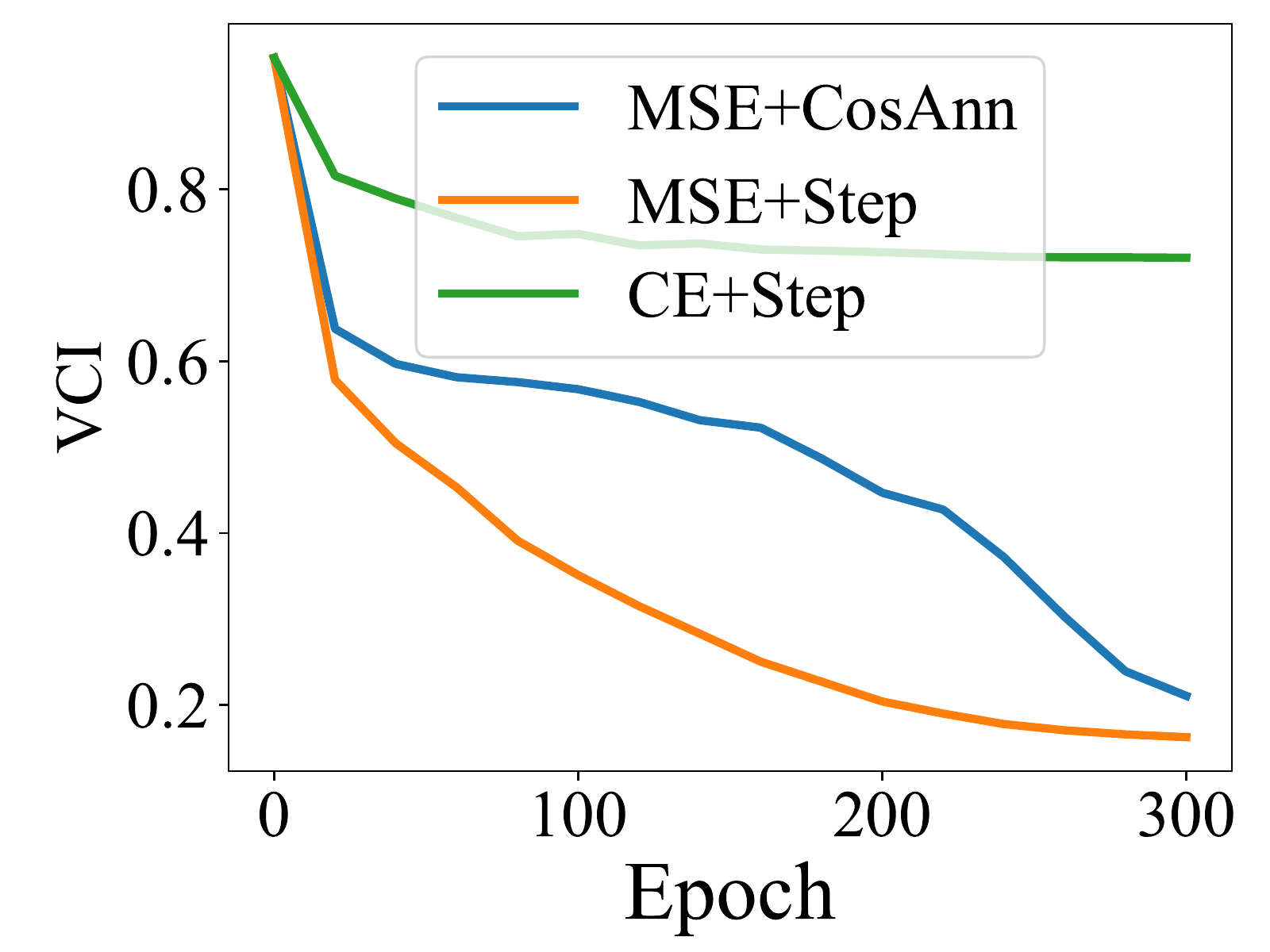}
     \end{subfigure}
        \caption{\textbf{Variability Collapse metrics of training ResNet50 on ImageNet-1k dataset.} From left to right: Fuzziness, Squared Distance, Cosine Similarity and our proposed VCI. The three curves are obtained with different training settings, all achieving $\ge$ 77.8$\%$ test accuracy. green: CE loss. orange: MSE loss. blue: MSE loss + cosine annealing schedule.}
        \label{fig: rn50_col}
\end{figure*}

\section{The Proposed Metric}\label{sec: proposed metric}
As we have discussed, the existing collapse metrics discussed in Section~\ref{sec: previous metrics} do not have the desired properties to fully measure the quality of the representation in downstream tasks. In this section, we introduce a novel and well-motivated collapse metric, which we call Variability Collapse Index~(VCI), that satisfy all the aforementioned properties.



Previous studies~\citep{zhu2021geometric, tirer2022extended} indicate that fully collapsed last layer features minimizes the linear probing loss. Therefore, it is natural to explore the inverse direction, namely, using the linear probing loss to quantify the collapse level of last layer features.  

Suppose we have a labeled dataset with corresponding last layer feature $H=(h_{k,i})_{k\in[K],i\in[N]}$.
We perform linear regression on the last layer to find the optimal parameter $W$ that minimizes the following MSE loss:
\begin{align*}
    L(W,b,H)=\frac{1}{2KN}\sum_{k\in[K],i\in[N]}\|Wh_{k,i}+b-e_k\|^2.
\end{align*}

The following theorem gives the optimal linear probing loss.

\begin{theorem}\label{thm: mse}
The optimal linear probing loss has the following form.
\begin{align*}
    \min_{W,b}L(W,b,H)=-\frac{1}{2K}\Tr\left[\Sigma_{T}^\dagger \Sigma_{B}\right]+\frac{1}{2}-\frac{1}{2K},
\end{align*}
where $\Sigma_B$ and $\Sigma_T$ are the between-class and overall covariance matrix defined in Equation~\ref{eq: between} and~\ref{eq: total}.
\end{theorem}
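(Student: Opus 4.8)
The strategy is a standard two-stage minimization: first optimize over $b$ for fixed $W$, then reduce the problem to a trace-minimization over $W$ and recognize the answer. Write $L(W,b,H)=\frac{1}{2KN}\sum_{k,i}\|Wh_{k,i}+b-e_k\|^2$. For fixed $W$, this is a quadratic in $b$ whose minimizer is $b^\star = \bar e - W\mu_G$, where $\bar e = \frac{1}{K}\sum_k e_k = \frac{1}{K}\mathbf{1}_K$ and $\mu_G = \mu_G(H)$. Substituting back, the residual becomes $W(h_{k,i}-\mu_G) - (e_k - \bar e)$, so
\begin{align*}
\min_b L(W,b,H) = \frac{1}{2KN}\sum_{k,i}\|W(h_{k,i}-\mu_G) - (e_k-\bar e)\|^2.
\end{align*}
Expanding the square and using the definitions of $\Sigma_T$ (from the centered features $h_{k,i}-\mu_G$), of $\Sigma_B$ (noting that $\frac{1}{KN}\sum_{k,i}(h_{k,i}-\mu_G)(e_k-\bar e)^\top = \frac{1}{K}\sum_k(\mu_k-\mu_G)(e_k-\bar e)^\top$ after averaging over $i$), and of the label covariance $\frac{1}{K}\sum_k(e_k-\bar e)(e_k-\bar e)^\top = \frac{1}{K}(\mathbf{I}_K - \frac{1}{K}\mathbf{1}_K\mathbf{1}_K^\top)$ whose trace is $\frac{K-1}{K}$, we get
\begin{align*}
\min_b L(W,b,H) = \tfrac12\Tr[W\Sigma_T W^\top] - \Tr[W M^\top] + \tfrac12\cdot\tfrac{K-1}{K},
\end{align*}
where $M = \frac{1}{K}\sum_k (e_k-\bar e)(\mu_k-\mu_G)^\top \in \BR^{K\times p}$.

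Next I minimize over $W$. This is an unconstrained convex quadratic in $W$ (since $\Sigma_T\succeq 0$); setting the gradient to zero gives the normal equations $W\Sigma_T = M$. The subtlety is that $\Sigma_T$ may be singular, so $W$ is not unique, but any solution of the form $W^\star = M\Sigma_T^\dagger$ works provided $M$'s rows lie in the column space of $\Sigma_T$ — which holds because each $\mu_k-\mu_G \in V_B \subseteq V_T$, so the rows of $M$ (linear combinations of the $(\mu_k-\mu_G)^\top$) lie in the row space of $\Sigma_T$. Plugging $W^\star = M\Sigma_T^\dagger$ into the quadratic and using $\Sigma_T^\dagger\Sigma_T\Sigma_T^\dagger = \Sigma_T^\dagger$ gives the optimal value $-\frac12\Tr[M\Sigma_T^\dagger M^\top] + \frac12\cdot\frac{K-1}{K}$. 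It then remains to show $\Tr[M\Sigma_T^\dagger M^\top] = \frac{1}{K}\Tr[\Sigma_T^\dagger \Sigma_B]$. Writing $M = \frac{1}{K} E C^\top$ where $C = (\mu_1-\mu_G,\dots,\mu_K-\mu_G)\in\BR^{p\times K}$ and $E = (e_1-\bar e,\dots,e_K-\bar e)\in\BR^{K\times K}$, one computes $M\Sigma_T^\dagger M^\top = \frac{1}{K^2} E C^\top\Sigma_T^\dagger C E^\top$, so $\Tr[M\Sigma_T^\dagger M^\top] = \frac{1}{K^2}\Tr[C^\top\Sigma_T^\dagger C\, E^\top E]$. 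Here $E^\top E = \mathbf{I}_K - \frac1K\mathbf{1}_K\mathbf{1}_K^\top$ is the centering projection, and since $C\mathbf{1}_K = \sum_k(\mu_k-\mu_G) = 0$, the factor $E^\top E$ acts as the identity against $C^\top(\cdots)C$; thus $\Tr[M\Sigma_T^\dagger M^\top] = \frac{1}{K^2}\Tr[C^\top\Sigma_T^\dagger C] = \frac{1}{K^2}\Tr[\Sigma_T^\dagger C C^\top] = \frac{1}{K}\Tr[\Sigma_T^\dagger\Sigma_B]$, using $\Sigma_B = \frac1K CC^\top$. Combining, $\min_{W,b}L = -\frac{1}{2K}\Tr[\Sigma_T^\dagger\Sigma_B] + \frac12 - \frac{1}{2K}$, as claimed.

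\textbf{Main obstacle.} The routine parts (completing the square in $b$, the convex quadratic in $W$) are mechanical; the one place requiring genuine care is the handling of the pseudo-inverse when $\Sigma_T$ is rank-deficient — specifically verifying that $W^\star = M\Sigma_T^\dagger$ actually attains the infimum of the quadratic rather than merely being a critical point of a restricted problem. This hinges on the inclusion $V_B \subseteq V_T$ (equivalently, that the rows of $M$ lie in $\operatorname{row}(\Sigma_T)$), which follows from $\Sigma_T = \Sigma_B + \Sigma_W$ with both summands PSD. I would state this as a short lemma: for PSD $A$ and any $b$ with $b\in\operatorname{col}(A)$, $\min_x(\frac12 x^\top A x - b^\top x) = -\frac12 b^\top A^\dagger b$, attained at $x=A^\dagger b$; applied rowwise (or via vectorization) this closes the gap cleanly. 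The bookkeeping with the label-covariance matrix $E^\top E$ being a projection that commutes past the centered class means is the only other spot where sign/constant errors could creep in, but it is purely computational.
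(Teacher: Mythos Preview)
Your proof is correct and follows essentially the same approach as the paper: both reduce to an unconstrained convex quadratic in $W$ after eliminating $b$, solve the normal equation $W\Sigma_T = M$ via the pseudo-inverse, and rely on the inclusion $V_B\subseteq V_T$ (from $\Sigma_T=\Sigma_B+\Sigma_W$) to justify that $W^\star=M\Sigma_T^\dagger$ attains the minimum. The only cosmetic difference is that the paper sets $\mu_G=0$ upfront and works with uncentered labels $e_k$ (so $M=\tfrac{1}{K}\sum_k e_k\mu_k^\top$), whereas you carry the centering through explicitly and then collapse the extra $E^\top E$ factor using $C\mathbf{1}_K=0$; both routes land on the same $-\tfrac{1}{2K}\Tr[\Sigma_T^\dagger\Sigma_B]$ term.
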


Theorem~\ref{thm: mse} shows that the information of the minimum MSE loss can be fully captured by the simple quantity $\Tr\left[\Sigma_{T}^\dagger \Sigma_{B}\right]$.
It is easy to see that the minimum of $\Tr[\Sigma_{T}^\dagger \Sigma_{B}]$ is $0$. The following theorem gives an upper bound of $\Tr[\Sigma_{T}^\dagger \Sigma_{B}]$.

\begin{theorem}\label{thm: upper bound}
$\Tr[\Sigma_{T}^\dagger \Sigma_{B}]\le \rank(\Sigma_{B
})$. The equality holds for fully collapsed configuration $\Sigma_W=\mathbf{0}$.
\end{theorem}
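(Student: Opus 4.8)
\textbf{Proof proposal for Theorem~\ref{thm: upper bound}.}

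The plan is to reduce the claim to a statement about the eigenvalues of a single symmetric matrix pencil. First I would restrict attention to $V_T$, the column space of $\Sigma_T$. Since $\Sigma_T = \Sigma_B + \Sigma_W$ with both $\Sigma_B,\Sigma_W \succeq 0$, we have $V_B \subseteq V_T$ and $V_W \subseteq V_T$, so $\Sigma_T^\dagger$ acts as the genuine inverse of $\Sigma_T$ on $V_T$ and both $\Sigma_B,\Sigma_W$ vanish on $V_T^\perp$. Hence $\Tr[\Sigma_T^\dagger \Sigma_B]$ is unchanged if we work inside $V_T$, where $\Sigma_T$ is positive definite. Then I would write $\Tr[\Sigma_T^\dagger \Sigma_B] = \Tr[\Sigma_T^{-1/2}\Sigma_B \Sigma_T^{-1/2}]$ (inverses taken on $V_T$), and similarly $\Tr[\Sigma_T^{-1/2}\Sigma_W \Sigma_T^{-1/2}]$, so that their sum is $\Tr[\Sigma_T^{-1/2}(\Sigma_B+\Sigma_W)\Sigma_T^{-1/2}] = \Tr[\mathbf{I}_{V_T}] = \dim V_T$.

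Next, the matrix $M := \Sigma_T^{-1/2}\Sigma_B \Sigma_T^{-1/2}$ (as an operator on $V_T$) is symmetric positive semidefinite, and since $\Sigma_W \succeq 0$ forces $M \preceq \mathbf{I}_{V_T}$, all eigenvalues of $M$ lie in $[0,1]$. Therefore $\Tr[M] = \Tr[\Sigma_T^\dagger\Sigma_B] \le \rank(M)$. It remains to check $\rank(M) = \rank(\Sigma_B)$: conjugation by the invertible map $\Sigma_T^{-1/2}$ on $V_T$ preserves rank, and the range of $\Sigma_B$ already lies in $V_T$, so $\rank(M) = \rank(\Sigma_B)$. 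This gives the inequality $\Tr[\Sigma_T^\dagger \Sigma_B] \le \rank(\Sigma_B)$.

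For the equality case, if $\Sigma_W = \mathbf{0}$ then $\Sigma_T = \Sigma_B$, so $\Sigma_T^\dagger \Sigma_B$ is the orthogonal projection onto $V_B$ and its trace is exactly $\dim V_B = \rank(\Sigma_B)$. (Conversely, equality forces every nonzero eigenvalue of $M$ to equal $1$, which combined with $M + \Sigma_T^{-1/2}\Sigma_W\Sigma_T^{-1/2} = \mathbf{I}_{V_T}$ and $V_W \subseteq V_T$ gives $\Sigma_W = \mathbf{0}$; I would include this direction only if the paper wants the "if and only if".) The main obstacle is the bookkeeping around pseudo-inverses and subspaces — making sure that passing to $V_T$ is legitimate, that $\Sigma_T^{-1/2}$ is well-defined there, and that $V_B \subseteq V_T$ so no mass of $\Sigma_B$ is lost; once this is set up cleanly, the eigenvalue argument via $M \preceq \mathbf{I}$ is immediate. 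An alternative, essentially equivalent route avoiding square roots is to note $\Tr[\Sigma_T^\dagger \Sigma_B] = \Tr[\Sigma_T^\dagger \Sigma_B \Sigma_T^{\dagger} \Sigma_T] $ and bound via $\Sigma_B \preceq \Sigma_T$, but the symmetrized form above is the cleanest.
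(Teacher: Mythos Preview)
Your proof of the inequality and of the stated equality direction is correct, and it follows a genuinely different route from the paper. Both you and the paper first restrict to $V_T$ (using $V_B \subseteq V_T$, which follows from $\Sigma_T = \Sigma_B + \Sigma_W$ with $\Sigma_W \succeq 0$), reducing to the situation where $\Sigma_T$ is invertible. From there you symmetrize via $M = \Sigma_T^{-1/2}\Sigma_B\Sigma_T^{-1/2}$ and read off $0 \preceq M \preceq \mathbf{I}_{V_T}$ directly, so that $\Tr M \le \rank M = \rank \Sigma_B$ in one line. The paper instead performs a second eigendecomposition of $\Sigma_B$ restricted to $V_T$, rewrites $\Sigma_T$ as a $2\times 2$ block matrix in that eigenbasis, and invokes two auxiliary lemmas on block-matrix inversion and Schur complements to reach the same trace inequality. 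Your argument is shorter and avoids the block-matrix machinery entirely; the paper's is more explicit in coordinates. Both ultimately rest on the same key inequality $\Sigma_B \preceq \Sigma_T$, and both handle the forward equality case identically via $\Sigma_W = \mathbf{0} \Rightarrow \Sigma_T = \Sigma_B$.

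One caution on your parenthetical converse: it is not correct as written. Equality in $\Tr M \le \rank M$ only forces the nonzero eigenvalues of $M$ to equal $1$, i.e.\ $M$ is an orthogonal projection on $V_T$; then $\Sigma_T^{-1/2}\Sigma_W\Sigma_T^{-1/2} = \mathbf{I}_{V_T} - M$ is the complementary projection, which need not be zero. For a concrete counterexample take $\Sigma_B = \left[\begin{smallmatrix}1&0\\0&0\end{smallmatrix}\right]$ and $\Sigma_W = \left[\begin{smallmatrix}0&0\\0&1\end{smallmatrix}\right]$: then $\Sigma_T = \mathbf{I}$ and $\Tr[\Sigma_T^\dagger \Sigma_B] = 1 = \rank(\Sigma_B)$ with $\Sigma_W \neq \mathbf{0}$. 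Since the theorem (and the paper) only claim the forward direction, this does not affect your proof of the stated result, but you should drop or correct the ``if and only if'' remark.
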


The term $\Tr[\Sigma_{T}^\dagger \Sigma_{B}]$ has a positive correlation with the level of collapse in the representation. Theorem~\ref{thm: mse} implies that for MSE loss, a more collapsed representation leads to a smaller loss. Therefore, this term is a natural candidate for collapse metric. 
\begin{definition}
Define the \textbf{Variability Collapse Index~(VCI)} of a set of features $H=(h_{k,i})_{k\in[K],i\in[N]}$ as
\begin{align*}
    \text{VCI}=1-\frac{\Tr[\Sigma_{T}^\dagger \Sigma_{B}]}{\rank(\Sigma_B)},
\end{align*}
where $\Sigma_B$ and $\Sigma_T$ are the between-class and overall covariance matrix defined in Equation~\ref{eq: between} and~\ref{eq: total}.
\end{definition}

One of the advantages  of VCI is its invariance to invertible linear transformations, which is inherited from the invariance of the MSE loss.
\begin{corollary}\label{col: invariance}
VCI is invariant to invertible linear transformation of the feature vector, i.e., multiplying each $h_{k,i}$ with an invertible matrix $U\in\BR^{p\times p}$.
\end{corollary}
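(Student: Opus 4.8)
The plan is to deduce Corollary~\ref{col: invariance} directly from Theorem~\ref{thm: mse}, using the already-established invariance of the minimum MSE loss (Observation~\ref{prop: loss invariance}) together with an invariance of $\rank(\Sigma_B)$. First I would observe that for an invertible $U\in\BR^{p\times p}$, replacing each $h_{k,i}$ by $Uh_{k,i}$ sends $\mu_k\mapsto U\mu_k$ and $\mu_G\mapsto U\mu_G$, hence $\Sigma_B(UH)=U\Sigma_B(H)U^\top$ and $\Sigma_T(UH)=U\Sigma_T(H)U^\top$. Since $U$ is invertible it preserves rank, so $\rank(\Sigma_B(UH))=\rank(\Sigma_B(H))$; this handles the denominator in the definition of VCI.

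Next I would handle the numerator $\Tr[\Sigma_T^\dagger\Sigma_B]$. There are two routes. The clean route is to invoke Observation~\ref{prop: loss invariance}: it states $\min_{W,b}L(W,b,H)=\min_{W,b}L(W,b,UH)$ for any invertible $U$, and this applies to the MSE loss of Theorem~\ref{thm: mse} (the special case $\lambda_W=\lambda_b=0$ of Equation~\ref{eq: loss}). Plugging the closed form of Theorem~\ref{thm: mse} into both sides, the constants $\tfrac12-\tfrac1{2K}$ cancel and the factor $-\tfrac1{2K}$ cancels, leaving $\Tr[\Sigma_T(H)^\dagger\Sigma_B(H)]=\Tr[\Sigma_T(UH)^\dagger\Sigma_B(UH)]$. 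Combined with the rank equality, this gives $\text{VCI}(UH)=\text{VCI}(H)$ and we are done.

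Alternatively — and perhaps worth including for self-containedness since one might worry whether Observation~\ref{prop: loss invariance} already presupposes this computation — I would give a direct matrix argument: show $\Tr[(U\Sigma_T U^\top)^\dagger (U\Sigma_B U^\top)]=\Tr[\Sigma_T^\dagger\Sigma_B]$. The subtlety is that $(U\Sigma_T U^\top)^\dagger \neq U^{-\top}\Sigma_T^\dagger U^{-1}$ in general for the pseudo-inverse (that identity fails precisely when $U$ is not orthogonal, which is the Fuzziness counterexample from Observation~2.? in the excerpt). The key fact that rescues us here is that $V_B=\mathrm{span}\{\mu_k-\mu_G\}\subseteq V_T$, so $\Sigma_T^\dagger\Sigma_T$ acts as the identity on $\mathrm{col}(\Sigma_B)$; writing $\Sigma_B=\Sigma_T^{1/2} M \Sigma_T^{1/2}$ on the relevant subspace, or using the identity $\Tr[\Sigma_T^\dagger\Sigma_B]=\Tr[\Sigma_T^\dagger\Sigma_B\Sigma_T^\dagger\Sigma_T]$, lets one reduce to quantities where the $U$'s cancel cleanly against $U^{-1}$.

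I expect the main obstacle to be exactly this pseudo-inverse bookkeeping in the direct route: $(\cdot)^\dagger$ does not commute with congruence by a general invertible matrix, so the argument must explicitly use the containment $V_B\subseteq V_T$ (equivalently $\Sigma_T\succeq \Sigma_B$, from $\Sigma_T=\Sigma_B+\Sigma_W$ with $\Sigma_W\succeq 0$) to justify the cancellations. If I instead route everything through Observation~\ref{prop: loss invariance} and Theorem~\ref{thm: mse}, the proof is essentially a two-line algebraic cancellation plus the trivial rank remark, and there is no real obstacle — so the proof I would actually write is the short one, with the direct matrix computation relegated to a remark or omitted.
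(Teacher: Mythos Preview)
Your proposal is correct and matches the paper's own proof essentially line for line: the paper invokes Observation~\ref{prop: loss invariance} together with Theorem~\ref{thm: mse} to get invariance of $\Tr[\Sigma_T^\dagger\Sigma_B]$, then notes that invertible linear maps preserve $\rank(\Sigma_B)$. Your ``clean route'' is exactly this argument, and the paper does not bother with the direct pseudo-inverse computation you describe as an alternative.
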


\begin{proof}
    From Observation~\ref{prop: loss invariance}, we know that the minimum of the loss function $L(W,b,H)$ is invariant to invertible linear transformations on $H$. This implies the same invariance property of the term $\Tr[\Sigma_T^\dagger \Sigma_B]$. The proof is complete by noting that invertible linear transformation will also preserve the rank of $\Sigma_B$. 
\end{proof}

Another advantage of VCI lies in its numerical stability. This advantage primarily stems from the well-behaved nature of the spectrum of $\Sigma_T$ compared to that of $\Sigma_B$, as discussed in Section~\ref{sec: numerical stability}. Therefore, the pseudo-inverse operation does not lead to an explosive increase in VCI.
Furthermore, one can safely takes $\rank(\Sigma_B)=\min\{p,K-1\}$, since the unknown rank is not the cause of numerically instability as in Fuziness. 






\begin{figure*}[t]
     \begin{subfigure}
         \centering
         \includegraphics[width=0.24\textwidth]{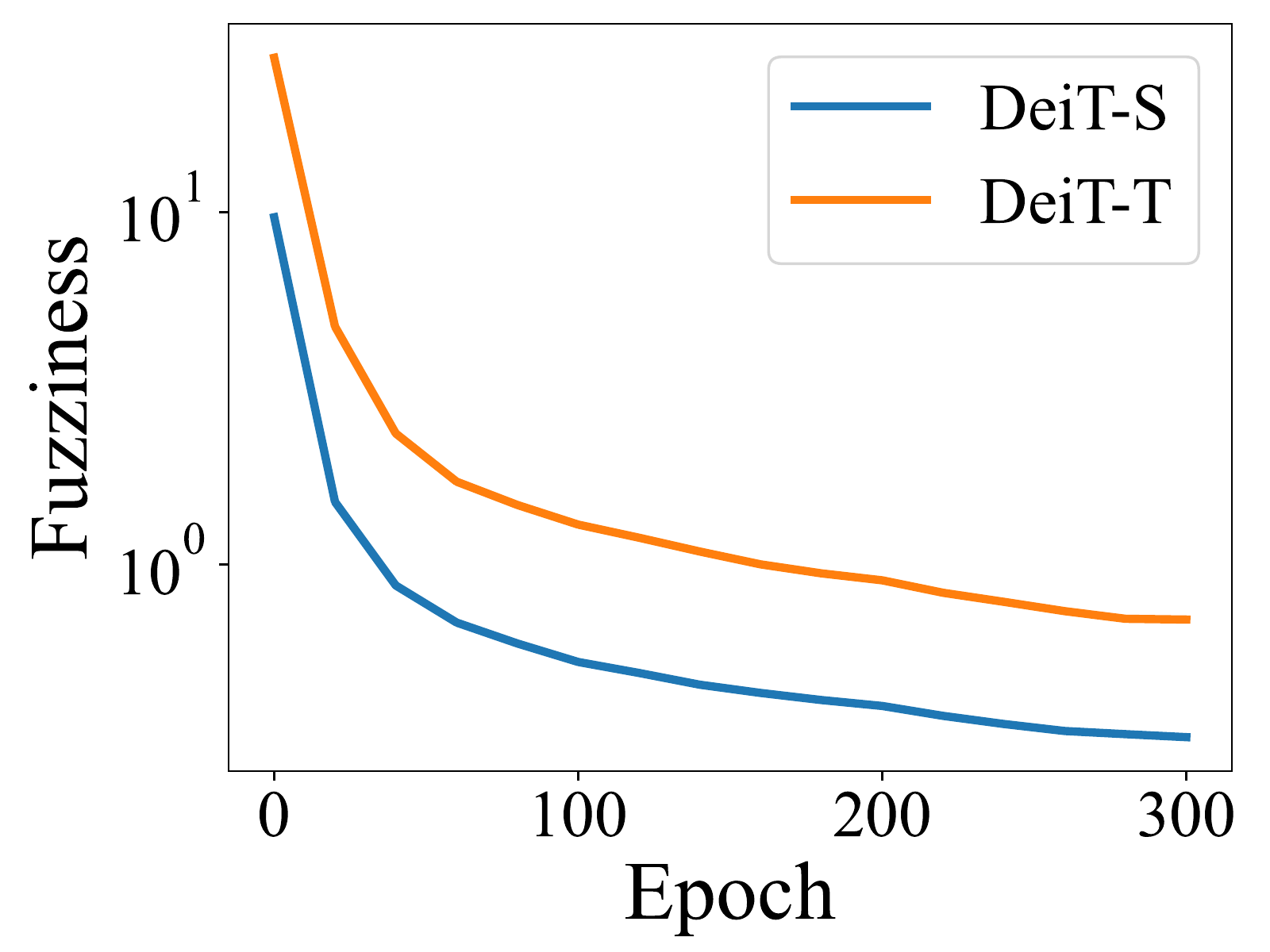}
     \end{subfigure}
     \begin{subfigure}
         \centering
         \includegraphics[width=0.24\textwidth]{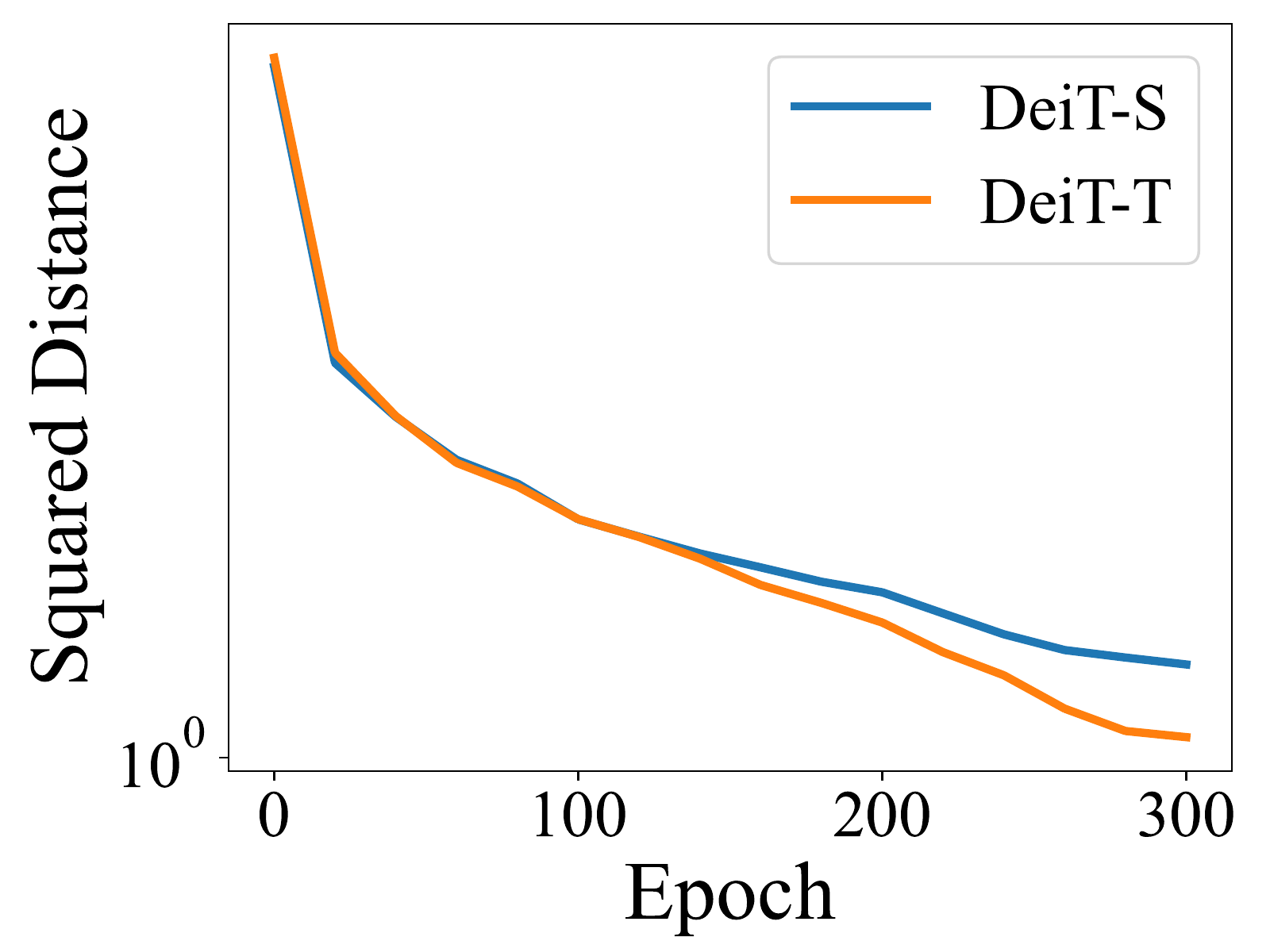}
     \end{subfigure}
     \begin{subfigure}
         \centering
         \includegraphics[width=0.24\textwidth]{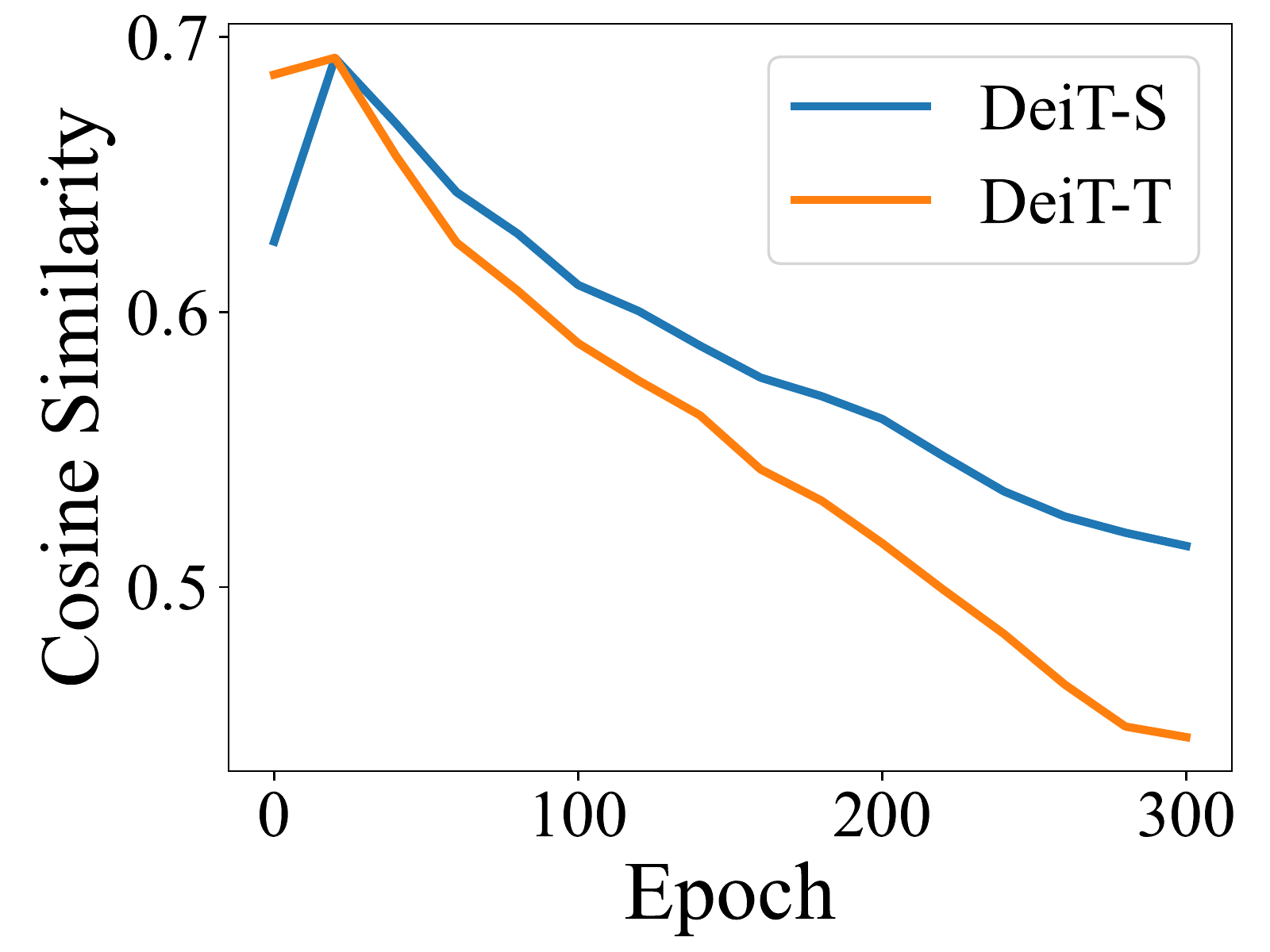}
     \end{subfigure}
     \begin{subfigure}
         \centering
         \includegraphics[width=0.24\textwidth]{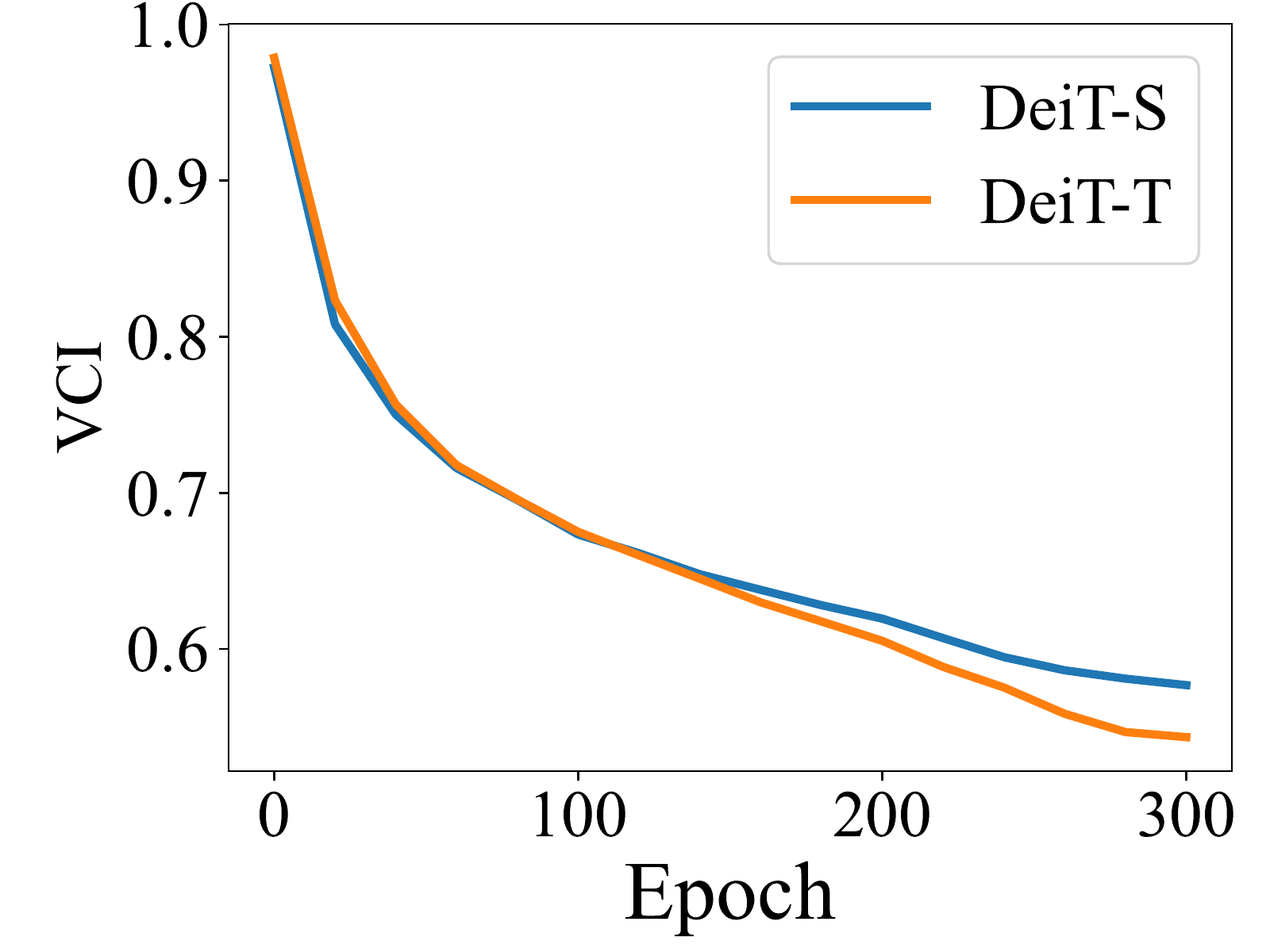}
     \end{subfigure}
        \caption{\textbf{Variability Collapse metrics of training ViT on ImageNet-1k dataset.} From left to right: Fuzziness, Squared Distance, Cosine Similarity and our proposed VCI. \textbf{Blue:} DeiT-S. \textbf{Orange:} DeiT-T. All of the four metrics indicate variability collapse happens for this setting.}
        \label{fig: deit_col}
\end{figure*}

\begin{figure*}[t]
    \centering
     \begin{subfigure}
         \centering
         \includegraphics[width=0.24\textwidth]{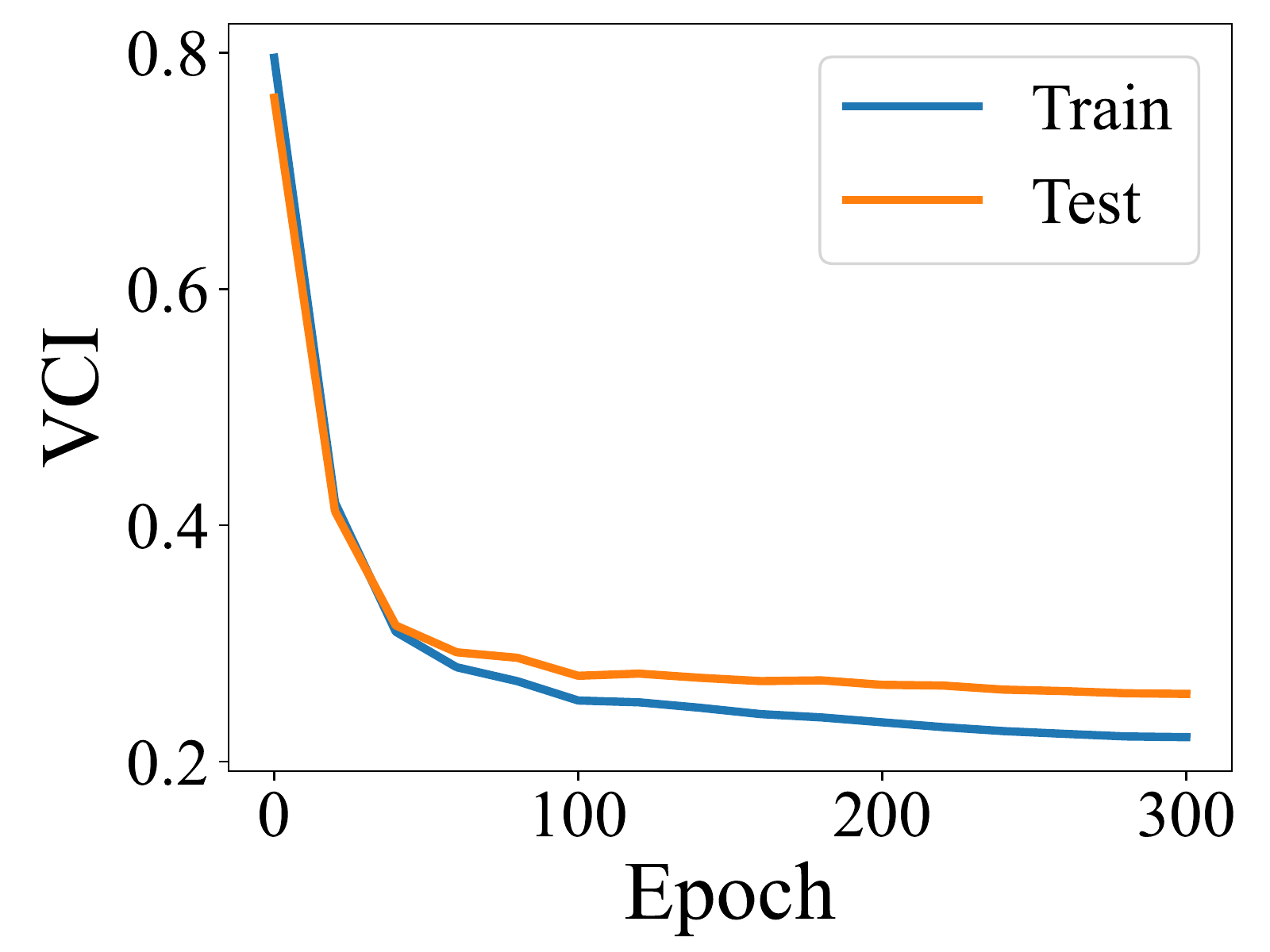}
     \end{subfigure}
     \begin{subfigure}
         \centering
         \includegraphics[width=0.24\textwidth]{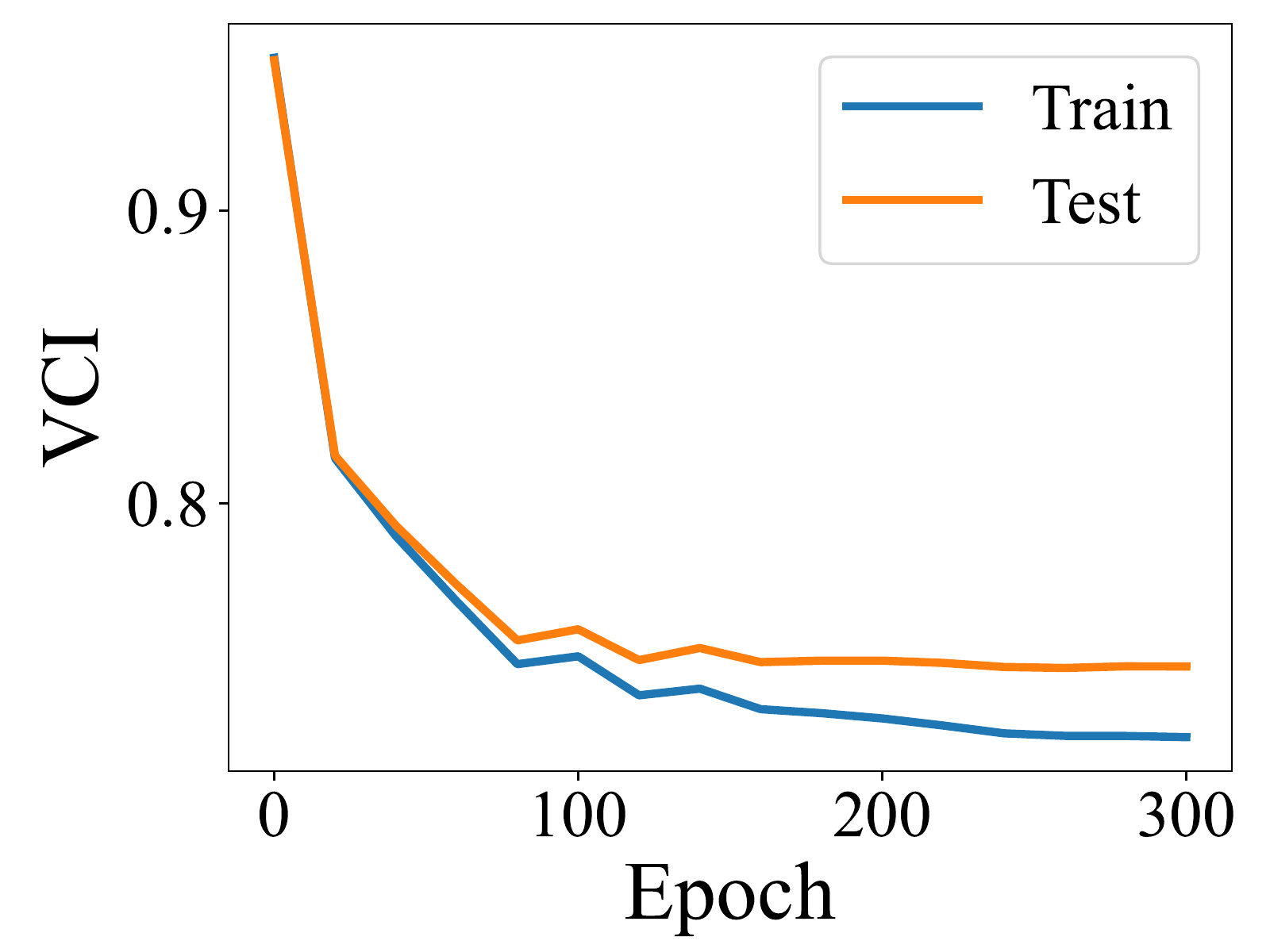}
     \end{subfigure}
     \begin{subfigure}
         \centering
         \includegraphics[width=0.24\textwidth]{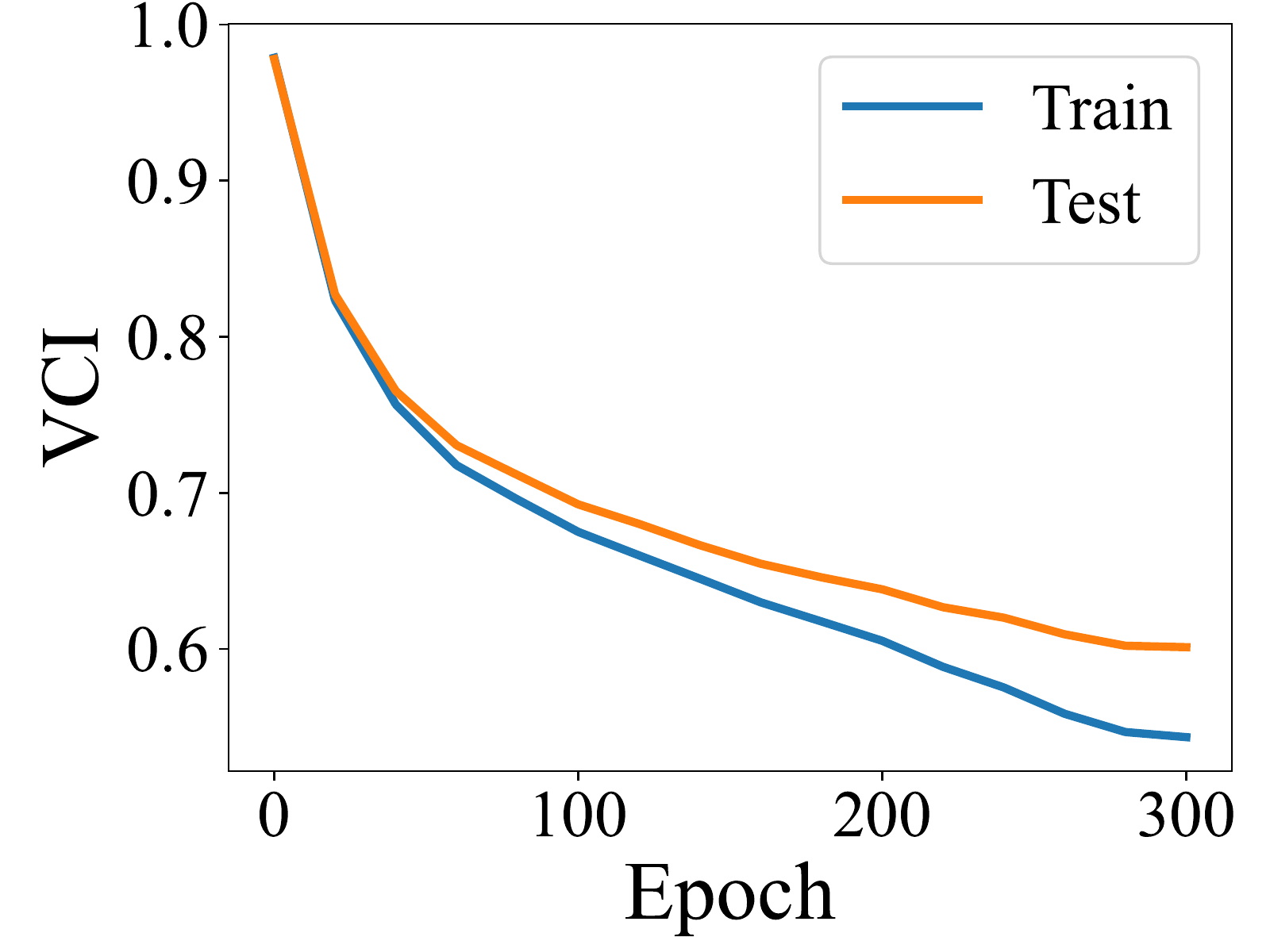}
     \end{subfigure}
        \caption{\textbf{Train Collapse and Test Collapse both happen for VCI.} Train collapse is evaluated on a 50000 subset of ImageNet-1K training dataset. Test collapse is evaluated on the full ImageNet-1K test dataset. \textbf{Left:} ResNet18 on CIFAR-10. \textbf{Middle:} ResNet50 on ImageNet-1K. \textbf{Right:} DeiT-S on ImageNet-1K.}
        \label{fig: traintest-vc}
\end{figure*}


\section{Experiment Results}\label{sec: experiments}

In this section, we present experiments that reflect the differences between the previous variability collapse metrics and our proposed VCI metric.

\begin{figure*}[t]
     \begin{subfigure}
         \centering
         \includegraphics[width=0.24\textwidth]{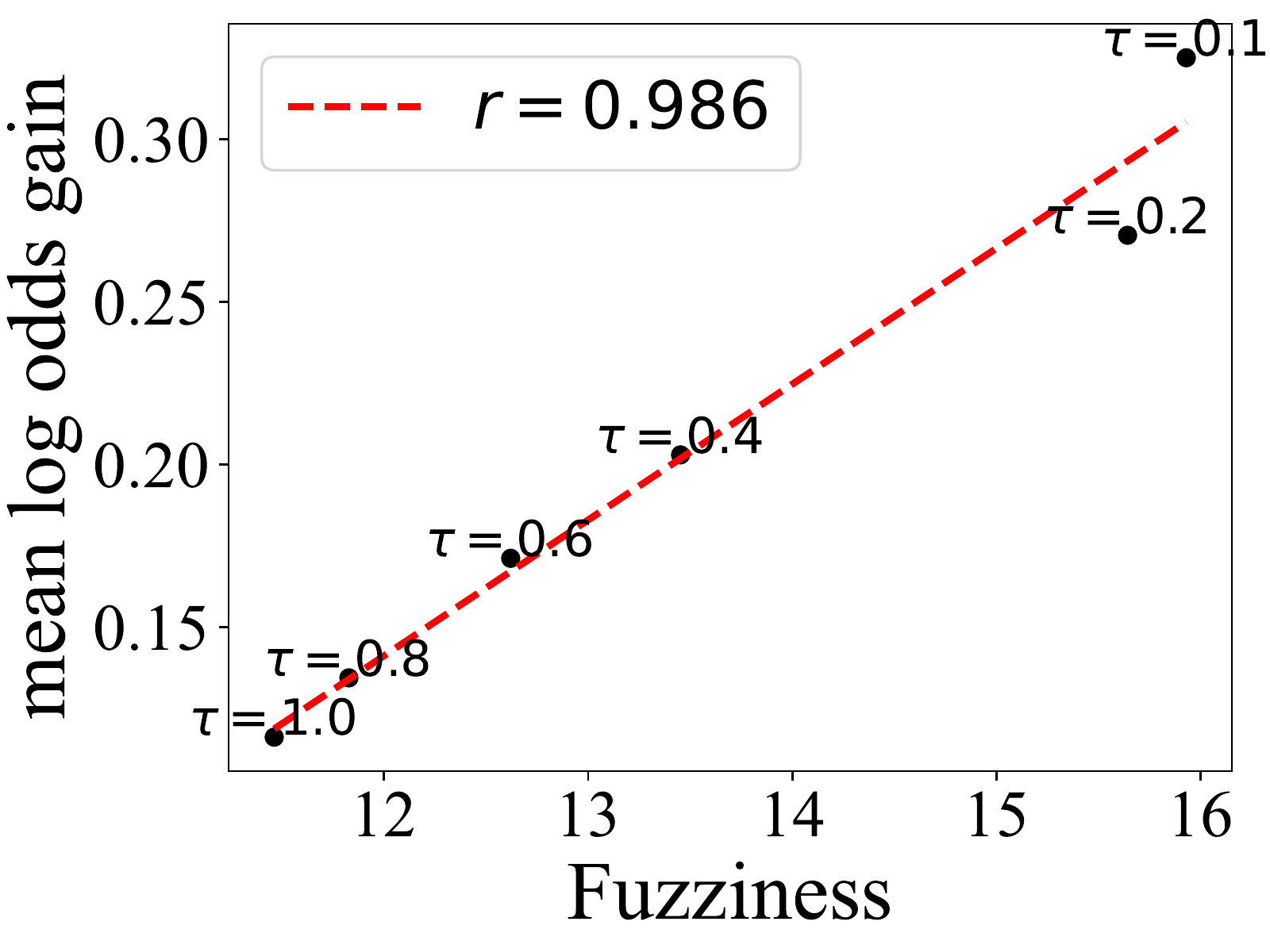}
     \end{subfigure}
     \begin{subfigure}
         \centering
         \includegraphics[width=0.24\textwidth]{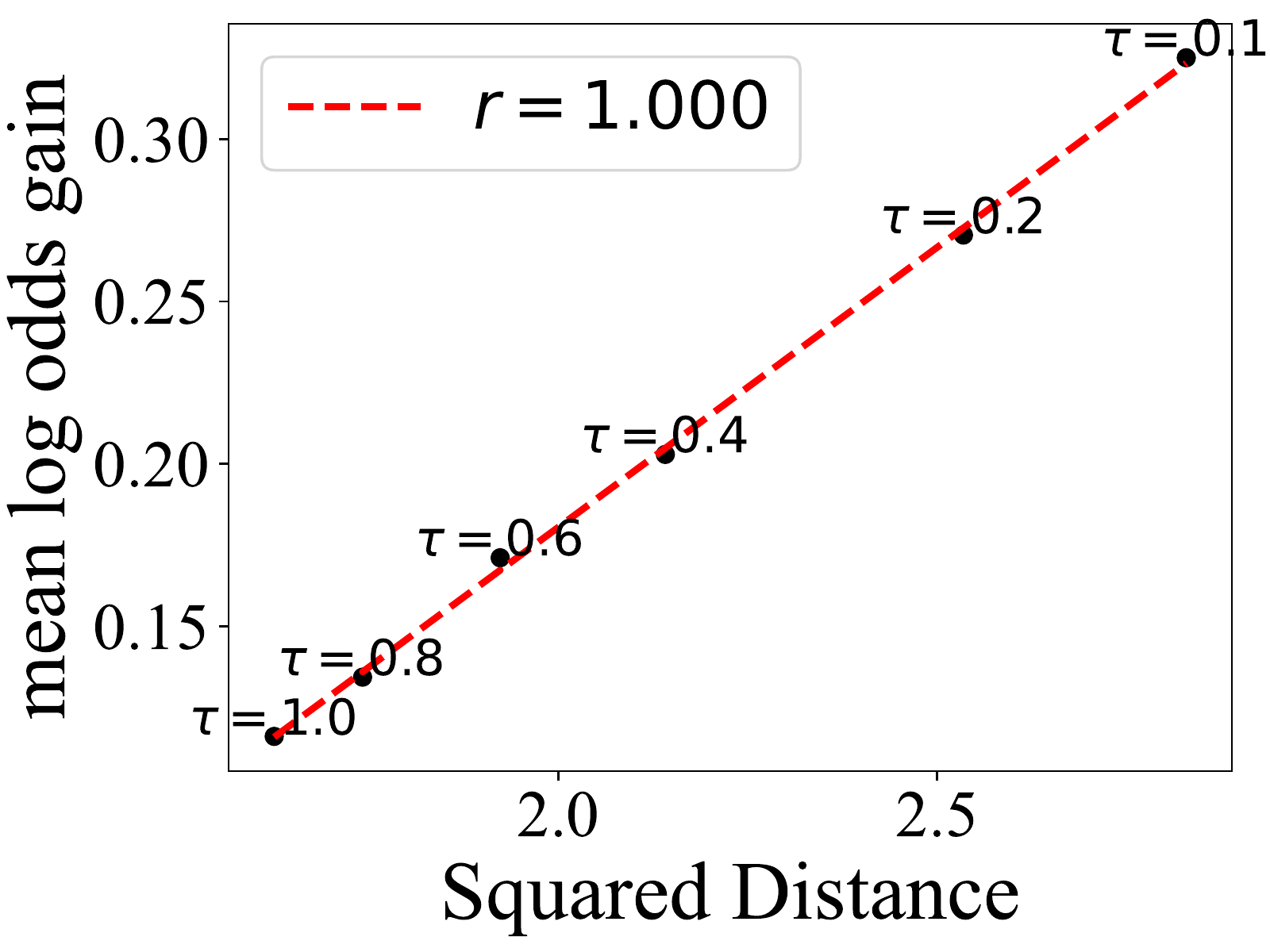}
     \end{subfigure}
     \begin{subfigure}
         \centering
         \includegraphics[width=0.24\textwidth]{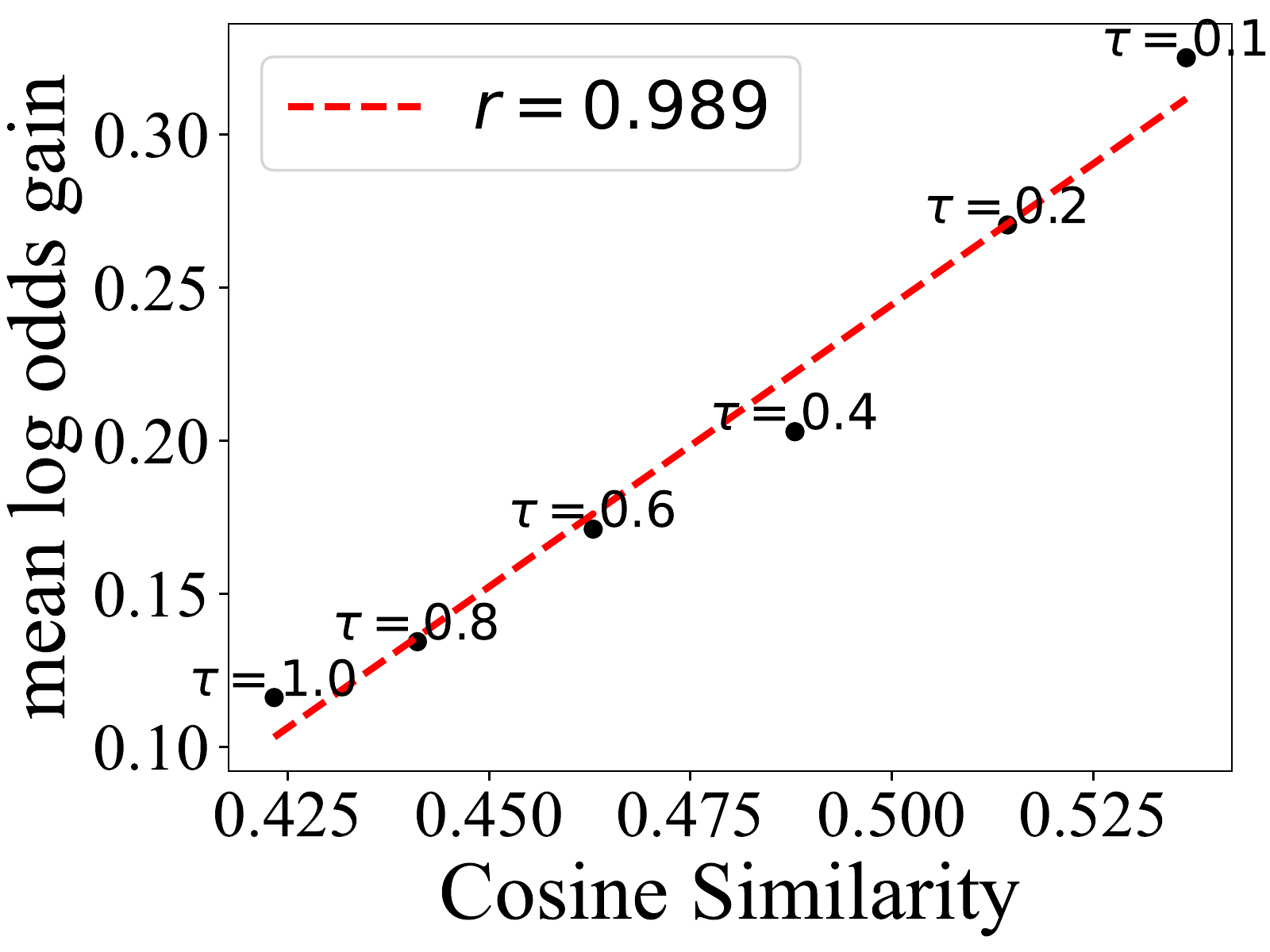}
     \end{subfigure}
     \begin{subfigure}
         \centering
         \includegraphics[width=0.24\textwidth]{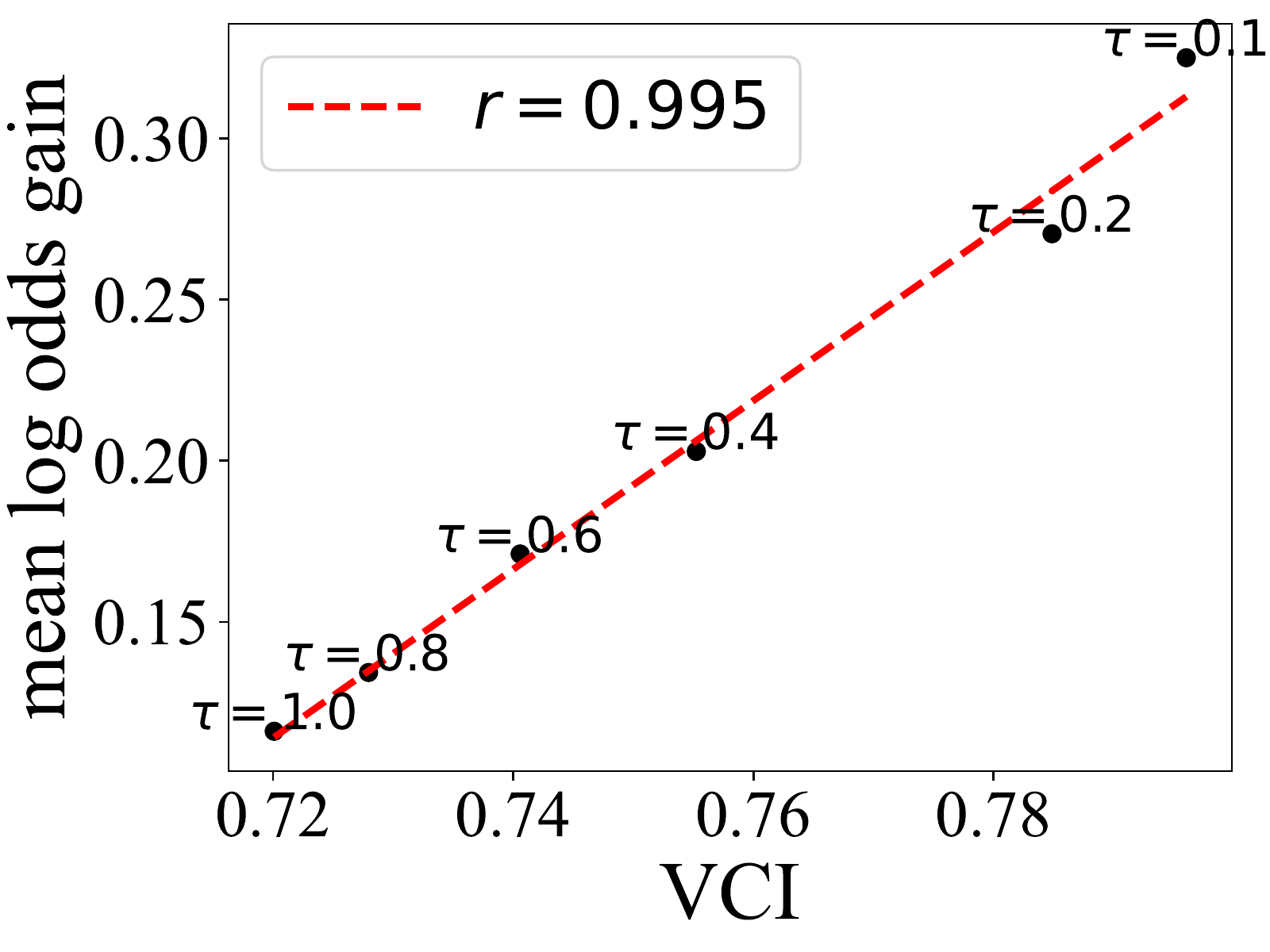}         
     \end{subfigure}
     \\
     \begin{subfigure}
         \centering
         \includegraphics[width=0.24\textwidth]{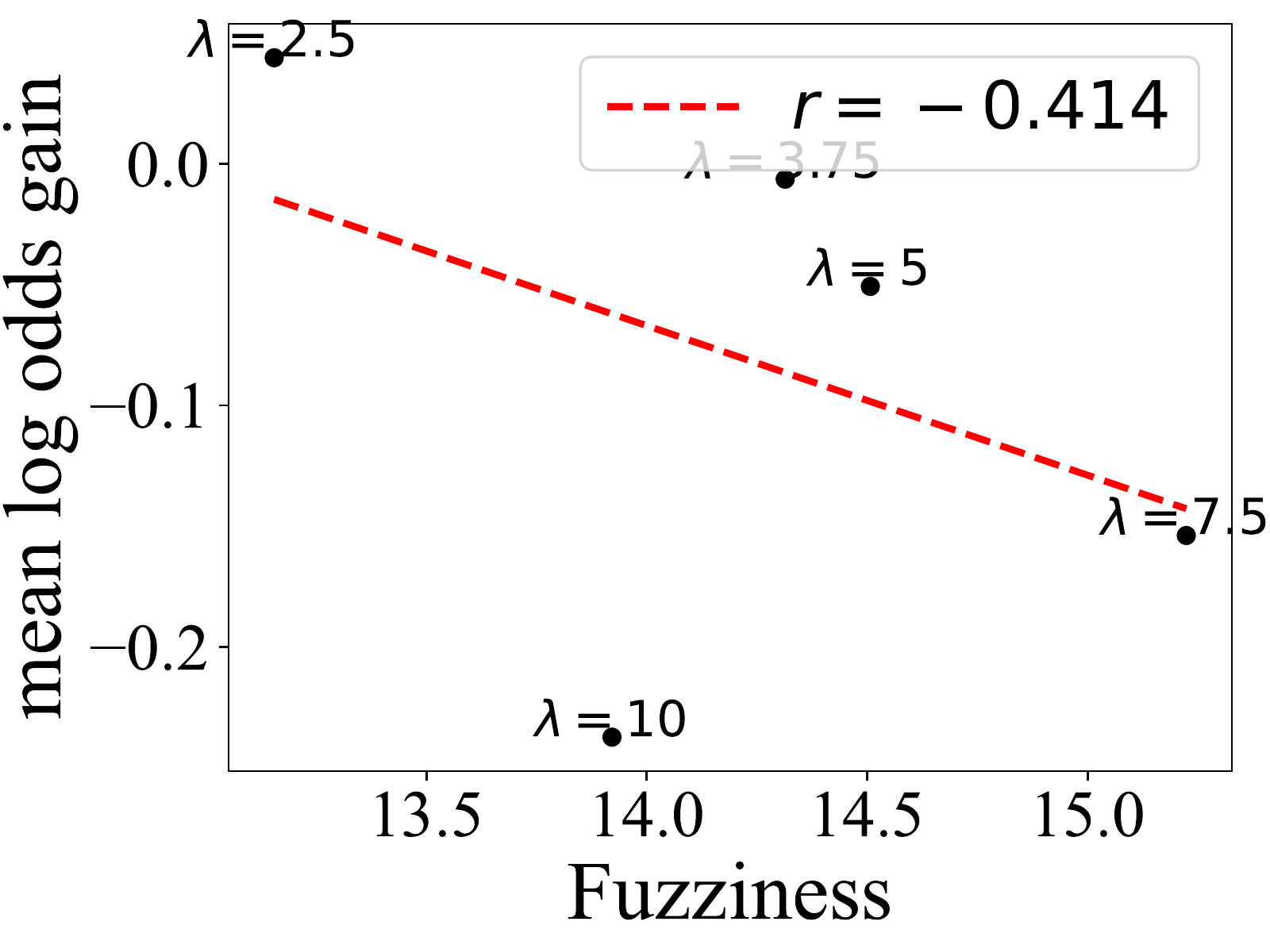}
     \end{subfigure}
     \begin{subfigure}
         \centering
         \includegraphics[width=0.24\textwidth]{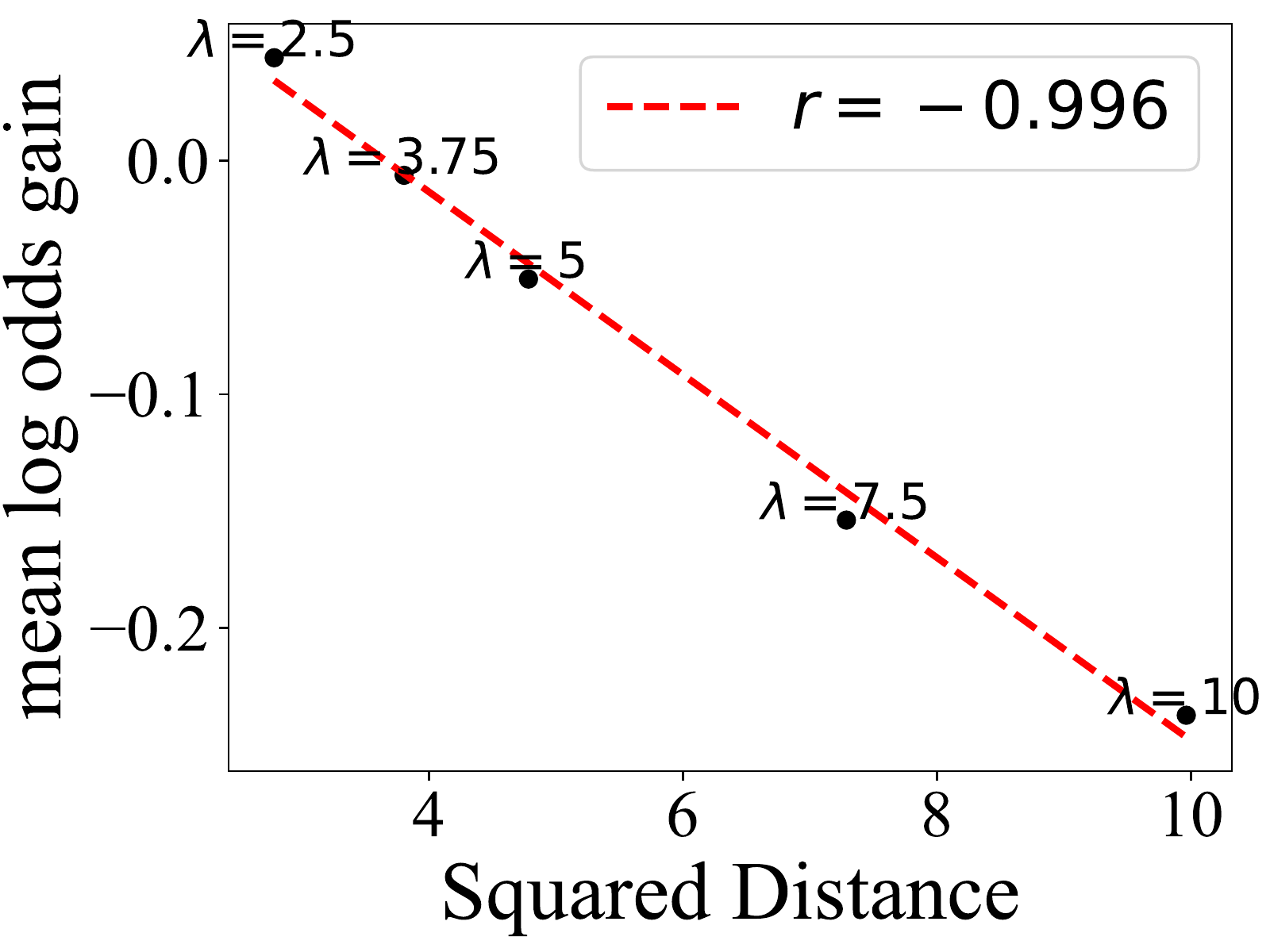}
     \end{subfigure}
     \begin{subfigure}
         \centering
         \includegraphics[width=0.24\textwidth]{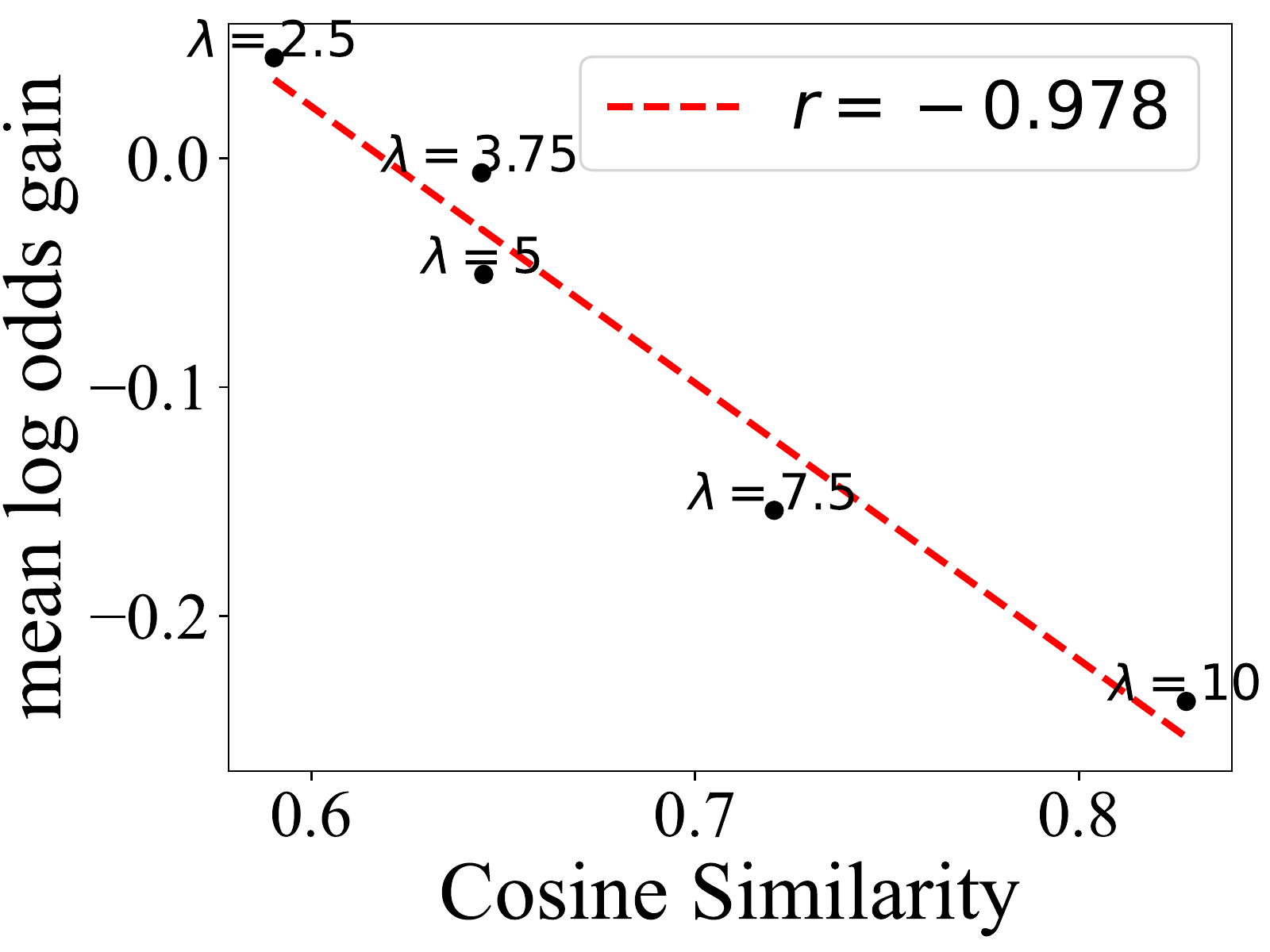}
     \end{subfigure}
     \begin{subfigure}
         \centering
         \includegraphics[width=0.24\textwidth]{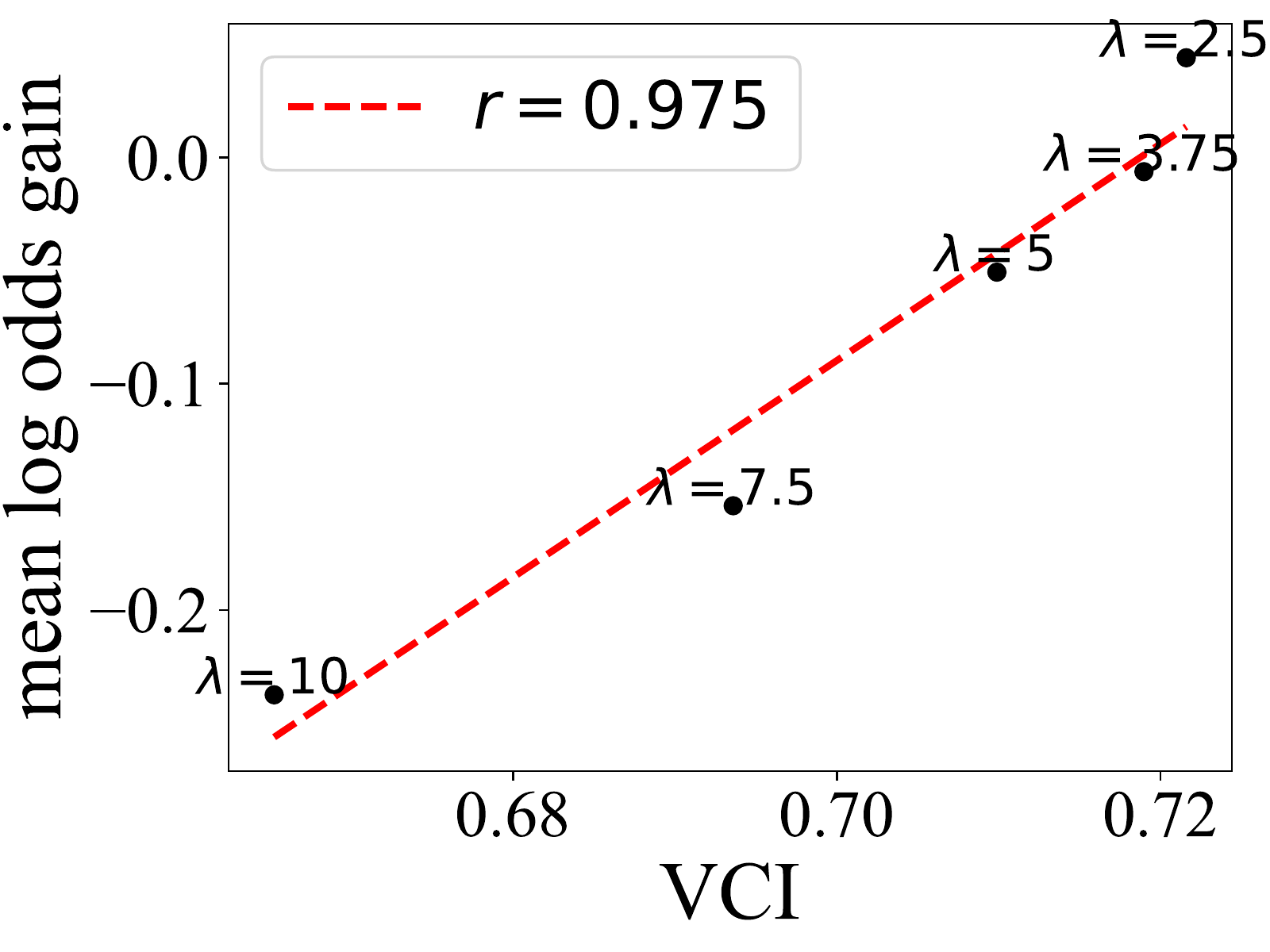}
     \end{subfigure}
        \caption{\textbf{Only VCI consistently indicates transferability in both groups of our experiments:} In each graph, x-axis represents the metric value evaluated on a 50000 subset of ImageNet train set, y-axis shows the mean log odds gain defined as in \cref{eq: mlog}, and the Pearson correlation coefficient is shown in the legend. \textbf{Top Row}: A negative relation between all variability metrics and transferability can be observed when changing the temperature $\tau$ of softmax in pretraining. 
        \textbf{Bottom Row}: Nearly opposite trends emerge on previous variability metrics when we adjust the coefficient $\lambda$ of the Cosine Similarity regularization term. In contrast, VCI maintains a positive correlation with the mean log odds gain.}
        \label{fig: temp_trans}
\end{figure*}

\subsection{Setups}\label{sec: exp_setup}

We conduct experiments to analyze the behavior of four variability collapse metrics, namely Fuzziness, Squared Distance, Cosine Similarity. We evaluate the metrics on the feature layer of ResNet18~\citep{he2016deep} trained on CIFAR10~\citep{krizhevsky2009learning} and ResNet50 / variants of ViT~\citep{dosovitskiy2020image} trained on ImageNet-1K with AutoAugment~\citep{cubuk2018autoaugment} for 300 epochs.
 ResNet18s are trained on one NVIDIA GeForce RTX 3090 GPU, ResNet50s and ViT variants are trained on four GPUs. The batchsize for each GPU is set to 256. The metric values are recorded every 20 epochs, where $\operatorname{rank}(\Sigma_B)$ in the expression of VCI is taken to be $\min \{p, K-1\}$ as stated in the previous section.

For all experiments on ResNet models, We use the implementation of ResNet from the \texttt{torchvision} library, called `ResNet v1.5'. We use SGD with Nesterov Momentum as the optimizer. The maximum learning rate is set to $0.1 \times \text{batch size}/256$. 
We try both the cosine annealing and step-wise learning rate decay scheduler.
When using a step-wise learning rate decay schedule, the learning rate is decayed by a factor of 0.975 every epoch. We also use a linear warmpup procedure of 10 epochs, starting from an initial $10^{-5}$ learning rate. The weight-decay factor is set to $8\times 10^{-5}$. For training on CIFAR10, we replace the random resized crop with random crop after padding 4 pixels on each side as in \citet{he2016deep}. Cross-Entropy loss is used if not specified otherwise.

For DeiT-T and DeitT-S~\citep{touvron2021training}, the two ViT variants used in our experiments, we use AdamW~\citep{Ilya2017decoupled} with a cosine annealing scheduler as the optimizer.
We incorporate a linear warm-up phase of 5 epochs, starting from a learning rate of $10^{-6}$ and gradually increasing to the maximum learning rate of $10^{-3}$.
For other modules of training, such as weight initialization, mixup/cutmix, stochastic depth and random erasing, we keep the same with those of~\citet{touvron2021training}.

At test time for ImageNet-1K, we resize the short side of image to a length of 256 pixels and perform a center crop. When evaluating the variability collapse metrics, we use the same data transformation as at test time. All transformed images are finally normalized with ImageNet mean and standard deviation during training, testing, and metric evaluation.

\subsection{How do Variability Collapse Metrics Evolve as the Training Proceeds}\label{sec: col_exp}
Figure \ref{fig: rn18_col} demonstrates the trend of four different variability collapse metrics when training ResNet18 on CIFAR-10.
It is observed that Squared Distance and Cosine Similarity fail to exhibit a consistent trend of collapse, as explained in \cref{sec: not collapse}.
On the other hand, Fuzziness and VCI show a decreasing trend across these settings.

The results for ResNet50s trained on ImageNet are provided in Figure~\ref{fig: rn50_col}. 
In contrast to the case of ResNet18 on CIFAR10, all evaluated metrics consistently demonstrate a decreasing curve since the ratio of the $V_B^\perp$ part becomes smaller with a smaller $p/K$ value, as shown in Figure \ref{fig: rn50_basic_sqinout}.
Additionally, it is observed that neural networks trained with MSE loss exhibit a higher level of collapse compared to those trained with CE loss, which aligns with the findings of \citet{kornblith2021better}.

The results for ViT variants trained on ImageNet are given in Figure~\ref{fig: deit_col}.
For DeiT-T and DeitT-S with embedding dimensions of 192 and 384, $V_B$ becomes the whole feature space due to $p < K$,  leading to a clearer trend of variability collapse since $V_B^\perp$ becomes $0$.

Finally, we show that test collapse also happens for VCI in \cref{fig: traintest-vc}. 
This indicates that variability collapse is a phenomenon that reflects the properties of underlying data distributions, rather than being solely caused by overfitting the training datasets.
We refer to \cref{sec: addional exp} for comparisons between train collapse and test collapse for other variability metrics.

\subsection{Only VCI consistently Indicates Transferability}\label{sec: trans_exp}

In this section, we investigate the correlation between variability metrics and transferability through two sets of experiments.  We pretrain ResNet50 on ImageNet-1K with a single varying hyperparameter specified within each group. We evaluate the pretrained neural representations using linear probing~\citep{kornblith2019better, chen2020simple} on 10 downstream datasets, including Oxford-IIIT Pets~\citep{parkhi2012cats}, Oxford 102 Flowers~\citep{nilsback2008automated}, FGVC Aircraft~\citep{maji2013fine}, Stanford Cars~\citep{Krause2013CollectingAL}, the Describable Textures Dataset (DTD)~\citep{cimpoi2014describing}, Food-101 dataset~\citep{bossard2014food}, CIFAR-10 and CIFAR-100~\citep{krizhevsky2009learning}, Caltech-101~\citep{feifei2004caltech}, the SUN397 scene dataset~\citep{xiao2010sun}.
We use L-BFGS to train the linear classifier, with the optimal $L_2$-penalty strength determined by searching through 97 logarithmically spaced values between $10^{-6}$ and $10^6$ on a validation set. We provide the raw experiment results in~\cref{sec: raw_data}.

We use the following \newterm{mean log odds gain}
\begin{equation}\label{eq: mlog}
\operatorname{MLOG} = \frac{1}{10} \sum_{i=1}^{10} \log \frac{p_i}{1-p_i} - \log \frac{p_{\text{pretrain}}}{1-p_{\text{pretrain}}}
\end{equation}
to measure the transferability of a neural representation, where $p_{\text{pretrain}}$ is the final test accuracy in pretraining.
Compared with~\citet{kornblith2019better}, we   
subtract the log odds of the pretrain accuracy from the mean log odds of linear classification accuracy over the downstream tasks, to isolate the impact of variability collapse on transfer performance.

In the first group, we change the temperature $\tau$ in the softmax function 
\vskip -20pt
\begin{small}
\begin{align*}
\operatorname{Softmax}_\tau (z) = \left(\frac{\exp(\frac{1}{\tau}z_1)}{\sum_{k=1}^K \exp(\frac{1}{\tau}z_k)}, \cdots, \frac{\exp(\frac{1}{\tau}z_K)}{\sum_{k=1}^K \exp(\frac{1}{\tau}z_k)}\right).
\end{align*}
\end{small}
The results of the first group of experiments are shown in the top row of \cref{fig: temp_trans}. The results are consistent with the findings in \cite{kornblith2021better}, as all considered metrics show a negative relation between variability collapse and transfer performance.

In the second group of experiments, we introduce regularization to control the collapse behavior of neural networks~\citep{kornblith2021better}. The regularization term we used is the average within-class cosine similarity divided by the number of data points of each class in the batch.
By varying the value of $\lambda$ multiplied to the regularization term, we investigate whether the observed correlation in the first group still holds true. The bottom row of \cref{fig: temp_trans} shows that for the three previous metrics, the correlation changes from positive to negative, or vice versa.
However, a strong positive correlation consistently holds between VCI and transferability. 
Therefore, VCI serves as an effective indicator of transfer performance, compared to other variability collapse metrics.

\section{Conclusions and Future Directions}
In this paper, we study the variability collapse phenomenon of neural networks, and propose the VCI metric as a quantitative characterization. We demonstrate that VCI enjoys many desired properties, including invariance and numerical stability, and verify its usefulness via extensive experiments. 

Moving forward, there are several promising directions for future research. 
Firstly, it would be beneficial to explore the applicability of VCI to a broader range of training recipes and architectures, by analyzing its performance using alternative network architectures, training methodologies, and datasets.
Secondly , it would be valuable to conduct theoretical investigations into the relationship between variability collapse and transfer accuracy. Understanding the mechanisms and principles behind this could provide insights to designing better transfer learning algorithms.


\section*{Acknowledgements}

The authors would like to acknowledge the support from the 2030 Innovation Megaprojects of China (Programme on New Generation Artificial Intelligence) under Grant No. 2021AAA0150000.

\bibliography{example_paper}
\bibliographystyle{icml2023}

\newpage
\appendix
\onecolumn

\section{Proofs}

\subsection{Proof of Proposition~\ref{prop: not collapse}}
\label{apx: not collapse}
\begin{proof}
    Since $p>K$, we can find a nonzero vector $v\in\BR^p$, such that $Wv=\mathbf{0}$. For some $\lambda>0$, define the elements of $H^\prime$ as
    $$
    h_{k,i}^\prime = 
    \begin{cases}
    h_{k,i}+\lambda v &  i=1, \\
    h_{k,i} & 2\le i \le N.
    \end{cases}
    $$
    For such an $H^\prime$, we have $Wh_{k,i}=Wh_{k,i}^\prime$, and therefore $L(W,b,H)=L(W,b,H^\prime)$. Furthermore, we can calculate $\mu_k(H^\prime)=\mu_k(H)+\frac{\lambda}{N}v$, and 
    $\mu_G(H^\prime)=\mu_G(H)+\frac{\lambda}{N}v$, which implies that $\Sigma_B(H^\prime)=\Sigma_B(H)$.

    Next, we calculate $\Sigma_W(H^\prime)$:
    \begin{align*}
        \Sigma_W(H^\prime)&=\frac{1}{KN}\sum_{k\in[K], i\in[N]}\left(h_{k,i}^\prime-\mu_k(H^\prime)\right)\left(h_{k,i}^\prime-\mu_k(H^\prime)\right)^\top \\
        &=\frac{1}{KN}\sum_{k=1}^K\left[
        \left(h_{k,1}+\lambda v-\mu_k(H)-\frac{\lambda}{N}v\right)\left(h_{k,1}+\lambda v-\mu_k(H)-\frac{\lambda}{N}v\right)^\top 
        \right.\\& \left.\quad+
        \sum_{i=2}^N\left(h_{k,i}-\mu_k(H)-\frac{\lambda}{N}v\right)\left(h_{k,i}-\mu_k(H)-\frac{\lambda}{N}v\right)^\top\right]
        \\&=\frac{1}{KN}\sum_{k=1}^K\left[
        \left(h_{k,1}-\mu_k(H)\right)\left(h_{k,1}-\mu_k(H)\right)^\top
        +\frac{\lambda(N-1)}{N}v(h_{k,1}-\mu_k(H))^\top
        \right.\\& \left.\quad+\frac{\lambda(N-1)}{N}(h_{k,1}-\mu_k(H))v^\top 
        +\frac{\lambda^2(N-1)^2}{N^2}vv^\top+
        \sum_{i=2}^N\left(h_{k,i}-\mu_k(H)\right)\left(h_{k,i}-\mu_k(H)\right)^\top
        \right.\\& \left.\quad-\frac{\lambda}{N}v\sum_{i=2}^N\left(h_{k,i}-\mu_k(H)\right)^\top-\frac{\lambda}{N}\sum_{i=2}^N\left(h_{k,i}-\mu_k(H)\right)v^\top+\frac{\lambda^2(N-1)}{N^2}vv^\top\right]
        \\&=\frac{1}{KN}\sum_{k=1}^K\left[\sum_{i=1}^N\left(h_{k,i}-\mu_k(H)\right)\left(h_{k,i}-\mu_k(H)\right)^\top+\frac{\lambda}{N}v\left((N-1)h_{k,1}-\sum_{i=2}^N h_{k,i}\right)^\top
        \right.\\& \left.\quad+\frac{\lambda}{N}\left((N-1)h_{k,1}-\sum_{i=2}^N h_{k,i}\right)v^\top+\frac{\lambda^2(N-1)}{N}vv^\top\right]\\
        &=\Sigma_W(H)+\frac{\lambda}{KN^2}\left[v\sum_{k=1}^K\left((N-1)h_{k,1}-\sum_{i=2}^Nh_{k,i}\right)^\top
        \right.\\& \left.\quad+\sum_{k=1}^K\left((N-1)h_{k,1}-\sum_{i=2}^Nh_{k,i}\right)v^\top\right]+\frac{\lambda^2(N-1)}{N^2}vv^\top.
    \end{align*}
    Since $VV^\top$ is a nonzero positive semidefinite matrix, we can let $\lambda\to\infty$ and get $\|\Sigma_W(H^\prime)\|_F\to\infty$.
\end{proof}

\subsection{Proof of Theorem~\ref{thm: mse}}
\label{apx: mse}

\begin{proof}
Without loss of generality, we can assume that $\mu_G=0$, since we can replace $b$ with $b-W\mu_G$.
The loss contributed by the $i$-th datapoint in the $k$-th class can be calculated as
\begin{align*}
    L_{k,i}(W,b)&\triangleq \frac{1}{2}\|Wh_{k,i}+b-e_k\|^2\\
    &=\frac{1}{2}\left[h_{k,i}^\top W^\top W h_{k,i}+2 (b -e_k)^\top W h_{k,i}+(b-e_k)^\top (b-e_k)\right]\\&=\frac{1}{2}\Tr\left[h_{k,i}h_{k,i}^\top W^\top W
\right]+b^\top W h_{k,i}- e_k^\top W 
h_{k,i}+\frac{1}{2}b^\top b-e_k^\top b+\frac{1}{2}.
\end{align*}
The total loss function can be calculated as 
\begin{align*}
    L(W,b)&=\frac{1}{KN}\sum_{k\in[K],i\in[N]} L_{k,i}(W,b)\\
    &=\frac{1}{2}\Tr\left[\Sigma_{T}W^\top W\right]-\frac{1}{K}\sum_{k=1}^K e_k^\top W\mu_k+\frac{1}{2}b^\top b-\frac{1}{K}\mathbf{1}^\top b+\frac{1}{2}.
\end{align*}

The loss function is convex and quadratic, whose optima can be obtained by first order stationary condition. 
The first order condition with regard to $W$ can be expressed as 
\begin{align*}
    \nabla_W L(W,b)=W\Sigma_{T}-\frac{1}{K}\sum_{k=1}^K e_k\mu_k^\top=0.
\end{align*}

To solve this equality, we make a little digress and prove the following bias-variance decomposition:
\begin{align}\label{eq: decomposition}
    \begin{aligned}
    \Sigma_{T}&=\frac{1}{KN}\sum_{k\in[K],i\in[N]} h_{k,i}h_{k,i}^\top\\
    &=\frac{1}{KN}\sum_{k\in[K],i\in[N]}(h_{k,i}-\mu_k+\mu_k)(h_{k,i}-\mu_k+\mu_k)^\top\\
    &=\frac{1}{KN}\sum_{k\in[K],i\in[N]} (h_{k,i}-\mu_k)(h_{k,i}-\mu_k)^\top+
    \frac{2}{KN}\sum_{k\in[K],i\in[N]} (h_{k,i}-\mu_k)\mu_k^\top+\frac{1}{K}\sum_{k\in[K]} \mu_k\mu_k^\top\\
    &=\Sigma_{B}+\Sigma_W,
\end{aligned}
\end{align}
From this we know that $\Sigma_{T}-\Sigma_{B}=\Sigma_W$ is positive semidefinite. This implies that $\mu_1,\cdots,\mu_k $ lies in the column space $V_T$ of $\Sigma_{T}$.

Let $r=\text{rank}(\Sigma_{T})$. There exists a eigenvalue decomposition $\Sigma_{T}=U\Sigma U^\top$, such that $\Sigma=\text{diag}(s_1,\cdots s_r,0,\cdots, 0)$, and $U=(u_1, \cdots u_d)$ satisfying
\begin{align*}
     1.\; \ u_i\perp u_j,\ i\neq j;\quad 2.\; \|u_i\|_2=1;\quad 3. \;u_i\perp \mu_k,\ k\le K,i>K.
\end{align*}
Therefore
\begin{align*}
    \frac{1}{K}\left(\sum_{k=1}^K e_k \mu_k^\top \right) \Sigma_{T}^\dagger \Sigma_{T}&=\frac{1}{K}\left(\sum_{k=1}^K e_k \mu_k^\top \right)U\Sigma^\dagger
    U^\top U\Sigma U^\top\\&=\frac{1}{K}\left(\sum_{k=1}^K e_k \mu_k^\top \right)\cdot \left(\sum_{i=1}^{r}u_i u_i^\top \right)\\&=\frac{1}{K}\left(\sum_{k=1}^K e_k \mu_k^\top \right)-\frac{1}{K}\left(\sum_{k=1}^K e_k \mu_k^\top \right)\cdot \left(\sum_{i=r+1}^d u_i u_i^\top \right)\\&=\frac{1}{K}\left(\sum_{k=1}^K e_k \mu_k^\top \right).
\end{align*}
This implies that $W=\frac{1}{K}\left(\sum_{k=1}^K e_k \mu_k^\top \right)\Sigma_{T}^\dagger$ satisfies the first order optimality condition for $W$. 
It is also easy to see that $b=\frac{1}{K}\mathbf{1}$ satisfies the first order optimality condition of $b$. Therefore, $L(W,b)$ attains its minimum at
\begin{align*}
    W=\frac{1}{K}\left(\sum_{k=1}^K e_k \mu_k^\top \right)\Sigma_{T}^\dagger,\quad b=\frac{1}{K}\mathbf{1}, 
\end{align*}
with optimal value 
\begin{align*}
    \min_{W,b}L(W,b)&=\frac{1}{2}\Tr\left[\Sigma_{T}\cdot \Sigma_{T}^\dagger\cdot\frac{1}{K}\left(\sum_{k=1}^K \mu_k e_k^\top \right)\cdot \frac{1}{K}\left(\sum_{k=1}^K e_k \mu_k^\top \right)\Sigma_{T}^\dagger\right]
    \\&\quad-\Tr\left[\frac{1}{K}\left(\sum_{k=1}^K \mu_k e_k^\top \right)\cdot \frac{1}{K}\left(\sum_{k=1}^K e_k \mu_k^\top \right)\Sigma_{T}^\dagger\right]+\frac{1}{2}-\frac{1}{2K}\\
    &=-\frac{1}{2K}\Tr
    \left[\frac{1}{K}\left(\sum_{k=1}^K \mu_k\mu_k^\top \right)\cdot \Sigma_{T}^\dagger\right]+\frac{1}{2}-\frac{1}{2K}\\&=-\frac{1}{2K} \Tr\left[\Sigma_{T}^\dagger \Sigma_{B}\right]+\frac{1}{2}-\frac{1}{2K}
\end{align*}
where we use $\Sigma_{T}^\dagger \Sigma_{T} \Sigma_{T}^\dagger=\Sigma_{T}^\dagger$ in the second equality.
\end{proof}

\subsection{Proof for Theorem~\ref{thm: upper bound}}
\label{apx: upper bound}
We need the following lemmas on block matrices.
\begin{lemma}\label{lem: inv}
Let $A\in\BR^{d_1\times d_1},B\in\BR^{d_1\times d_2},C\in\BR^{d_2\times d_1},D\in\BR^{d_2\times d_2}$. If $\left[\begin{array}{ll}
{A} & {B} \\
{C} & {D}
\end{array}\right]$ and $D$ are invertible, then ${A}-{B D}^{-1} {C}$ is invertible and 
\begin{align}\label{eq: inv}
    \left[\begin{array}{ll}
{A} & {B} \\
{C} & {D}
\end{array}\right]^{-1}=\left[\begin{array}{cc}
\left({A}-{B D}^{-1} {C}\right)^{-1} & -\left({A}-{B D}^{-1} {C}\right)^{-1} {B D}^{-1} \\
-{D}^{-1} {C}\left({A}-{B D}^{-1} {C}\right)^{-1} & {D}^{-1}+{D}^{-1} {C}\left({A}-{B D}^{-1} {C}\right)^{-1} {B D}^{-1}
\end{array}\right].
\end{align}
\end{lemma}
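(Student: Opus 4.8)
The plan is to prove the lemma via an explicit block triangular factorization of $M := \begin{bmatrix} A & B \\ C & D \end{bmatrix}$, from which both the invertibility of the Schur complement $S := A - BD^{-1}C$ and the closed form for $M^{-1}$ in Equation~\ref{eq: inv} fall out at once.

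First I would check, by a direct block multiplication using only $DD^{-1} = I$, the factorization
\[
\begin{bmatrix} A & B \\ C & D \end{bmatrix}
= \begin{bmatrix} I & BD^{-1} \\ 0 & I \end{bmatrix}
\begin{bmatrix} S & 0 \\ 0 & D \end{bmatrix}
\begin{bmatrix} I & 0 \\ D^{-1}C & I \end{bmatrix}.
\]
The outer two factors are unit block-triangular, hence invertible, with inverses obtained by flipping the sign of the off-diagonal block. Since $M$ is invertible by hypothesis, the middle factor $\begin{bmatrix} S & 0 \\ 0 & D \end{bmatrix}$ equals a product of invertible matrices and is therefore invertible; being block-diagonal, it is invertible if and only if both $S$ and $D$ are, and $D$ is invertible by assumption, so $S = A - BD^{-1}C$ is invertible. (Equivalently, taking determinants in the factorization gives $\det M = \det(S)\det(D)$, whence $\det S \neq 0$.)

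Next, inverting the factorization in reverse order gives
\[
M^{-1} = \begin{bmatrix} I & 0 \\ -D^{-1}C & I \end{bmatrix}
\begin{bmatrix} S^{-1} & 0 \\ 0 & D^{-1} \end{bmatrix}
\begin{bmatrix} I & -BD^{-1} \\ 0 & I \end{bmatrix},
\]
and carrying out this block multiplication yields
\[
M^{-1} = \begin{bmatrix} S^{-1} & -S^{-1}BD^{-1} \\ -D^{-1}CS^{-1} & D^{-1} + D^{-1}CS^{-1}BD^{-1} \end{bmatrix},
\]
which is exactly the right-hand side of Equation~\ref{eq: inv} after substituting $S = A - BD^{-1}C$. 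As an independent cross-check one may instead verify directly that the matrix in Equation~\ref{eq: inv} multiplies $M$ to the identity on both sides; in that route, however, the invertibility of $S$ must be argued separately (e.g.\ from the determinant identity above).

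I do not expect any genuine obstacle: this is a classical identity, and the only points needing attention are the routine bookkeeping of the block products and ensuring that the invertibility of $S$ is \emph{derived} (from the factorization or the determinant identity) rather than presupposed. The factorization approach is the most economical because it disposes of both assertions of the lemma simultaneously.
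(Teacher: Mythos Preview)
Your proposal is correct and takes essentially the same approach as the paper. Both deduce the invertibility of $S = A - BD^{-1}C$ from a block-triangular factorization (the paper uses the one-sided elimination that right-multiplies $M$ by a unit lower-triangular block matrix to obtain a block upper-triangular matrix with $S$ and $D$ on the diagonal, you use the full three-factor LDU decomposition) and then establish the inverse formula---the paper by direct verification, you by inverting the factorization.
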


\begin{proof}
The invertibility of $A$ and $D$ are obvious. The following identity gives the invertibility of ${A}-{B D}^{-1} {C}$:
\begin{align*}
    \left[\begin{array}{cc}
A & B \\
C & D
\end{array}\right]\left[\begin{array}{cc}
\mathbf{I} & \mathbf{0} \\
-D^{-1} C & \mathbf{I}
\end{array}\right]=\left[\begin{array}{cc}
A-B D^{-1} C & B \\
\mathbf{0} & D
\end{array}\right]
\end{align*}
The equation~\ref{eq: inv} can be check by direct calculation.

\end{proof}

\begin{lemma}\label{lem: positive}
    Let $A\in\BR^{d_1\times d_1},B\in\BR^{d_1\times d_2},C\in\BR^{d_2\times d_2}$. If $\left[\begin{array}{ll}
{A} & {B} \\
{B}^\top  & {C}
\end{array}\right]\succ \mathbf{0}$, then  ${A}-{B C}^{-1} {B}^\top\succ\mathbf{0}$.
\end{lemma}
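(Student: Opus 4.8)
The plan is to prove this standard fact about Schur complements by substituting a well-chosen vector into the quadratic form of the block matrix, so that no use of the explicit $2\times2$ inverse formula is needed. Before the main step I would record two preliminary observations. \textbf{(i)} $C$ is invertible: restricting the positive-definite quadratic form of $\left[\begin{smallmatrix} A & B \\ B^\top & C\end{smallmatrix}\right]$ to vectors of the form $(\mathbf{0},y)$ gives $y^\top C y>0$ for every $y\neq\mathbf{0}$, so $C\succ\mathbf{0}$ and $C^{-1}$ exists. \textbf{(ii)} The matrix $A-BC^{-1}B^\top$ is symmetric, since $A=A^\top$ and $(BC^{-1}B^\top)^\top = BC^{-1}B^\top$ using $C=C^\top$.

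The main step is then the following. Fix an arbitrary $x\in\BR^{d_1}$ with $x\neq\mathbf{0}$, set $y=-C^{-1}B^\top x$, and compute
\begin{align*}
\begin{bmatrix} x \\ y \end{bmatrix}^{\!\top}\begin{bmatrix} A & B \\ B^\top & C \end{bmatrix}\begin{bmatrix} x \\ y \end{bmatrix}
&= x^\top A x + 2\,x^\top B y + y^\top C y \\
&= x^\top A x - 2\,x^\top B C^{-1} B^\top x + x^\top B C^{-1} B^\top x \\
&= x^\top\!\left(A - B C^{-1} B^\top\right)x ,
\end{align*}
where I used that $x^\top B y=y^\top B^\top x$ is a scalar and that $C^{-1}CC^{-1}=C^{-1}$. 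Since $\left[\begin{smallmatrix} A & B \\ B^\top & C\end{smallmatrix}\right]\succ\mathbf{0}$ and the stacked vector is nonzero (its top block is $x\neq\mathbf{0}$), the left-hand side is strictly positive, hence $x^\top(A-BC^{-1}B^\top)x>0$ for every $x\neq\mathbf{0}$. Together with the symmetry from (ii), this yields $A-BC^{-1}B^\top\succ\mathbf{0}$.

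An equivalent route I would mention reuses the block factorization already derived inside the proof of Lemma~\ref{lem: inv}: with the invertible matrix $P=\left[\begin{smallmatrix}\mathbf{I} & \mathbf{0} \\ -C^{-1}B^\top & \mathbf{I}\end{smallmatrix}\right]$ one has $P^\top\left[\begin{smallmatrix} A & B \\ B^\top & C\end{smallmatrix}\right]P=\left[\begin{smallmatrix} A-BC^{-1}B^\top & \mathbf{0} \\ \mathbf{0} & C\end{smallmatrix}\right]$, and congruence by an invertible matrix preserves positive definiteness, so the block-diagonal matrix — and therefore each of its diagonal blocks — is positive definite. There is no genuine obstacle in this lemma; the only points that require a line of justification are the invertibility of $C$ and the symmetry of the Schur complement, both dispatched in the preliminary observations above, after which everything is a short bookkeeping computation.
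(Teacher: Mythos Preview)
Your proof is correct. Your primary route---substituting $y=-C^{-1}B^\top x$ into the quadratic form---is a genuinely different and more elementary argument than the paper's. The paper proceeds exactly along the lines of your ``equivalent route'': it writes down a block congruence identity and reads off positive definiteness of the Schur complement from the block-diagonal right-hand side. (In fact the paper's displayed identity uses $A^{-1}$ rather than $C^{-1}$, so as written it yields $C-B^\top A^{-1}B\succ\mathbf{0}$; your version with $P=\left[\begin{smallmatrix}\mathbf{I} & \mathbf{0}\\ -C^{-1}B^\top & \mathbf{I}\end{smallmatrix}\right]$ is the correct congruence for the stated conclusion.) Your quadratic-form argument buys you a cleaner proof that avoids any matrix factorization machinery and makes the preliminary step ``$C$ is invertible'' do real work; the congruence route is slightly slicker once the identity is on the page and generalizes immediately to the semidefinite case.

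One small inaccuracy: the factorization in the proof of Lemma~\ref{lem: inv} is a one-sided column elimination, not a two-sided congruence, so your alternative route does not literally ``reuse'' that identity---but the congruence you wrote is correct on its own and needs no reference to Lemma~\ref{lem: inv}.
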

\begin{proof}
It is the direct consequence of the following identity.
\begin{align*}
    \begin{aligned}
&\left(\begin{array}{cc}
\mathbf{I} & \mathbf{0} \\
-B^\top A^{-1} & \mathbf{I}
\end{array}\right)\left(\begin{array}{cc}
A & B \\
B^\top & C
\end{array}\right)\left(\begin{array}{cc}
\mathbf{I} & \mathbf{0} \\
-B^\top A^{-1} & \mathbf{I}
\end{array}\right)^\top= 
\left(\begin{array}{cc}
A & \mathbf{0} \\
\mathbf{0} & C-B^\top A^{-1} B
\end{array}\right)
\end{aligned}
\end{align*}
\end{proof}

\begin{proof}[Proof of Theorem~\ref{thm: upper bound}]

Let $r=\rank(\Sigma_{T})$. There exists an eigenvalue decomposition $\Sigma_{T}=U\left[\begin{array}{ll}
\Sigma&  \\
 & \mathbf{0}
\end{array}\right] U^\top$ with $\Sigma=\text{diag}(s_1,\cdots s_r)$. From Equation~\ref{eq: decomposition}, we know that $V_B$, the column space of $\Sigma_{B}$ is a subspace of $V_T$, the column space of $\Sigma_{T}$. Therefore, there exists $W\in \BR^{r\times r}$, such that $\Sigma_{B}=U\left[\begin{array}{ll}
W& \\
 & \mathbf{0}
\end{array}\right]U^\top$. This implies that 
\begin{align*}
    \Tr\left[\Sigma_{T}^\dagger \Sigma_{B}\right]
    &=\Tr\left[(U\left[\begin{array}{ll}
\Sigma&  \\
 & \mathbf{0}
\end{array}\right] U^\top)^\dagger U\left[\begin{array}{ll}
W&  \\
 & \mathbf{0}
\end{array}\right]U^\top \right]\\
&=\Tr\left[U\left[\begin{array}{ll}
\Sigma& \\
 & \mathbf{0}
\end{array}\right]^\dagger U^\top U\left[\begin{array}{ll}
W&  \\
 & \mathbf{0}
\end{array}\right]U^\top \right]\\
&=\Tr[\Sigma^{-1} W]
\end{align*}

Let $r_1=\rank(W)$. Denote $W=U_1 \left[\begin{array}{ll}
\Sigma_1& \\
 & \mathbf{0}
\end{array}\right] U_1^\top$ as the eigenvalue decomposition of $W$. Denote
$V=\left[\begin{array}{ll}
V_1& V_2 \\
V_2^\top & V_3
\end{array}\right]=U_1^\top \Sigma U_1$, where $V_1\in\BR^{r_1\times r_1}$.
Since $V\succ \mathbf{0}$, we can evoke Lemma~\ref{lem: inv} have
\begin{align*}
    \Tr\left[\Sigma^{-1}W\right]
    &=\Tr\left[U_1 V^{-1} U_1^\top U_1 \left[\begin{array}{ll}
\Sigma_1& \\
 & \mathbf{0}
\end{array}\right] U_1^\top\right]\\
&=\Tr\left[\left(V_1-V_2^\top V_3^{-1}V_2\right)^{-1}\Sigma_1\right]
\end{align*}

From Equation~\ref{eq: decomposition}, we know that $W\preceq \Sigma$. Use Lemma~\ref{lem: positive}, we get $\mathbf{0}\prec\Sigma_1\preceq V_1-V_2^\top V_3^{-1}V_2$. This implies that 
\begin{align*}
    \Tr\left[\left(V_1-V_2^\top V_3^{-1}V_2\right)^{-1}\Sigma_1\right]
    &=r_1-\Tr\left[\left(V_1-V_2^\top V_3^{-1}V_2\right)^{-1}\left(V_1-V_2^\top V_3^{-1}V_2-\Sigma_1\right)\right]\le r_1,
\end{align*}
where in the last inequality, we use the fact that the trace of the product of two symmetric positive semidefinite matrices is nonnegative. Therefore, we obtain the inequality that 
$$\Tr[\Sigma_{T}^\dagger \Sigma_{B}]\le \rank(\Sigma_{B}).$$

For fully collapsed configuration, we have $\Sigma_B=\Sigma_T$, and the equality is attained.


\end{proof}

\section{Additional Experimental Results in \cref{sec: col_exp}}\label{sec: addional exp}

We show in \cref{fig: traintest-prev} the test collapse for Fuzziness, Squared Distance and Cosine Similarity.
\begin{figure*}[t]
     \centering
     \begin{subfigure}
         \centering
         \includegraphics[width=0.3\textwidth]{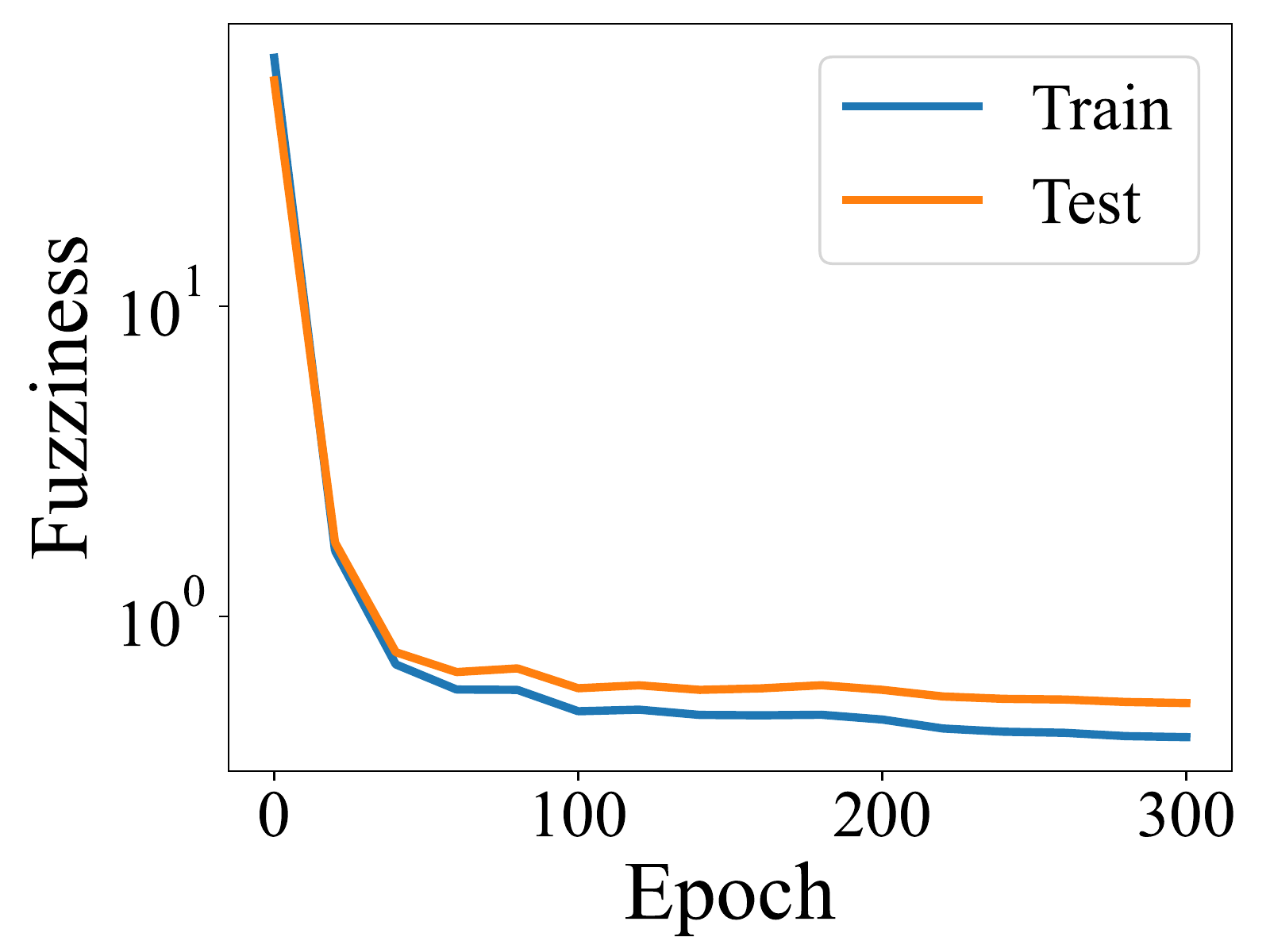}
     \end{subfigure}
     \hfill
     \begin{subfigure}
         \centering
         \includegraphics[width=0.3\textwidth]{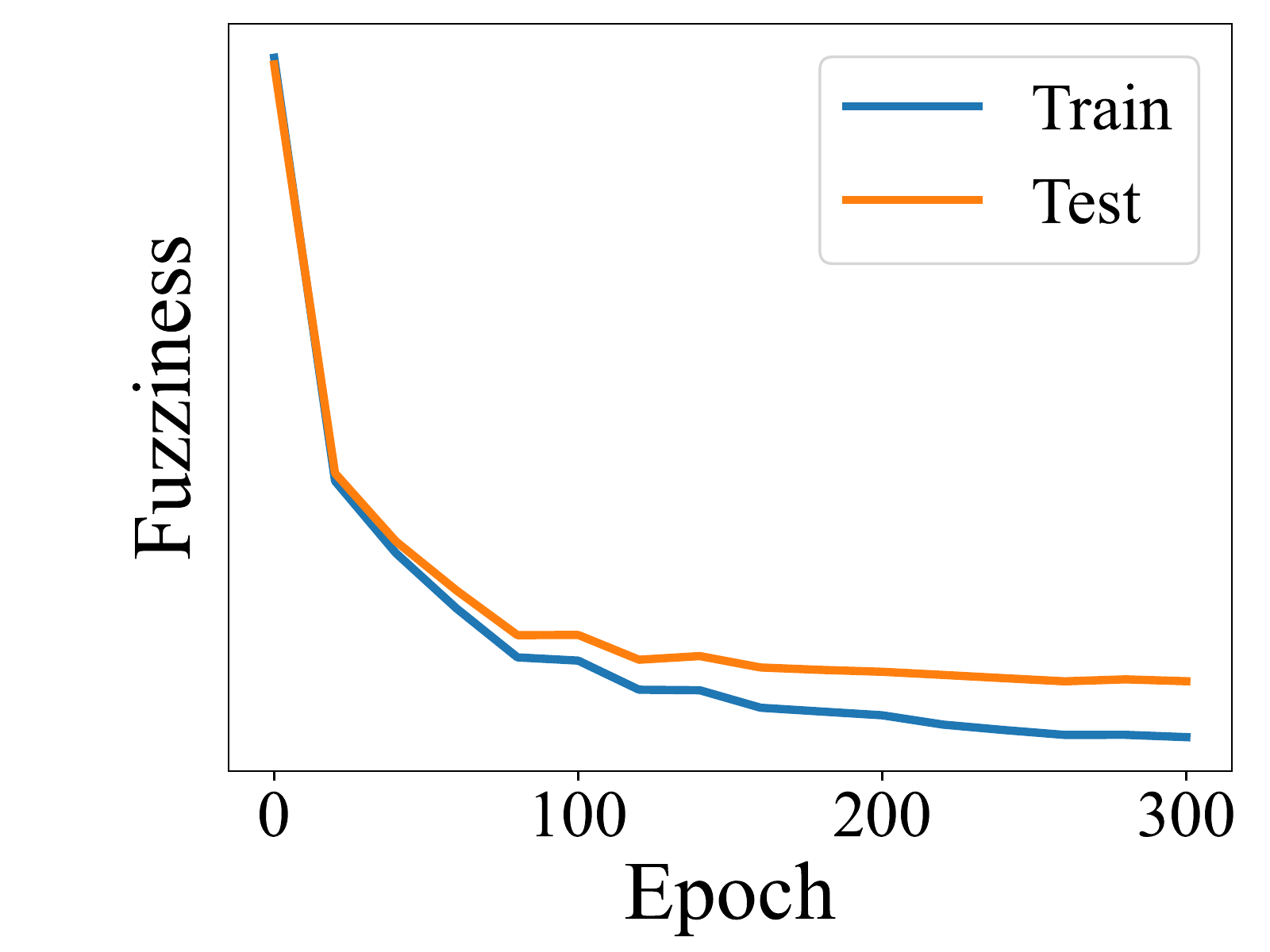}
     \end{subfigure}
     \hfill
     \begin{subfigure}
         \centering
         \includegraphics[width=0.3\textwidth]{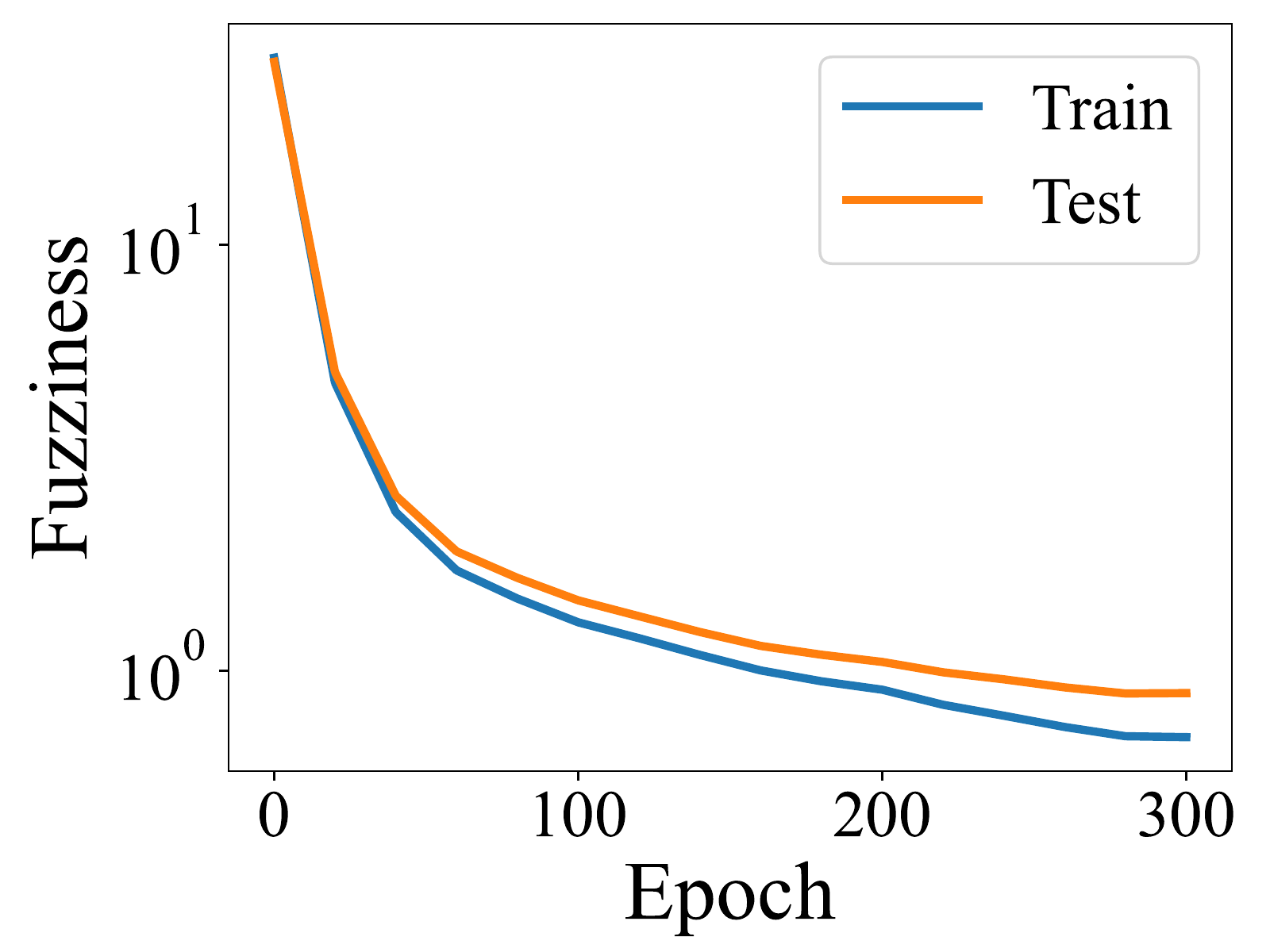}
     \end{subfigure} \\
     \centering
     \begin{subfigure}
         \centering
         \includegraphics[width=0.3\textwidth]{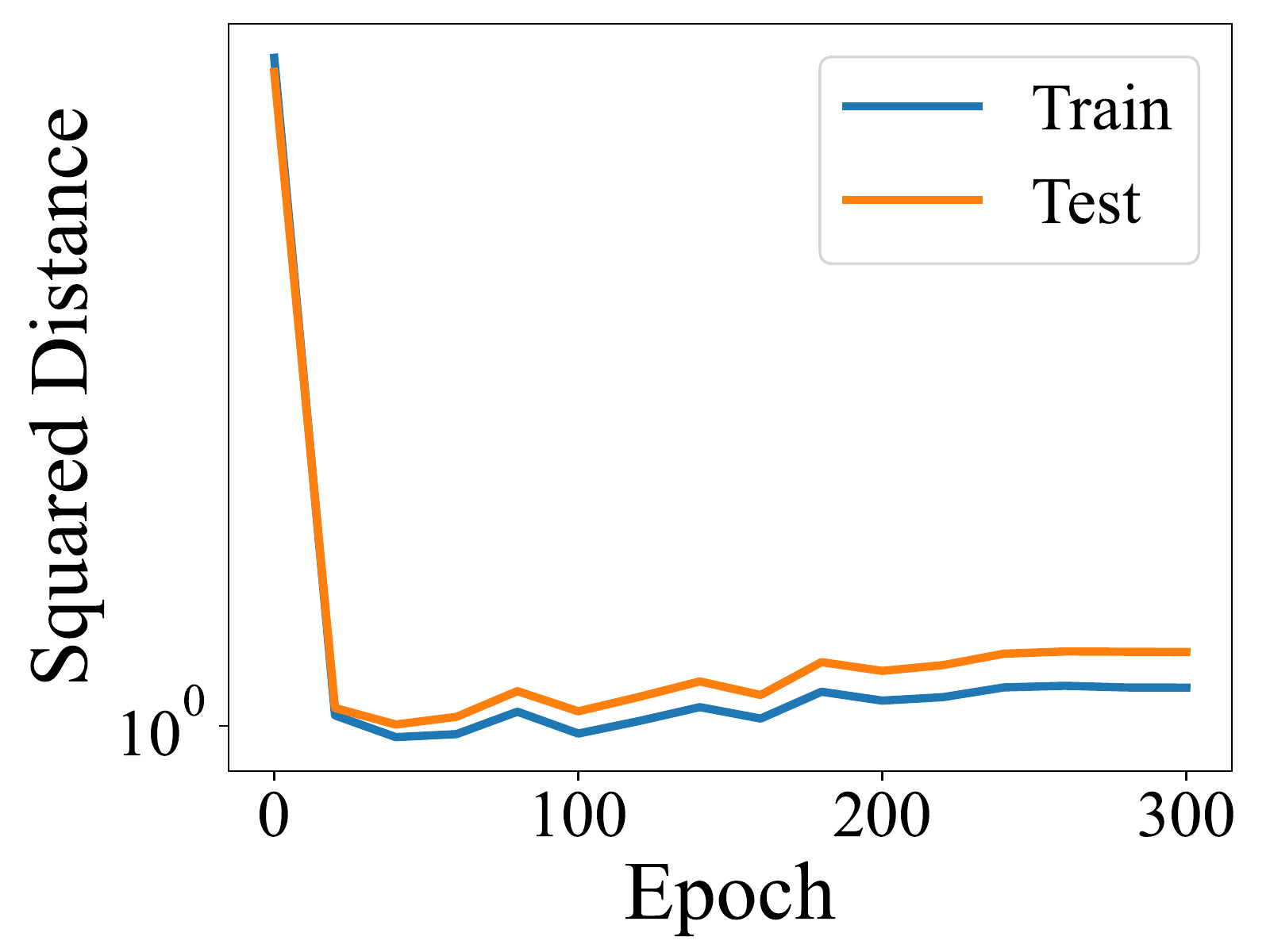}
     \end{subfigure}
     \hfill
     \begin{subfigure}
         \centering
         \includegraphics[width=0.3\textwidth]{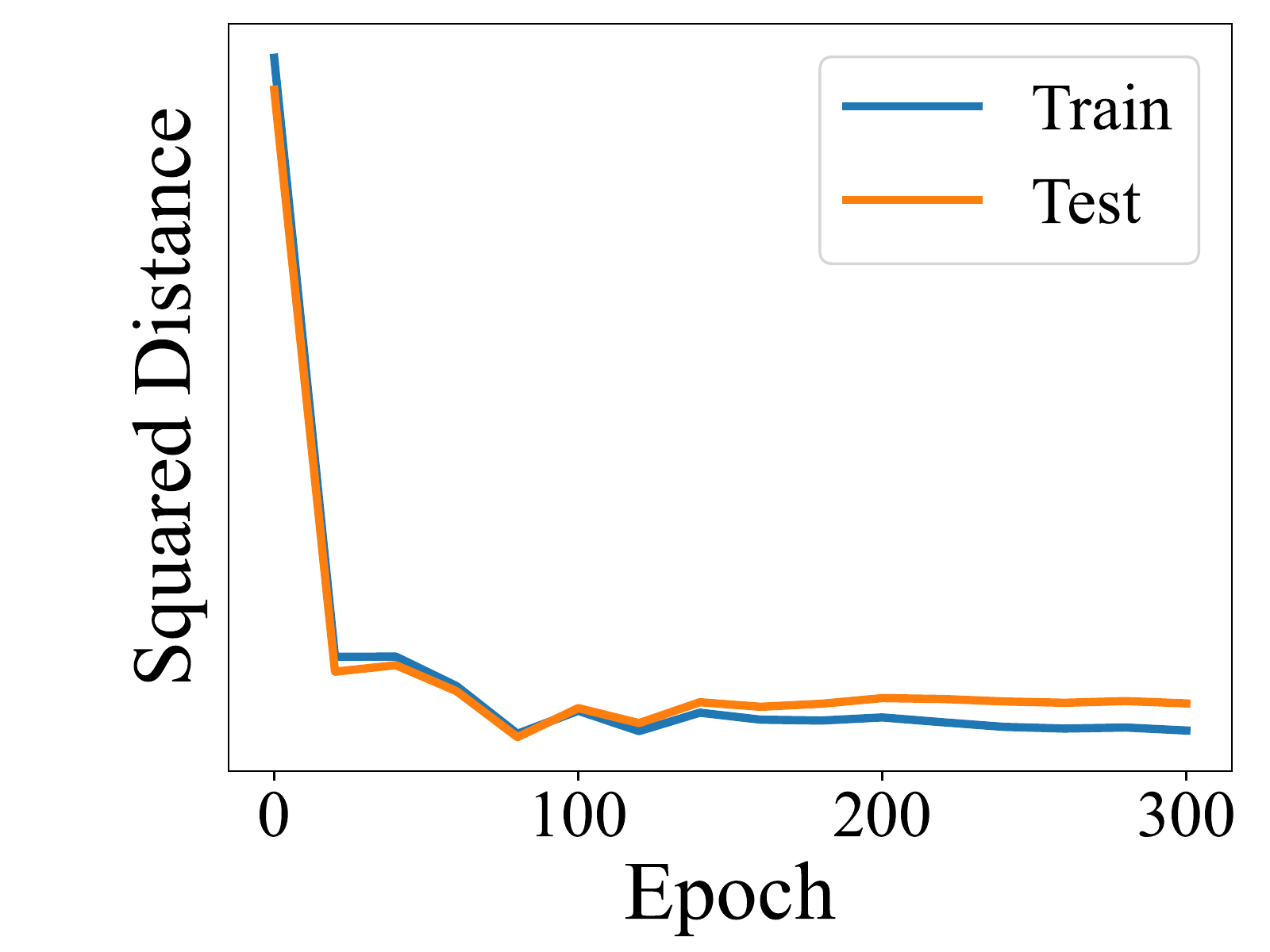}
     \end{subfigure}
     \hfill
     \begin{subfigure}
         \centering
         \includegraphics[width=0.3\textwidth]{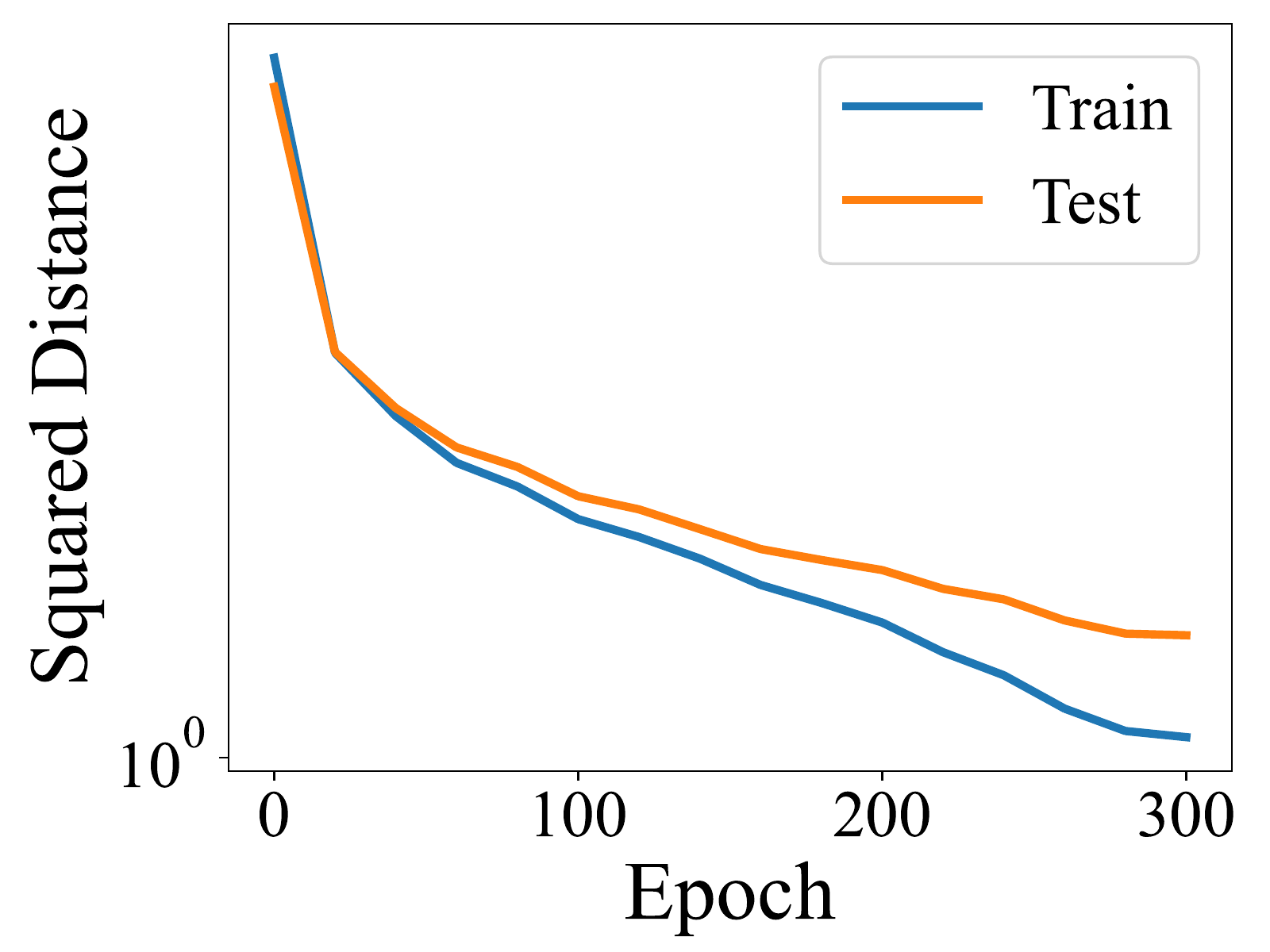}
     \end{subfigure} \\
     \centering
     \begin{subfigure}
         \centering
         \includegraphics[width=0.3\textwidth]{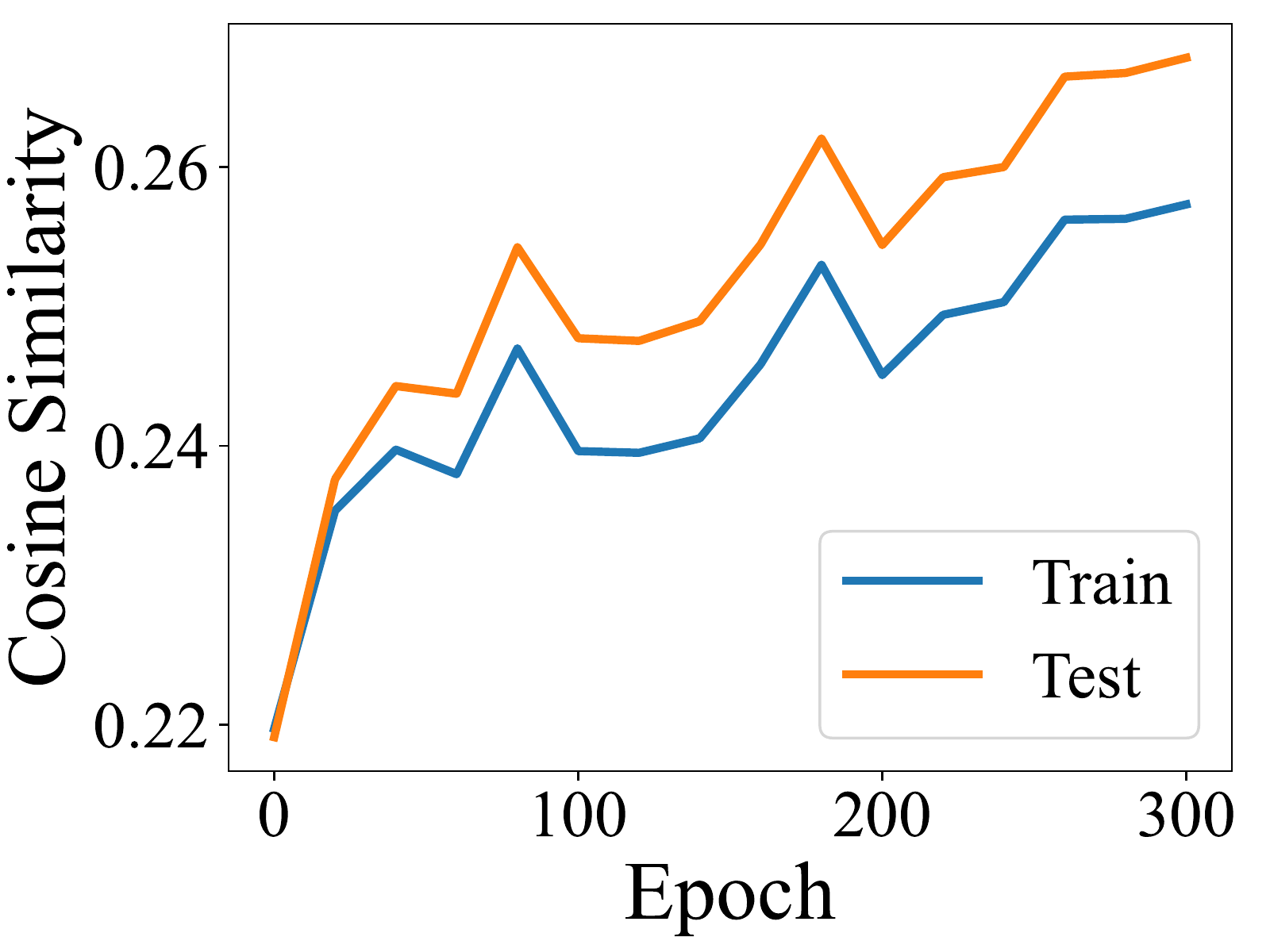}
     \end{subfigure}
     \hfill
     \begin{subfigure}
         \centering
         \includegraphics[width=0.3\textwidth]{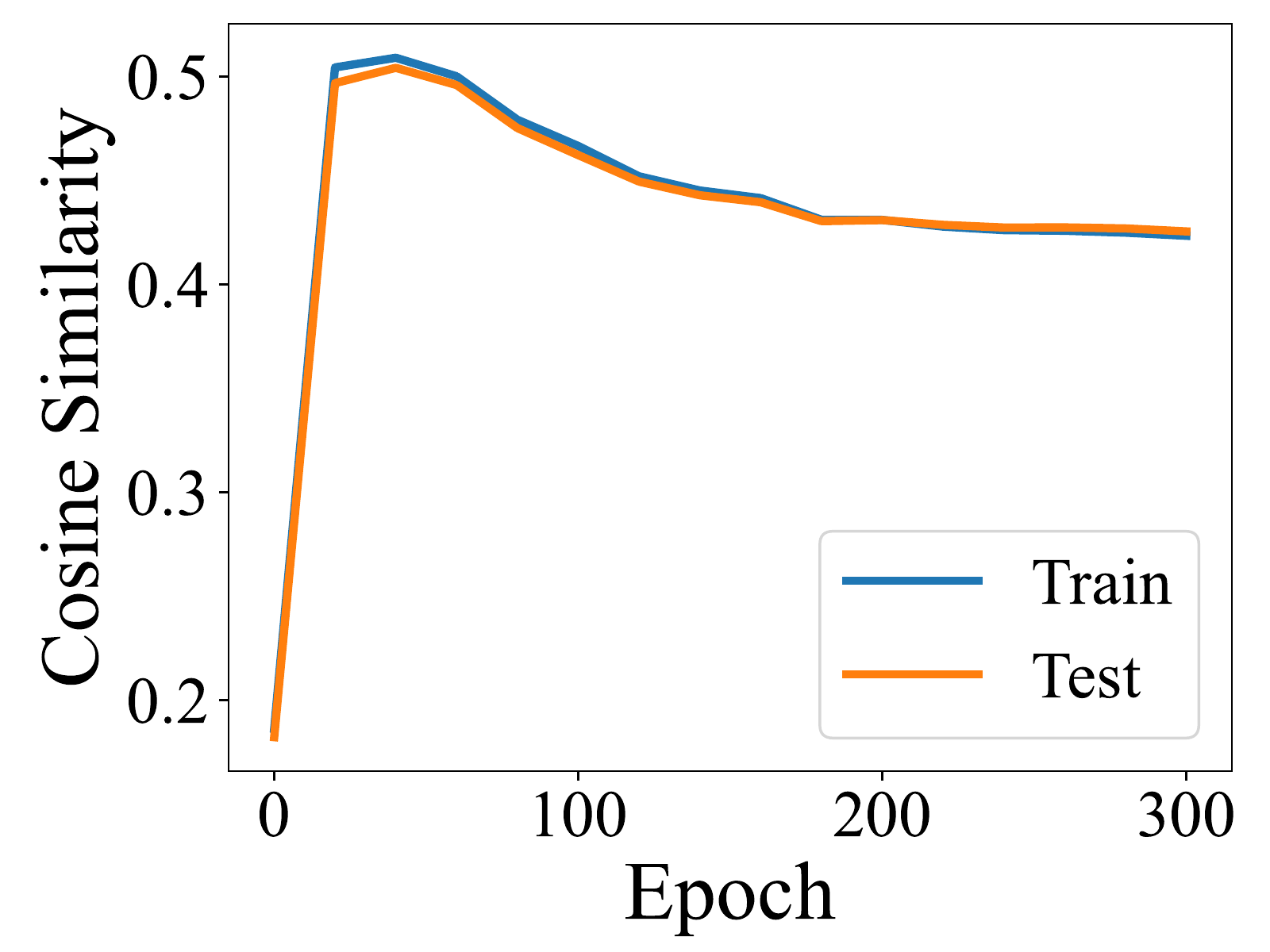}
     \end{subfigure}
     \hfill
     \begin{subfigure}
         \centering
         \includegraphics[width=0.3\textwidth]{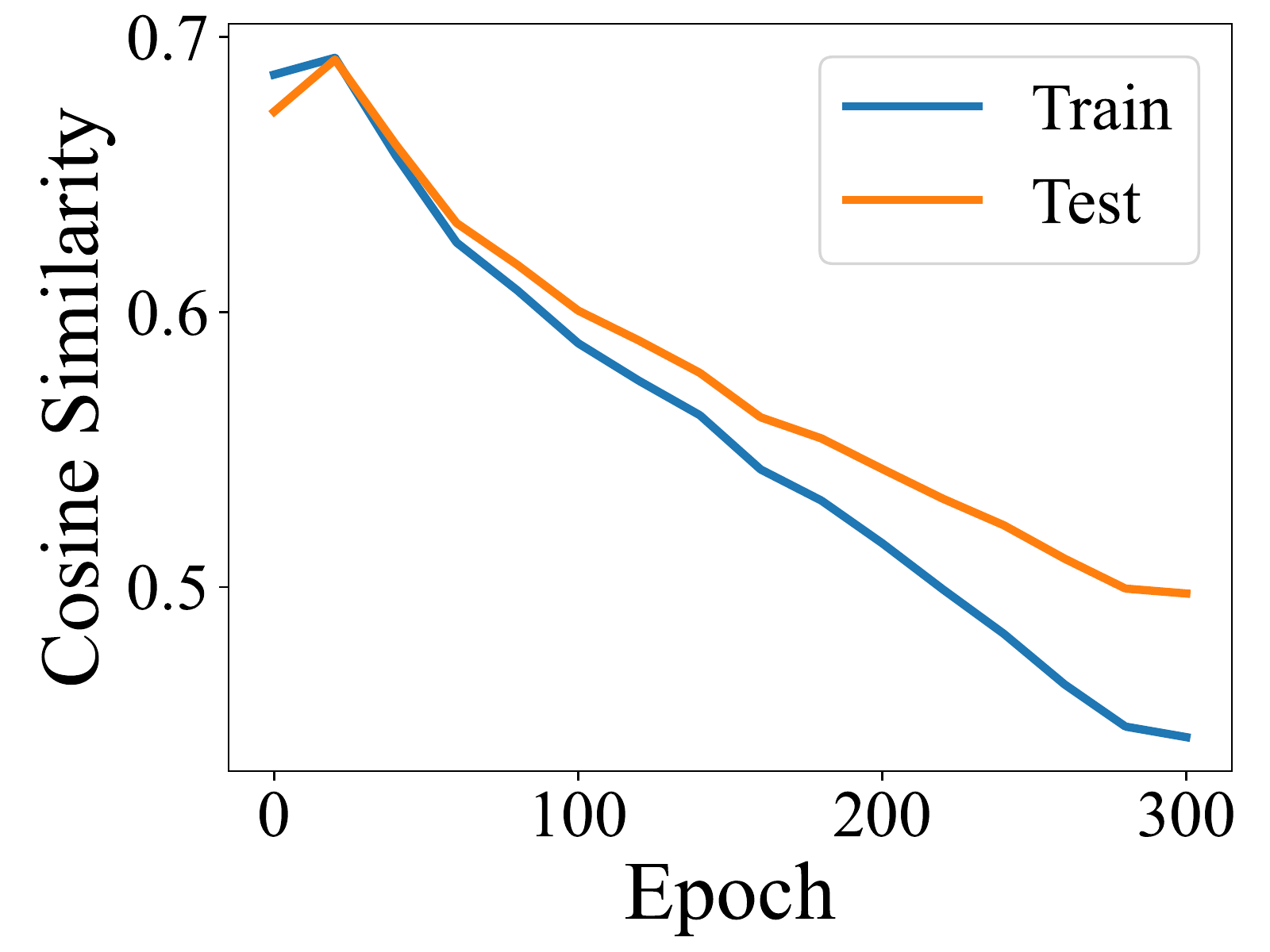}
     \end{subfigure} \\
        \caption{\textbf{Additional Experiments on the Comparisons of Train Collapse and Test Collapse for Previous Variability Collapse Metrics.} Train collapse is evaluated on a 50000 subset of ImageNet-1K training dataset. \textbf{Top Row:} Fuzziness. \textbf{Middle Row:} Squared Distance. \textbf{Bottom Row:} Cosine Similarity. \textbf{Left:} ResNet18 on CIFAR-10. \textbf{Middle:} ResNet50 on ImageNet-1K. \textbf{Orange:} DeiT-S on ImageNet-1K.} 
        \label{fig: traintest-prev}
\end{figure*}

\section{Raw Experiment Data in \cref{sec: trans_exp}}\label{sec: raw_data}
See \cref{tab: trans_raw_data} and \cref{tab: trans_graph_data} for raw data in downstread classification experiments and pretraining experiments, respectively. 
For Aircraft, Pets, Caltech101 and Flowers, we use mean per-class accuracy. \cite{chen2020simple} For other datasets, we use top-1 accuracy.

\begin{table}[t]
\caption{Raw data of downstream classification experiments in \cref{sec: trans_exp}. The meanings of $\tau$ and $\lambda$ are introduced in the main text. MLO = mean log odds.}
\vskip 0.15in
\begin{center}
\begin{small}
\begin{sc}
\begin{tabular}{lp{1.0cm}p{1.0cm}p{1.0cm}p{1.0cm}p{1.0cm}p{1.0cm}p{1.0cm}p{1.0cm}p{1.0cm}p{1.0cm}p{1.0cm}p{1.0cm}p{1.0cm}p{1.0cm}p{1.0cm}p{1.0cm}p{1.0cm}}
\toprule
        Param & Pets & Flowers & Aircraft & Cars & DTD & Cifar10 & Food101 & Cifar100 & Caltech & SUN397 & MLO \\ \midrule
        $\tau=0.1$ & 90.8 & 95.03 & 58.85 & 64.73 & 74.47 & 93.1 & 75.45 & 77.14 & 84.9 & 62.44 & 1.445 \\ 
        $\tau=0.2$ & 91.52 & 95.25 & 57.5 & 64.1 & 74.41 & 92.89 & 75.41 & 77.65 & 85.58 & 63.28 & 1.459 \\ 
        $\tau=0.4$ & 91.64 & 94.66 & 57.67 & 64.42 & 74.63 & 92.58 & 74.97 & 77.1 & 85.59 & 63.3 & 1.441 \\ 
        $\tau=0.6$ & 91.74 & 94.14 & 57.54 & 61.56 & 76.22 & 92.42 & 74.86 & 76.6 & 85.2 & 63.51 & 1.421 \\ 
        $\tau=0.8$ & 92.35 & 93.52 & 55.64 & 61.6 & 75.37 & 92.37 & 74.2 & 76.07 & 84.49 & 62.74 & 1.39 \\ 
        $\tau=1.0$ & 91.7 & 93.68 & 55.37 & 60.14 & 73.99 & 91.9 & 74.42 & 75.96 & 85.82 & 62.98 & 1.375 \\ 
        $\lambda=2.5$ & 90.86 & 92.01 & 51.07 & 55.14 & 74.36 & 91.79 & 72.9 & 75.44 & 85.19 & 62.01 & 1.282 \\ 
        $\lambda=3.75$ & 90.87 & 90.79 & 47.81 & 50.44 & 76.01 & 91.89 & 72.19 & 75.31 & 82.97 & 60.98 & 1.22 \\ 
        $\lambda=5$ & 90.51 & 90.03 & 45.12 & 50.52 & 74.41 & 91.63 & 71.84 & 74.64 & 82.63 & 60.38 & 1.174 \\ 
        $\lambda=7.5$ & 89.33 & 86.54 & 44.52 & 46.87 & 73.14 & 91.64 & 71.19 & 74.13 & 81.23 & 58.55 & 1.08 \\ 
        $\lambda=10$ & 89.92 & 83.6 & 43.39 & 43.55 & 72.5 & 91.43 & 70.29 & 73.76 & 78.25 & 56.9 & 1.008 \\ 
\bottomrule
\end{tabular}
\end{sc}
\end{small}
\end{center}
\vskip -0.1in
    \label{tab: trans_raw_data}
\end{table}

\begin{table}[t]
\caption{Raw data of pretraining runs in \cref{sec: trans_exp}.}
\vskip 0.15in
\begin{center}
\begin{small}
\begin{sc}
\begin{tabular}{lcccccc}
\toprule
        Param & Fuzziness & Sqr Dist & Cos Sim & VCI & Accuracy & MLOG \\ \midrule
        $\tau=0.1$ & 15.93 & 2.829 & 0.4634 & 0.796 & 75.4 & 0.3250 \\ 
        $\tau=0.2$ & 15.64 & 2.534 & 0.4856 & 0.7849 & 76.64 & 0.2704 \\ 
        $\tau=0.4$ & 13.45 & 2.141 & 0.512 & 0.7552 & 77.53 & 0.2029 \\ 
        $\tau=0.6$ & 12.62 & 1.923 & 0.5371 & 0.7406 & 77.72 & 0.1711 \\ 
        $\tau=0.8$ & 11.83 & 1.742 & 0.5589 & 0.728 & 77.83 & 0.1343 \\ 
        $\tau=1.0$ & 11.46 & 1.625 & 0.5767 & 0.7201 & 77.88 & 0.1161 \\ 
        $\lambda=2.5$ & 13.15 & 2.78 & 0.4097 & 0.7216 & 77.52 & 0.0440 \\ 
        $\lambda=3.75$ & 14.31 & 3.803 & 0.3556 & 0.719 & 77.31 & -0.0063 \\ 
        $\lambda=5$ & 14.51 & 4.783 & 0.3551 & 0.7099 & 77.29 & -0.0507 \\ 
        $\lambda=7.5$ & 15.22 & 7.287 & 0.2794 & 0.6936 & 77.46 & -0.1539 \\ 
        $\lambda=10$ & 13.92 & 9.964 & 0.172 & 0.6652 & 77.65 & -0.2375 \\ 
\bottomrule
\end{tabular}
\end{sc}
\end{small}
\end{center}
\vskip -0.1in
    \label{tab: trans_graph_data}
\end{table}

\end{document}